\title{%
  \centerline{%
    \scalebox{0.83}{
      \bfseries Safe-EF: Error Feedback for Nonsmooth Constrained Optimization
    }%
  }%
}
\date{}
\author[1]{{\bf Rustem~Islamov}}
\author[2,3]{{\bf Yarden~As}}
\author[2,3]{{\bf Ilyas Fatkhullin}}
\affil[1]{University of Basel}
\affil[2]{ETH Z{\"u}rich}
\affil[3]{ETH AI Center}
\begin{document}

\maketitle

\begin{abstract}
    Federated learning faces severe communication bottlenecks due to the high dimensionality of model updates. Communication compression with contractive compressors (e.g., Top-$K$) is often preferable in practice but can degrade performance without proper handling. Error feedback (EF) mitigates such issues but has been largely restricted for smooth, unconstrained problems, limiting its real-world applicability where non-smooth objectives and safety constraints are critical. We advance our understanding of EF in the canonical non-smooth convex setting by establishing new lower complexity bounds for first-order algorithms with contractive compression. Next, we propose \algname{Safe-EF}, a novel algorithm that matches our lower bound (up to a constant) while enforcing safety constraints essential for practical applications. Extending our approach to the stochastic setting, we bridge the gap between theory and practical implementation. Extensive experiments in a reinforcement learning setup, simulating distributed humanoid robot training, validate the effectiveness of \algname{Safe-EF} in ensuring safety and reducing communication complexity.
\end{abstract}

\let\thefootnote\relax
\footnotetext{This work has been accepted to ICML 2025.}

\section{Introduction}\label{sec:intro}

Federated learning is a crucial framework for training machine learning models across distributed environments~\citep{FEDLEARN,FL-big}, where data is naturally stored in a distributed fashion. Formally, such problems can be expressed as
\begin{align}\label{eq:problem}
	&\min_{x \in \cX } f(x) \eqdef \frac{1}{n} \sum_{i=1}^n f_i(x) , 
\end{align}
where $n$ represents the number of workers or machines participating in the training, and $x\in\R^d$ denotes the model parameters to be optimized. The function $f_i\colon \R^d \to \R$ is the local (possibly non-smooth) loss associated with data on worker $i\in [n] \eqdef \{1,\dots,n\}$, and $\cX$ is a subset of $\R^d$. 

This paradigm is particularly valuable in privacy-sensitive and resource-constrained settings, where data remains decentralized, and collaboration is achieved without requiring direct data sharing. For instance, consider a fleet of robots that operate in homes \citep{kalashnikov2018scalable,brohan2022rt}. In such settings, traditional centralized learning approaches are impractical, as transmitting raw sensory data from each robot to a central server would pose severe privacy risks and require enormous bandwidth. Furthermore, these robots must adapt to diverse household environments, necessitating personalized learning while still benefiting from collective experience across the fleet. Despite its advantages, distributed training faces significant communication bottlenecks due to the high dimensionality of model updates. This challenge necessitates the development of communication-efficient algorithms.

\paragraph{Communication compression with Top-$K$.} 
One prominent strategy to reduce communication costs is the communication compression technique, which applies possibly randomized compression to updates prior to transmission. One of the most practical and versatile classes of compression operators are those that satisfy the contractive property:

\begin{equation}
    \E{\|\cC(x) - x\|^2} \le (1-\delta)\|x\|^2 \qquad \text{for all } x \in \R^d , \notag 
\end{equation}
where $\delta \in (0, 1]$ represents the accuracy of the compression. Prominent examples are Top-$K$ sparsifier that preserves $K$ largest components of vector $x$ in magnitude, and random sampling methods such as Rand-$K$ that preserves a subset of $K$ components of $x$ chosen uniformly at random. Although both Top-$K$ and Rand-$K$ are contractive with $\delta \geq K/d$, methods utilizing Top-$K$ operator are often empirically superior due to their greedy nature \citep{you2016asynchronous}. 

\paragraph{Non-smooth challenges.} 
The majority of works focusing on communication compression assume that the objective function is smooth, i.e., differentiable with Lipschitz continuous gradient, simplifying theoretical analysis \citep{stich2018sparsified,richtarik2021ef21}. However, this assumption limits the applicability of developed methods to many real-world problems, where non-smooth functions frequently arise. For instance, consider problems involving ReLU activations \citep{glorot2011deep} or clipped objectives such as those in proximal policy optimization \citep[PPO, ][]{schulman2017proximal}. This motivates the first key question of our study:

\begin{tcolorbox}[colback=white,colframe=green!50!black]	
\textbf{Question 1:} What are the limits of compressed gradient methods in the non-smooth regime?
\end{tcolorbox}

To illustrate the challenges of designing meaningful methods with contractive compressors like Top-$K$, we present a non-convergence example for vanilla compressed gradient descent (\algname{CGD}) in the non-smooth setting. Consider 
\begin{equation}\label{eq:CGD}
\text{\algname{CGD}} \qquad  x^{t+1} = x^t - \frac{\gamma}{n} \sum_{i=1}^n \cC(f_i^{\prime}(x^t) ) ,
\end{equation}
where $f_i^{\prime}(x^t) \in \partial f_i(x^t)$ is a subgradient of $f_i$ and $\gamma \geq 0$ is a stepsize. 

\begin{example}\label{example:div_CGD}
	For any $n\geq 1$, there exists a specific instance of problem \eqref{eq:problem} where $\cX = \R^2$, and $f(x) = \|x\|_1$ is non-smooth, convex, and $1$-Lipschitz continuous. For this instance, with some initial vector $x^0 \in \R^2$, the iterates of \algname{CGD} \eqref{eq:CGD} applied with the Top-$1$ compressor and any stepsize $\gamma \geq 0$, satisfy 
	$$
	f(x^t) - \min_x f(x) = 1 + \frac{\gamma}{2} \qquad \text{for any } t \geq 0. 
	$$
\end{example}
This example implies that running vanilla \algname{CGD} with the Top-$1$ compressor even on a simple non-smooth problem may yield no improvement. It is remarkable that this failure occurs even in the identical data regime $f_i = f$ for all $i\in [n]$, the setting where \algname{CGD} is known to converge in smooth case \citep{nesterov2012efficiency,nutini2015coordinate,beznosikov2023biased}. The idea of the construction in \Cref{example:div_CGD} is that due to a rapid change of the gradients $f^{\prime}$ in consecutive iterations, \algname{CGD} consistently ignores the direction of the second component of $x^t$, which results in a pathological cyclic behavior. See~\Cref{fig:teaser} for an illustration. 


\begin{figure}
    \centering
    \includegraphics{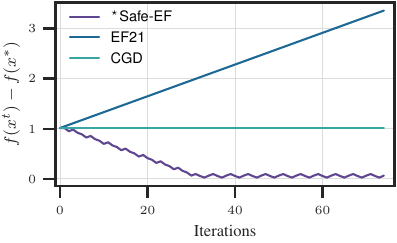}
    \caption{Non-convergence of \algname{CGD}, divergence of \algname{EF21} and convergence of \algname{Safe-EF} for the problem $f(x) = \|x\|_1$, $i=1, d=2$ used in the proofs of \Cref{example:div_CGD,example:div_ef21} with Top-$1$ compressor. We run all algorithms for $T=10^3$ iterations with $x^0 = (\gamma/2, -1)^{\top}$, $\gamma = 1/\sqrt{T}$, and $v^0 = (1, 1)^{\top}$ (for \algname{EF21}). $^\star$\algname{Safe-EF} coincides with \algname{EF14} \citep{seide20141} in this example.}
    \label{fig:teaser}
\end{figure}

\paragraph{Error feedback can make things worse!}
A common remedy for non-convergence issues of compressed gradient methods is error feedback (EF), a mechanism that has inspired several variants \cite{seide20141,richtarik2021ef21,fatkhullin2024momentum,gao2024econtrol}. Among these, \algname{EF21} is a recent approach with state-of-the-art performance guarantees in smooth optimization due to \citet{richtarik2021ef21}:
\begin{align}
    \algname{EF21} \qquad &
    \begin{aligned}
        x^{t+1} &= x^t - \gamma \, v^t , \qquad v^t = \frac{1}{n} \sum_{i=1}^n v_i^t ,  \\
        v_i^{t+1} &= v_i^t + \mathcal{C}(f_i^{\prime}(x^{t+1}) - v_i^t) .
    \end{aligned}
    \label{eq:EF21}
\end{align}
where $f_i^{\prime}(x^{t+1}) \in \partial f_i(x^{t+1})$ is a subgradient of $f_i$ and $v_i^t$ is a local gradient estimator at each worker. While \citet{richtarik2021ef21} only analyze this algorithm in the smooth non-convex case, we extend its analysis to smooth convex setup in Appendix~\ref{sec:Proj_EF21}. However, we show that, surprisingly, \algname{EF21} fails to converge on the same problem as \algname{CGD}.

\begin{example}\label{example:div_ef21}
	Consider the problem instance from Example~\ref{example:div_CGD}. For this instance, with some initial vectors $x^0, v^0 \in \R^2$, the iterates of \algname{EF21} \eqref{eq:EF21} applied with the Top-$1$ compressor and any stepsize $\gamma \geq 0$ satisfy
	$$
	f(x^t) - \min_x f(x) = 1 + \frac{\gamma}{2} + t \, \gamma \qquad \text{for any } t \geq 0. 
	$$
\end{example}

This example shows that \algname{EF21} does not converge for non-smooth problems despite achieving an excellent performance in smooth case, see \Cref{thm:Proj_EF21}, and reaching the optimal iteration complexity in smooth non-convex optimization \cite{huang2022lower}. Moreover, if we pick the classical stepsize $\gamma = 1/\sqrt{T}$, \algname{EF21} diverges from the optimum with a rate $\Omega(t / \sqrt{T}) \approx \sqrt{T}$ for $t \approx T$, which is even worse than \algname{CGD}. We show the divergence in \Cref{fig:teaser}, where we also observe that another EF variant, \algname{EF14}\footnote{Our \algname{Safe-EF} method presented in \Cref{alg:ef14} reduces to \algname{EF14} in unconstrained setting with $\cC_{0} = \mathbf{Id}$.}, \citep{seide20141} converges without problems.
We find such stark difference surprising in light of the equivalence of \algname{EF21} and \algname{EF14}, established under additivity assumption of $\cC$ \citep{richtarik2021ef21}. The catch is that Top-$1$ is not additive, and thus the equivalence does not hold here. 
\begin{center}
\textit{Motivated by this fairly toy example, we find it important to understand \\ error feedback in non-smooth setup, and aim to study \algname{EF14}.}
\end{center}

\paragraph{Safety considerations.} 
In addition to these challenges, safety constraints play a critical role in real-world applications \citep{altman1999constrained}. Ensuring solutions satisfy feasibility requirements is essential, particularly in scenarios like federated reinforcement learning (FedRL) \citep{nadiger2019federated,qi2021federated,jin2022federated}. Despite their importance, constrained optimization with communication compression remains under-explored. Although some work develop methods assuming $\cX$ is simple, i.e., using projection~\citep{fatkhullin2021ef21} or linear minimization \citep{nazykov2024stochastic} oracles, they crucially rely on smoothness. Moreover, the applications in Safe FedRL motivate us to pay attention to problems with more complex constraints of the form 
\begin{equation}\label{eq:constraints}\textstyle
\cX \eqdef \left\{x\in\R^d \mid g(x) \eqdef \frac{1}{n} \sum_{i=1}^n g_i(x) \leq 0 \right\} ,
\end{equation}
where $g_i\colon \R^d \to \R$ defines a constraint for worker $i$. 
\begin{tcolorbox}[colback=white,colframe=green!50!black]	
\textbf{Question 2:} Can we design a provably convergent compressed gradient method with a Top-$K$ compressor for non-smooth constrained problems?
\end{tcolorbox}

Perhaps, the most common approach to solve \eqref{eq:problem}  with \eqref{eq:constraints} in non-distributed optimization ($n=1$) is to reformulate it as a saddle point problem, which is then solved by primal-dual methods~\citep{nemirovski2004prox,hamedani2021primal}. This approach is popular in practice \cite{ding2020natural,moskovitz2023reload,ding2024last,muller2024truly} and has rich theory, e.g., \citep{boob2023stochastic,boob2024optimal,zhang2022solving}. However, such methods have several limitations. First, they are known to be sensitive to the tuning of the initial dual variable (e.g., the experiments and discussion in \Cref{sec:primal_dual}) and often require an estimate of the upper bound of the optimal dual variable. Second, their theoretical justification often requires projecting both primal and dual variables onto an unknown bounded set, which is not aligned with practical implementations. In the context of EF-type methods, this projection requirement implies several algorithmic and technical challenges because only certain smooth variants of EF seem to be compatible with projection, e.g., \citep{fatkhullin2021ef21}. An alternative is to adopt a primal only approach, e.g., switching subgradient~\cite{polyak_general_1967, lan2020algorithms, ma2020quadratically, huang2023oracle, jia2022first}, methods based on the velocity field \citep{yu2017online,muehlebach2022constraints,schechtman2022first,kolev2024online}, or level-set methods \citep{lin2018level, boob2024level}. Primal methods have also been used in (non-distributed) RL applications, e.g., \cite{xu2021crpo,chen2021primal,jordan2024independent,li2024faster}. The key advantage of such primal schemes is their simplicity and convergence under mild assumptions without the need for the estimation of dual variables.

\section{Contributions}
\begin{itemize}
    \item First, we establish a $\Omega\left(\frac{M R}{\sqrt{\delta T}} \right)$ convergence lower bound for non-smooth convex distributed optimization with contractive compressors for function suboptimality gap and a constraint violation. Here $T$ is the iteration count, $R$ is the initial distance to the optimum, $M$ bounds the norm of subgradients of $f_i$, and $\delta \in (0, 1]$ is the compression accuracy.
    
    \item Next, we propose \algname{Safe-EF} (\Cref{alg:ef14}), an extension of \algname{EF14} \cite{seide20141} incorporating safety constraints \eqref{eq:constraints} and bidirectional compression including the workers to server compressor $\cC_{0}$. \algname{Safe-EF} provably works in non-smooth distributed settings and efficiently minimizes the objective function, while controlling the constraint violation. We prove the convergence rate of \algname{Safe-EF} matching the above-mentioned lower bound up to a numerical constant under a constant accuracy of the server compression $\cC_0$. It seems our upper bound is new even when $g(x) \equiv 0$ and $\cC_0 = \mathbf{Id}$.
    \item We further study \algname{Safe-EF} in practically relevant stochastic scenarios, where exact subgradients and function evaluations are unavailable and need to be estimated. We establish high probability bounds with a mild logarithmic dependence on failure probability, which is significant even without compression, since our bounds feature the distance to the optimum $R$ instead of the diameter of the set, which is not bounded in our set-up.
    \item Finally, we conduct extensive experiments and ablation studies of \algname{Safe-EF}, putting the method to the test on a challenging task of distributed humanoid robot training and providing important practical insights into the performance of non-smooth EF methods.
\end{itemize}


\section{Assumptions and Communication Protocol}


We consider distributed constrained optimization problem \eqref{eq:problem} with a constraint \eqref{eq:constraints}, and denote the optimal solution to this problem by $x^*.$ Unless specified otherwise, we denote by $\|\cdot\|$ the Euclidean norm in $\R^d$.

\begin{assumption}\label{asmp:convexity}
    We assume that $f_i$ and $g_i$ are convex for all $i\in[n],$ namely, for all $x,y\in\R^d$ we have 
    \begin{align}
        f_i(y) \ge f_i(x) + \<f_i^\prime, y-x> \quad \forall f^\prime_i \in \partial f_i(x),\quad 
        g_i(y) \ge g_i(x) + \<g_i^\prime, y-x> \quad \forall g_i^\prime \in \partial g_i(x).
    \end{align}
\end{assumption}
Each worker $i$ has access to the oracles $O_{f_i,i}(x)$ and $O_{g_i,i}(x)$, which return the subgradients $f_i^\prime\in \partial f_i(x)$, $g_i^\prime \in \partial g_i(x)$, and the function values $f_i(x)$, $g_i(x)$ respectively for any $x\in\R^d$. We assume bounded subgradient, which is a common assumption in non-smooth optimization \citep{nesterov2018lectures}

\begin{assumption}\label{asmp:bounded_grad}
    We assume that $f_i$ and $g_i$ have $M$-bounded subgradients, i.e. for any $x\in\R^d$ and $i\in[n]$ we have
    \begin{equation}
       \max\left\{ \|f_i^\prime(x)\|, \, \|g_i^\prime(x)\|  \right\} \le M . \quad
    \end{equation}
\end{assumption}
We let the function classes $\cF_{R,M}$ and $\mathcal G_{R M}$ denote the set of all functions satisfying Assumptions~\ref{asmp:convexity}-\ref{asmp:bounded_grad} for any underlying dimension $d$ and a given initialization $x^0 \in \R^d$ such that $\|x^0 - x^*\|\le R.$ We denote by $\mathcal H_{R,M}$ the class of problems of form \eqref{eq:problem}, \eqref{eq:constraints}, where functions $\{f_i\}_{i=1}^n$ and $\{g_i\}_{i=1}^n$ are taken from $\cF_{R,M}$ and $\mathcal G_{R, M}$ respectively.  

\paragraph{Compression operators.} We focus on the class of algorithms using contractive compressors.

\begin{definition}\label{def:contractive_comp}
    We say that a (possibly randomized) mapping $\cC\colon \R^d \to \R^d$ is a contractive compression operator if for some constant $\delta \in (0,1]$ it holds
    \begin{equation}\label{eq:contractive_comp}
        \E{\|\cC(x) - x\|^2} \le (1-\delta)\|x\|^2.
    \end{equation}
\end{definition}
Beyond Top-$K$ and Rand-$K$ mentioned in \Cref{sec:intro}, examples satisfying \eqref{eq:contractive_comp} include sparsification \citep{alistarh2018convergence, stich2018sparsified,islamov2021distributed} and quantization \citep{wen2017terngrad, bernstein2018signsgd, horvath2022natural, compagnoni2025unbiased} techniques, and low-rank approximations \citep{vogels2019powersgd, qian2021basis, islamov2023distributed}. We refer to \citep{beznosikov2023biased, safaryan2021fednl} for further examples. We denote by $\mathbb{C}(\delta)$ the set of all $\delta$-contractive compressors.

\paragraph{Algorithm class.} 
We follow \citet{huang2022lower} to introduce the class of algorithms of interest. We consider a centralized and synchronous algorithm $A$, where: $i)$ workers are restricted to communicating directly with a central server and cannot exchange information with one another directly; $ii)$ all iterations are synchronized, meaning all workers begin each iteration simultaneously. In this setup, each worker $i$ maintains a local copy of the model, denoted as $x_i^t$, at iteration $t$. The output $\hat{x}^t$ of the algorithm $A$ after $t$ iterations can be expressed as any linear combination of all previous local models, namely,
\begin{equation}
    \hat{x}^t \in {\rm span}\left(\{x_i^s \colon 0 \le s \le t, 1 \le i \le n\}\right).
\end{equation}
We additionally require that the algorithm $A$ satisfies the ``zero-respecting'' property \citep{carmon2020lower, lu2021optimal}. This ensures that the number of non-zero entries in a worker's local model can only increase through local subgradient queries, or synchronization with the central server. This property is upheld by a broad range of existing distributed optimization algorithms \citep{tang2019doublesqueeze, xie2020cser, richtarik2021ef21, gao2024econtrol}. In addition to these properties, the algorithm $A$ must support communication compression. To achieve this, 
each worker $i\in[n]$ is equipped with a compressor $\cC_i$. The formal definition of this algorithm class with worker to server compression is provided below, see \Cref{sec:lower_bound_uni_proof} for details. 


\begin{definition}
    Given compressors $\{\cC_1, \dots, \cC_n\}$, we denote $\cA_{\{\cC_i\}_{i=1}^n}^U$ as the class of all centralized, synchronous, zero-respecting algorithms that support unidirectional compression, where compressor $\cC_i, i\in[n]$, is applied to messages from worker $i$ to the server. 
\end{definition}

\section{Main Results}
We start by presenting our first main contribution, which is the lower iteration/communication complexity bound for a class of first-order compressed gradient methods. 

\subsection{Lower Bound}
Given a problem $h := (\{f_i\}_{i=1}^n, \{g_i\}_{i=1}^n) \subseteq \mathcal H_{R,M}$, subgradient/function value oracles $\{\cO_{f_i,i}\}_{i=1}^n$, $\{\cO_{g_i,i}\}_{i=1}^n$, compressors $\{\cC_i\}_{i=1}^n \subseteq \mathbb{C}(\delta)$, and an algorithm $A \in \cA_{\{\cC_i\}_{i=1}^n}^U$, let $\hat{x}_{A, T} \eqdef \hat{x}_{A, \{f_i\}_{i=1}^n, \{g_i\}_{i=1}^n, \{\cC_i\}_{i=1}^n, T}$ represent the output of algorithm $A$ after at most $T$ oracle queries and communication rounds per worker. We define the minimax convergence measure
\begin{equation*}
     \inf_{A} \sup_{\{\cC_i\}_{i=1}^n} \sup_{h \in \mathcal H_{R,M}} \left\{ \E{f(\hat{x}_{A, T}) - f(x^*)}, \E{g(\hat{x}_{A, T})} \right\} .
\end{equation*}
We do not require operators $\{\cC_i\}_{i=1}^n$ to be neither distinct nor independent, and parameter $\delta$ can be utilized by the algorithm $A.$ Our first contribution is the lower bound for algorithms that support unidirectional compression. 

\begin{theorem}\label{th:lower_bound_uni}
    For any $R, M > 0, n \ge 2,$ $\delta \le 0.3, T \ge \delta^{-2}$ there exists a problem $h \subseteq \mathcal{H}_{R,M}$, oracles $\{\cO_{f_i,i}\}_{i=1}^n$, $\{\cO_{g_i,i}\}_{i=1}^n$, compressors $\{\cC_i\}_{i=1}^n\subseteq \mathbb{C}(\delta)$, and the starting point $x^0 = 0$ such that for any first-order algorithm $A \in \cA_{\{\cC_i\}_{i=1}^n}^U$ run for $T \le d$ iterations from $x^0$, satisfies  
    \begin{align}\label{eq:LB_unidirectional}
    \begin{aligned}
        \hspace{-5pt}\E{f(\hat{x}_{A, T }) - f(x^*)} &\geq \Omega \left(\frac{RM}{\sqrt{\delta T}}\right),\quad \text{ and } \\ 
       \hspace{-5pt}\E{g(\hat{x}_{A, T })} &\geq \Omega\left(\frac{RM}{\sqrt{\delta T}}\right) . 
    \end{aligned}
    \end{align}
\end{theorem}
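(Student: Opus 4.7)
I would adapt the template of \citet{huang2022lower} for smooth communication-compression lower bounds to the non-smooth convex setting, augmenting it with a second coordinate block that carries the constraint $g$. The argument breaks into three steps.

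\textbf{Hard instance.} Fix $K = \Theta(\delta T)$ (chosen sufficiently large in a sense made precise below) and work in ambient dimension $d \ge 2(K+1)$. On the first $K+1$ coordinates I would place a classical non-smooth zero-chain objective in the Nemirovski--Yudin style,
\[
f(x) \;=\; M \max_{1 \le j \le K+1} |x_j - c_j|, \qquad c_j = R/\sqrt{K+1},
\]
which has $M$-bounded subgradients, minimum $x^{\star}=(c_1,\dots,c_{K+1},0,\dots)$ at distance $R$ from $x^0=0$, and an adversarial subgradient oracle that, at any zero-respecting iterate, returns a subgradient touching at most one new coordinate. On a disjoint block of $K+1$ coordinates I would place a structurally identical zero-chain to serve as $g$, and take each $f_i$ (resp.\ $g_i$) to be a scaled copy of $f$ (resp.\ $g$), so that Assumptions~\ref{asmp:convexity}--\ref{asmp:bounded_grad} hold.

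\textbf{Adversarial compressor.} Let $S_t \subseteq [d]$ be the set of coordinates currently revealed to the server. I would couple the worker compressors so that, on any round, with probability $1-\delta$ every $\cC_i$ returns its input restricted to the coordinates in $S_t$, and with probability $\delta$ every $\cC_i$ returns its input in full. Marginally,
\[
\mathbb{E}\|\cC_i(v) - v\|^2 \;=\; (1-\delta) \sum_{j \notin S_t} v_j^2 \;\le\; (1-\delta)\|v\|^2,
\]
so $\cC_i \in \mathbb{C}(\delta)$ (coupling across workers is allowed since the definition of $\cA^U_{\{\cC_i\}_{i=1}^n}$ does not require independence). Combined with the zero-respecting property of $A$, $S_t$ grows by at most one index per round, and each enlargement is an independent Bernoulli$(\delta)$ event within each block.

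\textbf{Dimension restriction and conclusion.} Letting $S_T^f, S_T^g$ denote the reveal sets in the $f$- and $g$-blocks, a Chernoff bound gives $|S_T^f|, |S_T^g| \le \tfrac{3}{2}\delta T < K+1$ (for $K = 2\delta T$) with probability at least $1 - 2\exp(-c\delta T)$; the hypothesis $T \ge \delta^{-2}$ is precisely what makes this failure probability negligible compared to the target rate $1/\sqrt{\delta T}$. On this good event, zero-respecting implies that each block has at least one coordinate $j^{\star}$ with $\hat{x}_{A,T,j^{\star}}=0$, so $|\hat{x}_{A,T,j^{\star}} - c_{j^{\star}}| = R/\sqrt{K+1}$ and hence $f(\hat{x}_{A,T}) - f(x^{\star}),\; g(\hat{x}_{A,T}) \ge \Omega\bigl(MR/\sqrt{\delta T}\bigr)$; taking expectation and absorbing the complementary event using the uniform boundedness of $f - f^{\star}$ and $g$ on the iterates finishes the proof. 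The main subtlety is the compressor construction: it depends on $S_t$, which is itself a function of the algorithm's history and of the compressor's own randomness, so the $\delta$-contraction has to be verified marginally over the joint law, and the tuple $\{\cC_i\}_{i=1}^n$ has to be checked to be a legitimate element of $\mathbb{C}(\delta)^n$ with $A \in \cA^U_{\{\cC_i\}_{i=1}^n}$. A secondary issue is obtaining a simultaneous bound on both $f$ and $g$ (rather than on their maximum): placing the two zero-chains on disjoint coordinate blocks with independent per-block reveal randomness rules out the algorithm trading progress on one for progress on the other.
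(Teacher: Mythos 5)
Your proposal follows essentially the same strategy as the paper: a Nemirovski/Nesterov-style non-smooth zero-chain, identical across all workers (so no speed-up in $n$), paired with coordinate-blocking compressors whose randomness is \emph{shared} across workers, so that the server's ``progress'' along the chain advances by at most one coordinate per round and only with probability $\approx\delta$; after $T$ rounds only $O(\delta T)$ coordinates are revealed and the gap $\Omega(MR/\sqrt{\delta T})$ follows. The paper instantiates this with $f_i(x)=C\max_{1\le j\le T}x_j+\tfrac{\mu}{2}\|x\|_2\max\{\|x\|_2,R/2\}$ and Rand-$K$ with $K=\lceil d\delta\rceil$, $d=\lfloor 5T\delta\rfloor$; your $M\max_j|x_j-c_j|$ chain of length $\Theta(\delta T)$ is an equally standard variant.

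There is one genuine weak point in your construction: the compressor. You define $\cC_i$ to project onto ``the set of coordinates currently revealed to the server,'' which makes $\cC_i$ a \emph{stateful, trajectory-dependent} operator rather than a (randomized) mapping $\R^d\to\R^d$ as required by Definition~\ref{def:contractive_comp}; the supremum in the minimax measure is taken over $\mathbb{C}(\delta)$, i.e.\ over such mappings fixed before the algorithm runs, so an operator whose law at round $t$ depends on the algorithm's history is not obviously admissible, and verifying the contraction ``marginally over the joint law'' does not resolve this. You flag the issue but do not repair it. The paper avoids it entirely by using Rand-$K$ with shared randomness, a bona fide stateless member of $\mathbb{C}(\delta)$ that yields the same Bernoulli$(\approx\delta)$ reveal dynamics; you should substitute that. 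Two smaller remarks: (i) your claim that $T\ge\delta^{-2}$ makes the Chernoff failure probability ``negligible'' is an overstatement --- $\exp(-c\delta T)$ with $\delta T\ge\delta^{-1}$ is merely a constant bounded away from $1$ --- but this is harmless since $f-f^*\ge 0$ on the bad event and a constant success probability suffices (the paper's own bound is only $1-e^{-1}$); (ii) for the constraint, the paper's trick of setting $g_i\eqdef f-\min f$ is cleaner than your disjoint-block construction: the subdifferentials of $f$ and $g$ coincide, so the trajectories on the constrained and unconstrained problems are identical and both bounds follow simultaneously from the single unconstrained argument, with no need to split the query budget or adjust $R$ by $\sqrt{2}$.
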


When $\delta=1$ and $g \equiv 0$, indicating no compression and no constraints, \eqref{eq:LB_unidirectional} recovers the classical lower bounds for non-smooth convex optimization \citep{nemirovskij1983problem,nemirovski1994efficient,nesterov2014introductory,braun2017lower,scaman2018optimal}. However, when worker to server compression is large, the convergence rate degrades by a factor of $1/\sqrt{\delta}$. Similar degradation appears in the constraint violation. An interesting implication of \Cref{th:lower_bound_uni} is that the convergence rate does not improve when increasing the number of workers $n$, which is different from prior work in smooth stochastic optimization \cite{huang2022lower,he2023lower}. The key idea of the proof is to extend and modify the ``worst-case'' function from \citep{nesterov2014introductory} and account for compression in the distributed setting, specifically, we use for all $i\in [n]$
 \begin{align*}
    \begin{aligned}
        f_i(x) &\eqdef 
            C\cdot \max\limits_{1\le j\le T} x_j + \frac{\mu}{2}\|x\|_2 \cdot \max\left\{ \|x\|_2 ; \frac{R}{2} \right\} ,  \\
        g_i(x) &\eqdef f_i(x) - \min_{x\in \R^d} f_i(x) , 
    \end{aligned}
  \end{align*}
where $C, \mu  > 0$ are some constants depending on the bound of subgradients $M$ and the compression level $\delta$. We refer to \Cref{sec:lower_bound_uni_proof} for the formal proof.



\subsection{\algname{Safe-EF} Method}\label{sec:convergence_theory}

In this section, we describe \algname{Safe-EF}, our main algorithm detailed in \Cref{alg:ef14}, which addresses two main challenges simultaneously: handles \textit{non-smoothness} and \textit{constraints}. The distinct feature of our method is a dynamical switch between the subgradients of the objective $f_i$ and those of the constraints $g_i$ depending on if the constraint violation exceeds a predefined threshold $c.$ To implement this, workers compute the constraint violations $g_i(x^t)$ and communicate them to the server. This process does not increase communication overhead, as it requires transmitting only a single float per iteration. Equipped with this switching rule, we use \algname{EF14} \citep{seide20141} type updates to limit the communication overhead of sub-gradients from workers to server. Furthermore, we additionally enhance \algname{Safe-EF} with server to workers compression using a ``primal'' \algname{EF21} variant, \algname{EF21-P}, due to \citet{gruntkowska2023ef21p}, which compresses the difference between two estimates of the model parameters $w^{t+1}$ and $x^t$.\footnote{In fact, it was noted by \citet{gruntkowska2023ef21p} that a pure \algname{EF21-P} used at the server level can be reformulated as \algname{EF14} on the worker level. However, we only use \algname{EF21-P} formulation for algorithmic presentation and design the convergence proof using \algname{EF14} formulation.}

\begin{figure}[!t]
    \begin{algorithm}[H]
       \caption{\algname{Safe-EF} with bidirectional compression}
       \label{alg:ef14}
        \begin{algorithmic}[1]
       \State {\bfseries Input:}  $w^0 = x^0$, $\{\cC_i\}_{i=1}^n$, $\gamma, c > 0$,  $e_i^0=0$
       \For{$t=0, \dots, T-1$}
       \For{$i=1, \dots, n$ {\bfseries in parallel}}
       \State Send $g_i(x^t)$ to server  
       \Comment{cheap one float comm.}
       \EndFor
       \State Send $g(x^t) = \frac{1}{n}\sum_{i=1}^n g_i(x^t)$ to workers
       \For{$i=1,\dots,n$ {\bfseries in parallel}}
       \State Compute $h_i^t = f_i^\prime(x^t)$ {\bfseries if} $g(x^t) \le c$ {\bfseries else} $g_i^\prime(x^t)$
       \State Send $v_i^t = \cC_i(e_i^t + h_i^t)$ to server
       \State Compute $e_i^{t+1} = e_i^t + h_i^t - v_i^t$
       \EndFor
       \State Compute $v^t = \frac{1}{n}\sum_{i=1}^n v_i^t$ and $w^{t+1} = w^t - \gamma v^t$
       \State Compute $x^{t+1} = x^t + \cC_0(w^{t+1} - x^t)$ 
       \State Send $\cC_0(w^{t+1} - x^t)$ to workers
       \EndFor
    \end{algorithmic}
    \end{algorithm}
\end{figure}


\subsection{Convergence Upper Bound}

In our next theorem, we provide the convergence guarantees for \algname{Safe-EF} summarized in \Cref{alg:ef14}. The set $\cB$ denotes all iteration counters when the constraint violation is below the threshold $c,$ i.e., $$\cB \eqdef \left\{ t\in [T-1] \mid  g(x^t) \leq c \right\}.$$

\begin{theorem}\label{th:ef14_bidirectional}
    Assume Assumptions~\ref{asmp:convexity}-\ref{asmp:bounded_grad} hold, the server and workers use compressors $\cC_0 \in \mathbb{C}(\delta_{\rm s}), \{\cC_i\}_{i=1}^n \subseteq \mathbb{C}(\delta).$ Then there exist a choice of stepsize $\gamma$ and threshold $c$ such that the iterates of \algname{Safe-EF} with bidirectional compression satisfy 
    \begin{align}\label{eq:ef14_bidirectional}
    \begin{aligned}
        \E{f(\overline{x}^T) - f(x^*)} &\le \cO\left(\frac{RM}{\sqrt{\delta_{\rm s}\delta T}}\right),\quad \text{ and }\\ \E{g(\overline{x}^T)} &\le \cO\left(\frac{RM}{\sqrt{\delta_{\rm s}\delta T}}\right),
    \end{aligned}
    \end{align}
    where $\overline{x}^T \eqdef \frac{1}{|\cB|}\sum_{t\in\cB}x^t.$  
\end{theorem}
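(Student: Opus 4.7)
The plan is to combine a virtual-iterate analysis of the two error-feedback streams with a classical switching-subgradient argument in the spirit of Polyak. Introduce the auxiliary sequence $\tilde w^t \eqdef w^t - \gamma \bar e^t$ with $\bar e^t \eqdef \tfrac1n \sum_{i=1}^n e_i^t$, and set $h^t \eqdef \tfrac1n\sum_{i=1}^n h_i^t$. A direct calculation using the EF14 worker update $e_i^{t+1} = e_i^t + h_i^t - v_i^t$ together with $w^{t+1} = w^t - \gamma v^t$ yields the uncompressed subgradient step $\tilde w^{t+1} = \tilde w^t - \gamma h^t$. Crucially, the switching rule in \Cref{alg:ef14} depends on the \emph{global} quantity $g(x^t)$, so all workers select the same type of subgradient at each iteration; hence $h^t \in \partial f(x^t)$ for $t \in \cB$ and $h^t \in \partial g(x^t)$ for $t \in \cB^c \eqdef [T-1]\setminus\cB$.

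The standard subgradient inequality applied to $\tilde w$, combined with $\|\tilde w^0 - x^*\| = \|x^0 - x^*\| \le R$, gives after summation
\begin{equation*}
    \sum_{t=0}^{T-1}\<h^t, x^t - x^*> \;\le\; \frac{R^2}{2\gamma} + \frac{\gamma T M^2}{2} + \sum_{t=0}^{T-1}\|h^t\|\,\|x^t - \tilde w^t\|.
\end{equation*}
The last term couples the descent to the two compression schemes. Using the Young-type identity $(1-\delta)(1+\tfrac{\delta}{2(1-\delta)}) = 1-\tfrac{\delta}{2}$ together with $\cC_i$-contractivity, I would derive $\E{\|e_i^{t+1}\|^2} \le (1-\tfrac{\delta}{2})\E{\|e_i^t\|^2} + \tfrac{2}{\delta}M^2$, whence $\E{\|e_i^t\|^2} = \cO(M^2/\delta^2)$ and by Jensen the same for $\bar e^t$. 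An analogous recursion for the EF21-P sequence, applied to $w^{t+1} - x^t = (w^t - x^t) - \gamma v^t$ and using the bound $\E{\|v^t\|^2} = \cO(M^2/\delta^2)$ inherited from the worker errors, yields $\E{\|x^t - w^t\|^2} = \cO(\gamma^2 M^2/(\delta_{\rm s}^2\delta^2))$. Combining the two via Cauchy-Schwarz in expectation produces
\begin{equation*}
    \E{\textstyle\sum_{t=0}^{T-1}\<h^t, x^t - x^*>} \;\le\; \frac{R^2}{2\gamma} + \cO\!\left(\frac{\gamma T M^2}{\delta_{\rm s}\delta}\right) \;=:\; A ,
\end{equation*}
and the balanced choice $\gamma = R\sqrt{\delta_{\rm s}\delta}/(M\sqrt T)$ gives $A = \cO(RM\sqrt T/\sqrt{\delta_{\rm s}\delta})$.

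By convexity, $\<h^t, x^t - x^*> \ge f(x^t) - f(x^*)$ for $t \in \cB$ and, since $g(x^*)\le 0$, $\<h^t, x^t - x^*> \ge g(x^t) - g(x^*) \ge g(x^t) > c$ for $t \in \cB^c$. Hence
\begin{equation*}
    \E{\textstyle\sum_{t \in \cB}(f(x^t) - f(x^*))} + c\,\E{|\cB^c|} \;\le\; A .
\end{equation*}
Setting $c \eqdef A/T = \cO(RM/\sqrt{\delta_{\rm s}\delta T})$, the constraint bound is immediate from $g(x^t)\le c$ on $\cB$ and convexity of $g$: $\E{g(\overline x^T)} \le c$. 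For the objective, one shows $|\cB| \ge 1$ almost surely (otherwise $\sum g(x^t) > cT = A$, contradicting the pointwise descent bound), divides by $|\cB|$, and uses $A \le cT$ together with convexity of $f$ to rewrite $f(\overline x^T) - f(x^*) \le (A_T - cT)/|\cB| + c$; with the balanced choice of $c$, the non-negative part of the first term vanishes in expectation, yielding $\E{f(\overline x^T) - f(x^*)} \le \cO(c)$.

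The main technical hurdle I expect is the coupled error-feedback analysis: the worker errors $e_i^t$ feed into $v^t$, which drives the server-side contraction for $x^t - w^t$, while the subgradients $h_i^t$ are evaluated at $x^t \ne w^t$; the two Young-type recursions must be unrolled in tandem with carefully matched small parameters so that the bound contains precisely the factor $1/(\delta_{\rm s}\delta)$ matching the lower bound of \Cref{th:lower_bound_uni}. A secondary subtlety is the random denominator $|\cB|$: passing from the summed bound to $\E{f(\overline x^T) - f(x^*)}$ requires either a pointwise residual argument on $A_T$ or a careful truncation on the event $\{|\cB|\ge 1\}$, which is the main source of book-keeping in the conclusion step.
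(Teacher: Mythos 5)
The core machinery of your proposal coincides with the paper's proof in \Cref{sec:proof_bidirectional}: the virtual iterate $\tilde w^t = w^t - \gamma\bar e^t$ satisfying $\tilde w^{t+1}=\tilde w^t-\gamma h^t$, the splitting of the cross term into the uplink error $\gamma\|e^t\|$ and the downlink error $\|\hat e^t\|=\|w^t-x^t\|$, the two Young-type recursions giving $\E{\|e^t\|^2}=\cO(M^2/\delta^2)$, $\E{\|v^t\|^2}=\cO(M^2/\delta^2)$ and $\E{\|\hat e^t\|^2}=\cO(\gamma^2M^2/(\delta_{\rm s}^2\delta^2))$, and the balanced choice $\gamma=R\sqrt{\delta_{\rm s}\delta}/(M\sqrt T)$ are all exactly Lemma~\ref{lem:lemma2_bi} and the surrounding moment bounds. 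Your observation that the switch is driven by the global quantity $g(x^t)$, so all workers use the same type of subgradient, is also the right (and necessary) point.

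The gap is in the endgame. You set $c=A/T$ and claim that in $f(\bar x^T)-f(x^*)\le (A_T-cT)/|\cB|+c$ ``the non-negative part of the first term vanishes in expectation.'' This does not follow: $A_T$ is a random variable of which you only control the expectation (for random contractive compressors the bounds on $\|e^t\|$, $\|\hat e^t\|$ hold only in second moment, not pointwise), $|\cB|$ is a random denominator correlated with $A_T$, and $\E{A_T}\le cT$ gives no control on $\E{(A_T-cT)_+/|\cB|}$. Moreover, with $c=A/T$ exactly there is no slack: $\cB=\varnothing$ only yields $cT\le A_T$, i.e.\ $A\le A_T$, which is not a contradiction, so even $|\cB|\ge 1$ is not established. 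The paper's resolution is to choose $c$ with a constant-factor margin so that $\tfrac T2\gamma c$ \emph{strictly} exceeds the expected error bound (Corollary~\ref{cor:stepsize_choice} takes $c=32RM/\sqrt{\delta_{\rm s}\delta T}$ against a bound of $16R^2$ for $\tfrac T2\gamma c$), and then to prove the dichotomy: either $|\cB|\ge T/2$, or $\E{\sum_{t\in\cB}(f(x^t)-f(x^*))}\le 0$. In the first case one divides by the deterministic lower bound $T/2$ rather than by the random $|\cB|$; in the second case Jensen gives $\E{f(\bar x^T)-f(x^*)}\le 0$ directly. Replacing your residual argument by this dichotomy (and enlarging $c$ by the requisite constant) closes the proof; as written, the step from the summed bound to the claimed rate is not justified.
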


The proof of the theorem is detailed in \Cref{sec:proof_bidirectional}, where we also give explicit choice of $\gamma$ and $c$. Next, we discuss the obtained result in several special cases as well as the main difficulties in the convergence proof.

\paragraph{Single-node training with no compression.} 
In the special case where $n=1$ and $\delta_s = \delta=1$, corresponding to the non-distributed setting without compression, \eqref{eq:ef14_bidirectional} recovers the rates in  \citep{nesterov2018lectures,lan2020algorithms}. 

\paragraph{No constraints, i.e., $g \equiv 0$, and $\cC_0 \equiv \mathbf{Id}$.} 
In this case, our algorithm, \algname{Safe-EF}, simplifies to the well-known \algname{EF14} method \citep{seide20141}. \algname{EF14} was previously analyzed in the non-smooth setting for single-node training ($n=1$) by \citet{Karimireddy_SignSGD}. \Cref{th:ef14_bidirectional} extends the analysis to the distributed setup. Notably, the convergence rate is consistent with that presented in their work in this special case.

\paragraph{Unidirectional compression.} 
Next, we consider the setting with unidirectional compression, i.e., $\delta_{\rm s}=1$ and $\cC_0 \equiv \textbf{Id}$. We observe that both the functional suboptimality gap and constraint violation diminish at a rate of $\cO(\nicefrac{1}{\sqrt{\delta T}})$, consistent with the lower bound established in \Cref{th:lower_bound_uni}, thereby confirming the optimality of \algname{Safe-EF} assuming $\delta_{\rm s}$ is a numerical constant independent of $d$ and $K$. 

\paragraph{Bidirectional compression.} 
Now we discuss the setting when the compression is applied in both directions. It is worth noting that most prior studies focus on a more restricted class of compressors, such as absolute compressors \citep{tang2019doublesqueeze} or unbiased compressors \citep{philippenko2021preserved, gruntkowska2023ef21p, gruntkowska2024improving,tyurin20232direction}, in the bidirectional setting. In contrast, our work does not impose any additional constraints on the compressors. Other related work considers only server to worker compression \citep{sokolov2024marina}, while often compression in both directions is important. The convergence rate in \eqref{eq:ef14_bidirectional} highlights a slowdown by a factor of $\sqrt{\delta_{\rm s}\delta}$, which aligns with similar dependencies observed in prior works on smooth distributed optimization \citep{fatkhullin2021ef21}. It remains an open question whether the dependence on the compression levels $\delta$ and $\delta_{\rm s}$ can be improved in the non-smooth setting. Perhaps, this dependency could potentially be reduced from $\sqrt{\delta_{\rm s}\delta}$ to $\sqrt{\delta}+\sqrt{\delta_{\rm s}}$ by incorporating multiple communication rounds per iteration, similar to the approach in \citep{huang2022lower}. However, this procedure can be impractical since $\lceil{K/\delta_s\rceil}$ coordinates are communicated at every iteration as observed in \citep{fatkhullin2024momentum}, and we leave the study of this strategy for future work.

\paragraph{Key theoretical challenges.} 
We emphasize that controlling constraints significantly complicates the analysis compared to prior work~\citep{Karimireddy_SignSGD}, which is limited to the unconstrained, unidirectional, non-distributed setting. A key novelty of our analysis lies in demonstrating that an appropriate choice of the stepsize $\gamma$ and threshold $c$ ensures that the number of iteration counters in $\cB$ with constraint violations below $c$ is sufficiently large to guarantee progress in reducing functional suboptimality. In particular, it is not empty and thus $\overline{x}^T$ is well-defined.

\paragraph{Communication complexity with Top-$K$.} 
In a unidirectional case with $\cC_i$ is Top-$K$ and $\cC_{0} \equiv \mathbf{Id}$, the total communication complexity is\footnote{We omit the numerical constants and logarithmic factors in comparison.}
\begin{equation}
    \underbrace{K}_{\text{floats per iteration}} \times \underbrace{\frac{R^2M^2}{\delta\varepsilon^2}}_{\# \text{ iterations}} \le \frac{KR^2M^2}{\frac{K}{d}\varepsilon^2} = \frac{dR^2M^2}{\varepsilon^2},
\end{equation}
where we utilize the condition $\delta \ge \frac{K}{d}$ for Top-$K$. This finding indicates that the communication complexity of \algname{Safe-EF} aligns with that of parallel switching subgradient method (\algname{Safe-EF} without compression) in the worst-case scenario. However, an improvement is possible when $\delta > \frac{K}{d}$, which occurs if the entries differ substantially in magnitude \citep{beznosikov2023biased}.

\paragraph{Key Steps of the Proof.} 
Our convergence proof builds on the ``virtual iterates'' construction of \citet{stich2019error} (see \Cref{eq:def_x_hat_t}). In \Cref{lem:lemma2_bi}, we then derive a unified bound controlling both the function sub-optimality and the constraint violation. Crucially, by enforcing appropriate choices of the step size $\gamma$ and threshold $c$, we show that this bound can be made small enough. The same lemma also guarantees that after $T$ iterations, either the number of approximately feasible points are at least $|\cB| \ge \frac{T}{2}$ or the sub-optimality is already below the desired tolerance. Together with the preliminary lemma on the virtual iterates, this yields our full convergence theorem for \algname{Safe-EF}. Finally, in \Cref{cor:stepsize_choice} we verify that the stipulated conditions on $\gamma$ and $c$ are indeed feasible.





\section{Extension to Stochastic Setting}\label{sec:stochastic}


In this section, we consider a stochastic formulation of our the problem \eqref{eq:problem}, \eqref{eq:constraints}, namely, 

\begin{align}\label{eq:problem_stochastic}
 f_i(x) \eqdef \mathbb{E}_{\xi^i\sim\cD_i}\left[f_i(x, \xi^i) \right], 
\end{align}
and 
\begin{equation}\label{eq:constraints_stochastic}
g_i(x) \eqdef \mathbb{E}_{\xi^i\sim\cD_i}\left[g_i(x, \xi^i)\right], 
\end{equation}
where $\cD_i$ is a distribution of local environment (dataset) at worker $i\in[n]$. We assume that the noise follows a sub-Gaussian distribution.

\begin{assumption}\label{asmp:subgaussian_noise}
    Workers have access to $M$-bounded stochastic subgradients and $\fvsigma^2/\fvN$-sub-Gaussian function evaluations of $g_i$, namely, for some $M, \fvsigma^2/\fvN > 0$, any $x\in\R^d$, and any $i\in[n]$ we have 
    \begin{align}
        \|f^\prime_i(x,\xi^i)\|^2, \|g^\prime_i(x,\xi^i)\|^2 &\le M^2, \label{eq:asmp_bounded_subgrad}\\
        \E{\exp\left(\frac{(g_i(x,\xi^i) - g_i(x))^2}{\fvsigma^2/\fvN}\right)} &\le \exp(1),\label{eq:asmp_subgaussian_func}
    \end{align}
    where $\xi^i$ is a sample from the local dataset $\cD_i$. The latter assumption on sub-Gaussian function evaluation can be satisfied by implemented a mini-batch estimation of the constraints with batch-size $\fvN$. Moreover, we assume that the workers compute subgradients and function evaluations independently for any given $x$.
\end{assumption}

\begin{assumption}\label{asmp:stoch_convexity}
    We assume that for all $i\in[n]$ and for all $\xi^i \in \cD_i$ the functions $f_i(x,\xi^i)$ and $g_i(x,\xi^i)$ are convex, i.e. for all $x,y\in\R^d$ we have 
    \begin{align}
        f_i(y,\xi^i) \ge f_i(x,\xi^i) + \<f_i^\prime(x,\xi^i), y-x>,\\
        g_i(y,\xi^i) \ge g_i(x,\xi^i) + \<g_i^\prime(x,\xi^i), y-x>,
    \end{align}
    for all $f_i^\prime(x,\xi^i) \in \partial f_i(x,\xi^i)$ and $g_i^\prime(x,\xi^i) \in \partial g_i(x,\xi^i)$.
\end{assumption}

\begin{remark}
        We highlight that in the special (semi-stochastic) case when subgradient evaluations $f_i^{\prime}(x, \xi^i)$, $g_i^{\prime}(x, \xi^i)$ are stochastic, but the constraint evaluation of $g_i$ is deterministic, the proof significantly simplifies, and convergence analysis can be repeated as in \Cref{sec:proof_bidirectional}. However, the stochastic estimation of constraint violation $g(x)$ poses a significant challenge and we need to use advanced techniques to conduct high probability analysis. 
    \end{remark}

\begin{theorem}\label{thm:stochastic} Let $\beta\in(0,\nicefrac{1}{2})$ be a failure probability and $R \ge \|x^0-x^*\|$. Assume workers use deterministic compressors $\{\cC_i\}_{i=1}^n \subseteq \mathbb{C}(\delta)$. Then there exists a choice of stepsize $\gamma$, threshold $c$, and large enough batch-size $\fvN \ge \wtilde{\cO}(\frac{\fvsigma^2}{nc^2})$ such that the iterates of \algname{Safe-EF} with unidirectional compression satisfy with probability at least $1-2\beta$
\begin{align}
        &f(\overline{x}^T) - f(x^*) \le \cO\left(\frac{(MR + \frac{\fvsigma}{\sqrt{\fvN}})(1+\log\frac{1}{\beta})}{\sqrt{\delta T}}\right),\notag\\
        & g(\overline{x}^T) \le  \cO\left(\frac{(MR + \frac{\fvsigma}{\sqrt{\fvN}})(1+\log\frac{1}{\beta})}{\sqrt{\delta T}}\right).
\end{align}
\end{theorem}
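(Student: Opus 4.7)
The plan is to follow the template of the deterministic analysis of \Cref{th:ef14_bidirectional} in \Cref{sec:proof_bidirectional}, replacing the deterministic descent and EF14 error recursions with their stochastic counterparts, and augmenting the argument with martingale concentration together with a stopping time to cope with the fact that $\cX = \R^d$ is unbounded (so only $R$, not the diameter, is available). First I would reintroduce the virtual iterate $\hat{x}^t \eqdef x^t - \gamma \avein e_i^t$ and expand $\|\hat{x}^{t+1}-x^*\|^2$ to obtain a one-step inequality of the form
\begin{equation*}
\|\hat{x}^{t+1}-x^*\|^2 \le \|\hat{x}^t-x^*\|^2 - 2\gamma \lr{h^t, x^t-x^*} + \gamma^2 \|v^t\|^2 + \text{(EF14 error terms)},
\end{equation*}
where $h^t = \avein h_i^t$ and each $h_i^t$ is now a stochastic subgradient (of $f_i$ or $g_i$, depending on the switch). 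The EF14 error recursion on $\avein \|e_i^t\|^2$ carries over almost verbatim because \Cref{asmp:subgaussian_noise} guarantees $\|h_i^t\|\le M$ almost surely; this is the crucial reason the subgradient bound is imposed pointwise rather than only in expectation.

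Next I would handle the noisy feasibility check. Averaging over the $n$ workers (which are independent, by \Cref{asmp:subgaussian_noise}), the empirical $\hat g(x^t) = \avein g_i(x^t,\xi^i)$ is $\fvsigma^2/(n\fvN)$-sub-Gaussian around $g(x^t)$. A union bound over $t=0,\dots,T-1$ and a batch size $\fvN \gtrsim \tfrac{\fvsigma^2 \log(T/\beta)}{n c^2}$ (matching the $\wtilde{\cO}(\fvsigma^2/(nc^2))$ rate stated in the theorem) ensures $|\hat g(x^t)-g(x^t)|\le c/2$ simultaneously for all $t$ with probability at least $1-\beta$. Consequently the stochastic set $\cB$ (defined by the threshold on $\hat g$) sandwiches the deterministic feasibility set for thresholds $c/2$ and $3c/2$, so the switching argument of the deterministic proof still governs the partition of $[T-1]$ into ``objective'' and ``constraint'' steps, at the cost of an additive $\fvsigma/\sqrt{\fvN}$ slack in the constraint bound.

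The principal new obstacle is the martingale deviation
\begin{equation*}
\sum_{t=0}^{T-1} \gamma \lr{h^t - \Ek{h^t}, x^*-x^t},
\end{equation*}
for which the usual diameter bound is unavailable. I would use a stopping time $\tau \eqdef \min\{t : \|\hat{x}^t-x^*\| > 2R\}$ and apply Freedman's inequality to the stopped process: the increments are bounded by $2\gamma M \cdot 2R$ and the predictable variance by $\cO(\gamma^2 M^2 R^2 T)$, yielding a deviation of order $\gamma M R\sqrt{T\log(1/\beta)}$ with probability $1-\beta$. Plugging this into the telescoped one-step inequality and choosing $\gamma \sim R\sqrt{\delta/T}/(M(1+\log(1/\beta)))$ as in \Cref{cor:stepsize_choice} shows that on the good event $\|\hat{x}^t-x^*\|^2 \le 2R^2$ for all $t\le T$, so $\tau > T$ and the truncation is vacuous; this closes the induction.

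Finally, combining the two good events (feasibility concentration and the stopped Freedman bound) by a union bound, I would mimic the deterministic argument of \Cref{lem:lemma2_bi}: the same choice of $\gamma$ and $c$ forces $|\cB|\ge T/2$ with high probability (otherwise the sub-optimality is already below the target), and averaging the one-step inequality over $t\in\cB$ delivers the claimed rates for both $f(\overline{x}^T)-f(x^*)$ and $g(\overline{x}^T)$, with the $\fvsigma/\sqrt{\fvN}$ term coming from the $\hat g$ concentration and the $\log(1/\beta)$ factor coming from Freedman. The main technical difficulty is the interlocking of the stopping time, the stochastic switching, and the EF14 error recursion; a clean decomposition of the noise into two martingale-difference sequences (subgradient noise and constraint evaluation noise), each controlled by Freedman on the stopped process, is what makes the logarithmic dependence on $1/\beta$ and the appearance of $R$ (rather than the set diameter) possible.
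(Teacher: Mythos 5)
Your proposal is correct in its overall architecture and matches the paper's skeleton in most respects: the virtual iterates $\hat{x}^t = x^t - \gamma e^t$, the almost-sure EF14 error bound $\|e^t\| \le \tfrac{2\sqrt{1-\delta}}{\delta}M$ (valid precisely because the compressors are deterministic and the stochastic subgradients are bounded by $M$ pointwise, as you note), the switching dichotomy ``either $|\cB|\ge T/2$ or the suboptimality is already nonpositive,'' the mini-batch requirement $\fvN \gtrsim \fvsigma^2\log(T/\beta)/(nc^2)$ obtained from sub-Gaussian concentration of $\frac{1}{n}\sum_i g_i(x^t,\xi^t_i)$ with a union bound over $t$, and the final averaging over $\cB$. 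Where you genuinely diverge is in the treatment of the unbounded domain. You propose a stopping time $\tau=\min\{t:\|\hat{x}^t-x^*\|>2R\}$ plus Freedman's inequality on the stopped martingale, closing the loop by showing that on the good event the iterates never escape the $2R$-ball, so the truncation is vacuous; the $\log(1/\beta)$ then enters through the stepsize and the Freedman deviation. The paper instead adapts the weighted exponential supermartingale technique of \citet{liu2023high}: it forms $Z_t = w_tA_t - v_t\|\wtilde{x}^t-x^*\|^2$ and proves by backward induction a bound on $\E{\exp(\sum_{l\ge t}Z_l)\mid\cF_t}$, choosing $v_t = \tfrac{48\gamma^2w_t^2}{n}M^2$ so that the $\|\wtilde{x}^t-x^*\|^2$ term produced by the moment-generating-function estimate is exactly cancelled rather than bounded; Markov's inequality on $\exp(S_0)$ then yields the high-probability statement directly, with no stopping time and no separate induction on boundedness. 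Both routes deliver the distance $R$ in place of a diameter and the logarithmic dependence on $1/\beta$; your approach is arguably more modular (two Freedman applications on separately identified martingale-difference sequences), while the paper's buys a single self-contained recursion at the cost of carefully tuned weights $w_t$ satisfying $w_t + v_t \le w_{t-1}$. One point to make explicit if you write this up: with $\gamma \sim R\sqrt{\delta/T}/M$ (the paper's choice) the Freedman deviation $\gamma MR\sqrt{T\log(1/\beta)}$ is of order $R^2\sqrt{\delta\log(1/\beta)}$, which can exceed $R^2$ for small $\beta$, so your inclusion of the $(1+\log\frac{1}{\beta})$ factor in the stepsize is not cosmetic but necessary to keep the iterates inside the $2R$-ball and close the induction; this slightly changes where the logarithm lands in the final bound but not its order.
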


To achieve $\varepsilon$-accuracy, i.e., $f(\overline{x}^T) - f(x^*), g(\overline{x}^T) \le \varepsilon$, \algname{Safe-EF} requires a batch-size of order $\wtilde{\cO}\left(\nicefrac{\fvsigma^2}{n\varepsilon^2}\right)$. The convergence rate matches the lower bound \eqref{eq:LB_unidirectional} up to numerical and logarithmic factors. The proof is deferred to \Cref{sec:stochastic_proofs}. One of the key technical challenges of the above result is that the analysis in the prior (non-distributed) work \citep{lan2020algorithms} relies on bounded domain assumption, while the iterates of our algorithm can be potentially unbounded. To address this issue we use the ideas from \citep{liu2023high} to establish a strong high probability convergence. 

\begin{remark}
        While the iteration (and communication) complexity of the method in the stochastic setting matches the lower bound up to numerical and logarithmic factors, its sample complexity is suboptimal. Taking into account the necessity of $\wtilde\cO(\frac{1}{\varepsilon^2})$ batch-size, the sample complexity of the method becomes $\wtilde\cO(\frac{1}{\varepsilon^4}).$ Nevertheless, this complexity is no worse than the one given by non-distributed gradient switching method \citep{lan2020algorithms}. We use a different technique to conduct high probability analysis than  \citet{lan2020algorithms} because their analysis crucially relies on bounded diameter assumption, which we do not have in our formulation.  
    \end{remark}

    \begin{remark}
        We emphasize that the proof in the stochastic unidirectional setting can be advanced to the bidirectional setting following the derivations of \Cref{th:ef14_bidirectional} and \Cref{thm:stochastic}. The convergence guarantees in the stochastic bidirectional setting matches that in the deterministic up to numerical and logarithmic factors.
    \end{remark}


\section{Experiments}

Now we test \algname{Safe-EF} in practice. Below we provide experiments on a simple problem with synthetic data which satisfies all our assumptions, and later test our approach in more challenging task of training the Humanoid Robot. We include additional experiments on the classical Cartpole problem and Neyman-Pearson classification in \Cref{sec:additional_experiments}.

\subsection{Synthetic Data}\label{sec:synthetic}

\begin{figure*}[h!]
    \centering
\includegraphics{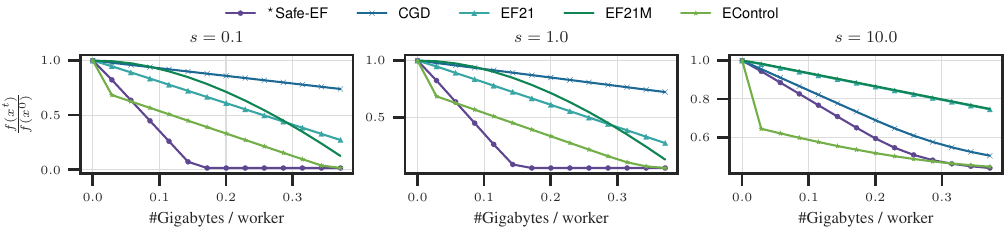}
    \caption{Comparison of \algname{Safe-EF} against \algname{CGD}, \algname{EF21}, \algname{EF21M}, and \algname{EControl} on synthetic non-smooth problem. $^\star$\algname{Safe-EF} coincides with \algname{EF14} \citep{seide20141} in this problem.}
    \label{fig:synthetic}
\end{figure*}
We begin with a simple empirical setup designed to easily verify that all assumptions of \algname{Safe-EF} are satisfied. Specifically, we consider the unconstrained problem of the form \eqref{eq:problem}, where $f_i = \|\mA_i x - b_i\|_1$. For this objective, the subgradient $f_i^\prime(x) = \mA_i^\top\sign(\mA_i x-b_i)$ \citep{beck2017first}. This choice ensures that all assumptions required for \algname{Safe-EF} hold. The data $\{\mA_i, b_i\}_{i=1}^n \subseteq \R^{d\times d}\times \R^d$ is synthetically generated, where the parameter 
$s$ controls the variability across local datasets: smaller values of $s$ result in matrices $\mA_i$ that are more similar to each other. We set $n=10, d=1000$, and use the Top-$K$ compressor with $K=\frac{d}{10}$ for all algorithms tested. Details of the data generation process can be found in \Cref{sec:additional_experiment_details}. We compare the proposed \algname{Safe-EF} with \algname{CGD}, \algname{EF21}, \algname{EF21M}~\citep{fatkhullin2024momentum}, and \algname{EControl}~\citep{gao2024econtrol}. For each method, hyper-parameters are tuned (see \Cref{sec:additional_experiment_details} for details) based on function value after $T=1000$ iterations, and performance with the optimal parameters is shown in \Cref{fig:synthetic}. Our results indicate that for $s\in\{0.1, 1.0\}$, \algname{Safe-EF} converges faster than all other algorithms. When heterogeneity is large, $s=10.0$, \algname{EControl} is initially faster; however, \algname{Safe-EF} catches up with \algname{EControl} by the end of the training.

\subsection{Policy Gradients for Humanoid Robot Fleet}
\label{sec:experiment-rl}
In this suite of experiments, we demonstrate an application of \algname{Safe-EF} for reinforcement learning. In this setup, each worker represents a humanoid robot that collects noisy measurements of some utility and constraint functions, to solve a \emph{constrained Markov decision process} \citep[][CMDP]{altman1999constrained}.

\paragraph{Constrained Markov decision processes.}
We define a CMDP as the tuple $(\cS, \cA, r, c, p, \gamma, \rho)$, where $\cS$ describes a state space (e.g. joint positions and velocities) and $\cA$ describes a set of admissible actions (e.g. motor torques). The function $r: \cS \times \cA \rightarrow \R$ describes a reward function that is ought to be maximized, while $c: \cS \times \cA \rightarrow \R$ is a cost signal that must remain bounded. The system dynamics, $p$, describes a probability distribution over the next state, given a state $s \in \cS$ and action $a \in \cA$. States are initially drawn from the distribution $\rho$, and $\gamma$ denotes a discounting factor. In what follows, each robot-worker interacts with a separate CMDP, such that CMDPs differ only in their dynamics, i.e., each robot collects trajectories from a slightly perturbed $p_i$, relative to the nominal model $p$. Collecting trajectories entails carrying out actions determined by a \emph{policy} $\pi(a \mid s)$, a stochastic mapping from states to actions.
The objective and constraint for \emph{each} CMDP are defined as $J_r^i(\pi) \eqdef \EE_{\pi, p_i}\left[{\sum_{t = 0}^{\infty} \gamma^t r(s_t, a_t)}\right]$ and $J_c^i(\pi) \eqdef \EE_{\pi, p_i}\left[{\sum_{t = 0}^{\infty} \gamma^t c(s_t, a_t)}\right]$ where the expectations are w.r.t.$\,p_i, \,\rho$ and $\pi_x$, a policy parameterized by $x \in \R^d$.

{\bf Policy gradient.}
A common approach for policy search is via the class of \emph{policy gradient} algorithms~\citep{reinforce,schulman2017proximal}. In essence, policy gradient algorithms use Monte Carlo sampling to obtain stochastic gradient estimates of $x$ w.r.t. the objective and constraints by ``rolling~out'' the policy and measuring the returned rewards and costs along several trajectories. In our experiments, each worker collects data independently to obtain these estimates, which are then used to compute the PPO~\citep{schulman2017proximal} loss
\begin{align*}
    f_i(x) = \EE_{s, a \sim \bar{\pi}} \left[\min\left\{\frac{\pi_x(a \mid s)}{\bar{\pi}(a \mid s)} A_{p_i}^{\bar{\pi}}(s, a), \quad \text{clip}\left(\frac{\pi_x(a \mid s)}{\bar{\pi}(a \mid s)}, 1 - \tilde{\epsilon}, 1 + \tilde{\epsilon}\right)A_{p_i}^{\bar{\pi}}(s, a)\right\}\right],
\end{align*}
where, $A_{p_i}^{\bar{\pi}}$ denotes the advantage~\cite{schulman2015high} in terms of cumulative rewards, for picking an action compared to expected action of $\pi_x$, $\bar{\pi}$ is the policy with which the trajectory data was drawn and $\tilde{\epsilon}$ is a hyperparameter. Similarly, a surrogate for the constraint $g_i(x)$ is given by replacing rewards with costs when computing the advantage. Crucially, both $f_i$ and $g_i$ \emph{are non-smooth functions} due to $\text{clip}(x, l, u) \eqdef \max\{l, \min\{x, u\}\}$.

\begin{figure*}[t!]
    \centering
    \begin{minipage}[t]{0.39\textwidth}
        \centering
        \includegraphics{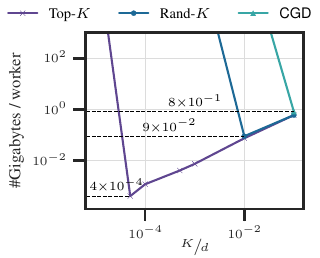}
        \caption{Gigabytes required to reach a fixed benchmark performance for different compression ratios. Top-$K$ can achieve the same performance as \algname{CGD}, but with approximately two orders of magnitude less gigabytes.}
        \label{fig:k-to-budget}
    \end{minipage}%
    \hfill
    \begin{minipage}[t]{0.59\textwidth}
        \centering
        \includegraphics{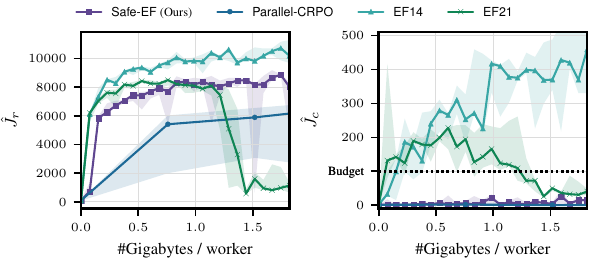}
        \caption{Objective and constraint during learning. Budget denotes the level below which $J_c$ must remain to satisfy the constraint.
        \algname{EF14}, an unsafe baseline, fails to satisfy the constraint. \algname{EF21}, another unsafe baseline designed for smooth problems, diverges. \algname{Parallel-CRPO}, a safe baseline without compression, suffers from communication overhead. In contrast, \algname{Safe-EF} ensures constraint satisfaction with minimal performance loss.}
        \label{fig:safety}
    \end{minipage}
\end{figure*}

{\bf Setup.}
Unless specified otherwise, in all our experiments, the default number of workers is $n = 16$, compression ratio is $\nicefrac{K}{d} = 0.1$ with Top-$K$ compression. We parameterize a neural network policy with $d = 0.2\text{M}$ parameters and use a batch size $\fvN = 1024$ to evaluate $f_i$ and $g_i$. Moreover, we treat the NN parameters as a single ``flat'' vector when compressing, rather than performing layer-wise compression. We run all our experiments for 5  random seed initializations and report the median and a 68$\%$ confidence interval when applicable. Empirical estimates of the objective and constraint are denoted as $\hat{J}_r$ and $\hat{J}_c$ respectively. We use a batch of 128 trajectories to obtain these estimates. Further details, regarding the perturbations of models, the reward and cost functions and additional experiments are provided in \Cref{sec:additional_experiments,sec:additional_experiment_details}.

\paragraph{Experiment 1: Price of communication.}
We evaluate \algname{Safe-EF} with Top-$K$ and Rand-$K$ sparsifiers and compare it with a constrained version of \algname{CGD} with a Top-$K$ sparsifier. To adapt \algname{CGD} to enforce the constraint, we follow the same approach as \algname{Safe-EF} and use the switching subgradient method. \Cref{fig:k-to-budget} shows the amount of communication (in gigabytes per worker) required to reach a fixed performance of $\hat{J}_r = 7500$ as the compression ratio $\nicefrac{K}{d}$ increases. As illustrated, both Top-$K$ and Rand-$K$ significantly reduce communication costs compared to \algname{CGD}, with Top-$K$ demonstrating the most robust performance across varying compression rates with about $2000\times$ improvement in communication reduction!

\paragraph{Experiment 2: Safety.}
We study the performance of \algname{Safe-EF} in terms of constraint satisfaction and compare it against the unsafe error feedback algorithms \algname{EF14} \citep{seide20141} and \algname{EF21} \cite{richtarik2021ef21}. Additionally, we compare \algname{Safe-EF} against a parallel variant of CRPO \citep{xu2021crpo}, a CMDP solver that enforces constraints via the subgradient switching method. Our parallel variant of it, indicated as \algname{Parallel-CRPO}, operates independently on each worker and transmits parameters $x$ to the server without compression. The results are presented in \Cref{fig:safety}, where \algname{Safe-EF} satisfies the constraints with a slight performance reduction, while \algname{EF14} violates the constraint. \algname{EF21} diverges, possibly due to non-smoothness of the objective and constraint. Next, given the same communication budget in gigabytes per worker, \algname{Parallel-CRPO} fails to converge. This outcome highlights the non-trivial nature of the task, emphasizing that optimal policies in the unconstrained case are insufficient to meet the constraints.

\paragraph{Experiment 3: Number of workers.}
We analyze the performance of \algname{Safe-EF} under varying number of available workers and present our findings in \Cref{fig:ablate-workers}. Our results reveal two key observations. First, the convergence rate decreases significantly when the number of workers is very small. Second, beyond a certain threshold, increasing the number of workers yields diminishing performance gains. The latter aligns with our theoretical lower bounds in \Cref{thm:lower_bound_deterministic}, which establish that no improvement in $n$ is possible in the worst case.

\begin{figure}[ht]
  \centering
  \begin{minipage}[t]{0.48\textwidth}
    \centering
    \includegraphics{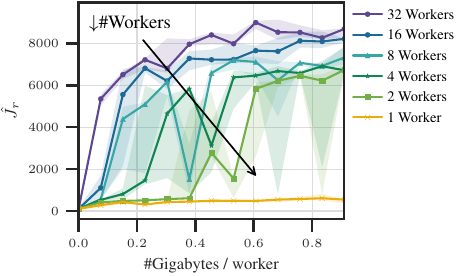}
    \caption{Convergence plots for different number of workers. While increasing the number of workers helps reduce the communication cost, the effect becomes less significant as the number of workers continues to grow.}
    \label{fig:ablate-workers}
  \end{minipage}
  \hfill
  \begin{minipage}[t]{0.48\textwidth}
    \centering
    \includegraphics{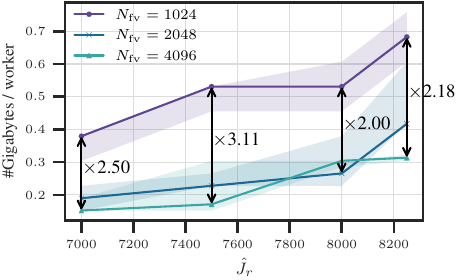}
        \caption{Communication required to reach a desired performance level for different batch samples $N_{\text{fv}}$.  Beyond a certain batch size, improvement diminishes.}
    \label{fig:ablate-batch}
  \end{minipage}
\end{figure}



\paragraph{Experiment 4: Effect of batch-size.}
\Cref{thm:stochastic} has a certain requirement of sufficiently large batch-size $\fvN$ due to constraint estimation process. If this requirement is met, the convergence rate is improved when increasing $\fvN$ until it reaches the lower bound in \Cref{th:lower_bound_uni}.
To study this effect in practice, we vary the batch size $\fvN\in\{256, 512, 1024, 2048, 4096\}$. 
Our results in \Cref{fig:ablate-batch}, indicate that by increasing the batch size from $\fvN= 1024$ to $2048$, we can see the improvement, however, a further increase from $\fvN=2048$ to $\fvN=4096$ does not yield more improvement. For smaller batch sizes $\fvN \in \{256, 512\}$, \algname{Safe-EF} did not converge, resulting in non-numeric values, and therefore are not presented in \Cref{fig:ablate-batch}. These findings are in line with our large-batch requirement in \Cref{thm:stochastic} and highlight the need to design algorithms that are robust to smaller batch sizes---suggesting an important direction for future work.

\section{Limitations and Future Work}
While we make significant progress in understanding non-smooth EF, there are certain limitations in our work. First, we assume all functions are convex, while \algname{Safe-EF} seems to excel even in challenging, highly non-convex RL tasks. Thus, it is crucial to understand non-convex problems: in general setting \citep[e.g. ][]{boob2023stochastic,jia2022first,grimmer2025goldstein} as well as in structured RL problems \citep[e.g. ][]{agarwal2021theory,xu2021crpo,lan2023policy,fatkhullin2023stochasticpolicy,barakat2023reinforcement}. Second, our noise assumptions are relatively stringent, and can be potentially relaxed using gradient clipping \citep{nazin2019algorithms,gorbunovhigh} or normalization \citep{hubler2024gradient} techniques, although this is non-trivial due to constraint estimation. Finally, our algorithm requires large batch-sizes and is not sample efficient in the stochastic setting due to constraint estimation, and our experiments indicate it is likely the issue of the algorithm. Primal-dual approaches \citep{juditsky2011solving,boob2023stochastic} can be helpful in mitigating this limitation.



\section*{Acknowledgements} 

Rustem Islamov acknowledges the financial support
of the Swiss National Foundation, SNF grant No 207392. Yarden As is supported by the grant of the Hasler foundation (grant no. 21039). Yarden As and Ilyas Fatkhullin are funded by ETH AI Center.


\bibliography{references.bib}
\bibliographystyle{plainnat}

\newpage
\appendix
\onecolumn

\tableofcontents
\newpage 

\section{Additional Related Work}

The Error Feedback (EF) mechanism was initially studied in the single-node setting ($n=1$) by \citet{stich2018sparsified,alistarh2017qsgd}. Subsequent research extended its analysis to the smooth convex setting, incorporating additional unbiased compressors \citep{gorbunov2020linearly,stich2020communication,qian2021error}. The \algname{EF21} algorithm, introduced by \citet{richtarik2021ef21}, was the first to establish provable convergence in the large-batch smooth regime without data heterogeneity bounds. Later, \citet{fatkhullin2024momentum} removed this large-batch requirement by integrating a momentum mechanism into the \algname{EF21} framework, achieving an optimal asymptotic rate. An extension of \algname{EF14}, called \algname{EControl}, was proposed by \citet{gao2024econtrol}, demonstrating convergence in both smooth convex and non-convex settings while attaining optimal asymptotic complexity. Recent research has further advanced the analysis of EF, extending it to variational inequalities \citep{beznosikov2022distributed}, decentralized communication graphs \citep{koloskovadecentralized,singh2021squarm,zhao2022beer,islamov2024near}, local updates \citep{huangstochastic}, bilevel optimization \citep{he2024distributed}, and reinforcement learning \citep{mitra2023temporal, adibi2024stochastic, beikmohammadi2024compressed}. Additionally, \citet{richtarik20223pc,makarenko2022adaptive,islamov2023distributed} expanded EF analysis to a broader class of 3PC compression operators, encompassing contractive compressors as a special case. Recent works analyzed the EF mechanism as a special case of biased gradient descent in the single-node setting \citep{ajalloeian2020convergence, demidovich2023guide} while \citet{richtarik2024error} improved the constant dependencies in the rate of \algname{EF21}.

\algname{EF21} variant of EF has been analyzed in the context of $(L_0, L_1)$-smooth optimization \citep{khirirat2024error}, which is different from our non-smoothness since $(L_0, L_1)$-smoothness implies smoothness on any compact set and failure examples as in \Cref{example:div_ef21} cannot happen under such assumption. On the other hand, if not limited to compact set the gradients under $(L_0, L_1)$-smoothness can grow when $x \rightarrow \infty$. 






\newpage

\section{Failure of \algnametitle{CGD} and \algnametitle{EF21} in Non-smooth Convex Setting}
\paragraph{Proof of \Cref{example:div_CGD}. Non-convergence of \algnametitle{CGD}.}
\begin{proof}
	Consider a $2$-dimensional problem $f_i(x) = \|x\|_1$, $f(x) = \frac{1}{n} \sum_{i=1}^n f_i(x)$ with $f(x_*) = 0$. Set the initial vectors $x^0 = (\gamma/2, -1)^{\top}$ and consider \algname{CGD} \eqref{eq:CGD} with Top-$1$ compressor. 
    
    The proof for the case when $\gamma = 0$ is trivial. We consider the case when $\gamma > 0$. In this case, the function is differentiable at every point of its trajectory, and for any $t\geq 0$ it holds that 
	$$
	x^t = \begin{pmatrix} \frac{\gamma (-1)^t }{2}  \\ -1 \end{pmatrix} , \qquad 	\partial f_i(x^t) = \left\{\begin{pmatrix} (-1)^t  \\ -1  \end{pmatrix} \right\} . 
	$$ 
	The base of induction ($t=0$) is trivial. For the induction step, we make the calculation
	$$
	x^{t+1} = x^t - \gamma g^t = \begin{pmatrix} \frac{\gamma (-1)^t }{2}  \\ -1 \end{pmatrix} - \gamma \operatorname{Top-1} \begin{pmatrix} (-1)^t  \\ 1  \end{pmatrix}   = \begin{pmatrix} \frac{\gamma (-1)^{t+1} }{2}  \\ -1 \end{pmatrix} ,
	$$
	where in the last step, Top-$1$ operator always selects the first coordinate since the entries are equal in absolute value. It remains to compute the function value at these iterates $f(x^t)$ to conclude the proof.
\end{proof}

We remark that divergence issues of gradient methods using biased compressors were previously raised in \citep{Karimireddy_SignSGD}. However, their examples only apply to Sign operator, while we are mainly interested in the behavior of Top-$K$ compressor for distributed optimization. Thus, a different construction is required to capture the interplay of Top-$K$ compressor with non-smoothness of $f$. Another divergence example using Top-$K$ is shown by \citet{beznosikov2023biased}, however, their example is smooth, strongly convex and the key effect is different, since their divergence happens due to heterogeneity. Finally, \citet{fatkhullin2024momentum} show an example of divergence of \algname{EF21} in the stochastic setting, which is also different since their function is smooth, strongly convex and the divergence occurs due to noise.   

\paragraph{Proof of \Cref{example:div_ef21}. Divergence of \algnametitle{EF21}.} 

\begin{proof}
	Similarly to the proof of Example~\ref{example:div_CGD}, we consider a $2$-dimensional problem $f_i(x) = \|x\|_1$, $f(x) = \frac{1}{n} \sum_{i=1}^n f_i(x)$ with $f(x_*) = 0$. Set the initial vectors $x^0 = (\gamma/2, -1)^{\top}$ , $v_i^0 = (1, 1)^{\top}$, and consider \algname{EF21} \eqref{eq:EF21} with Top-$1$ compressor. 
    
    The proof for the case when $\gamma = 0$ is trivial. 
    We consider the case when $\gamma > 0$. In this case the function is differentiable at every point of its trajectory and for any $t\geq 0$ it holds that 
	$$
	x^t = \begin{pmatrix} \frac{\gamma (-1)^t }{2}  \\ -1 - t \, \gamma \end{pmatrix} , \qquad 	\partial f_i(x^t) = \left\{\begin{pmatrix} (-1)^t  \\ -1  \end{pmatrix} \right\} , \qquad 	v_i^t = \begin{pmatrix} (-1)^t  \\ 1  \end{pmatrix}  . 
	$$ 
	The base of induction ($t=0$) is trivial. For the induction step, we make the calculation
	$$
	x^{t+1} = x^t - \gamma \, v^t = \begin{pmatrix} \frac{\gamma (-1)^t }{2}  \\ -1 - t \, \gamma \end{pmatrix} - \gamma \begin{pmatrix} (-1)^t  \\ 1  \end{pmatrix}   = \begin{pmatrix} \frac{\gamma (-1)^{t+1} }{2}  \\ -1 - (t+1) \, \gamma \end{pmatrix} ,
	$$
	$$
	v^{t+1} = v_i^{t+1} = \begin{pmatrix} (-1)^t  \\ 1  \end{pmatrix} + \operatorname{Top-1}\left( \begin{pmatrix} (-1)^{t+1}  \\ -1  \end{pmatrix} - \begin{pmatrix} (-1)^t  \\ 1  \end{pmatrix} \right)  = \begin{pmatrix} (-1)^{t+1}  \\ 1  \end{pmatrix},
	$$	
	where in the last step, Top-$1$ operator selects the first coordinate since the entries are equal in absolute value. It remains to compute the function value at these iterates $f(x^t)$ to conclude the proof.
\end{proof}

\newpage
\section{Convergence Upper Bound for \algnametitle{EF21} in Smooth Convex Setting}\label{sec:Proj_EF21}
In this section, we consider \algname{EF21} method with projection 
\begin{align}
    \text{\algname{Projected-EF21}} \qquad \qquad &
    \begin{aligned}
        x^{t+1} &= \Pi_{\cX}( x^t - \gamma \, v^t) , \qquad v^t = \frac{1}{n} \sum_{i=1}^n v_i^t,  \\
        v_i^{t+1} &= v_i^t + \mathcal{C}(\nabla f_i(x^{t+1}) - v_i^t) .
    \end{aligned}
    \label{eq:Proj_EF21_Proj}
\end{align}

where $\Pi_{\cX}$ is a projection operator on a convex set $\cX$. This method was proposed and analyzed earlier in \cite{fatkhullin2021ef21} for non-convex smooth problems. In \Cref{example:div_ef21}, we showed that this algorithm is not suitable for non-smooth optimization because it diverges even in a simple convex example like $\|x\|_1$. While this algorithm was extensively studied for smooth non-convex problems, we are not aware of any convergence results for this algorithm under convexity (with convergence in the function value). To close this gap and complement the failure example of this method in \Cref{example:div_ef21} in non-smooth convex case, we provide the convergence result for this method in smooth convex setting.

\begin{theorem}\label{thm:Proj_EF21}
Let each $f_i(\cdot)$ be differentiable and $L_i$-smooth on $\cX$ for all $i= 1, \ldots, n$, i.e., $\|\nabla f_i(x) - \nabla f_i(y)\| \leq L_i \|x - y\|$ for all $x, y \in \cX$, and let $f(\cdot)$ be convex over a convex compact set $\cX \subseteq \R^d$ with diameter $R_{\cX}$. Then for any $T \geq 1$ \algname{Projected-EF21} with stepsize $\gamma \leq \frac{\delta}{2 \sqrt{6} L }$ satisfies
$$
\mathbb{E} \left[f(x^T) - f(x^*) \right] \leq \frac{R_{\cX}^2 }{\gamma T} \left( 1 + \log\left(\frac{\gamma \Lambda_0 T}{R_{\cX}^2}\right) \right) , 
$$
where $\Lambda_0 := f(x^0) - f(x^*) + \frac{1}{\sqrt{6} L } \|g^0 - \nabla f(x^0) \|^2$, and $L := \sqrt{\frac{1}{n} \sum_{i=1}^n L_i^2}$.
\end{theorem}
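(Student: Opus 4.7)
The plan is to combine two standard ingredients from the EF21 literature---a contraction lemma for the gradient-distortion quantity $G^t := \frac{1}{n}\sum_{i=1}^n \|v_i^t - \nabla f_i(x^t)\|^2$ and a projected-gradient descent inequality---into a Lyapunov function whose telescoping yields the claimed rate.

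First, I would establish the EF21 contraction lemma. Applying the contractive property $\mathbb{E}\|\mathcal{C}(y) - y\|^2 \leq (1-\delta)\|y\|^2$ to $y = \nabla f_i(x^{t+1}) - v_i^t$, followed by Young's inequality with the optimal balancing parameter $s = \delta/(2(1-\delta))$ and $L_i$-smoothness of each $f_i$, yields, after averaging over workers,
\[
    \mathbb{E}\bigl[G^{t+1}\bigm|\mathcal{F}_t\bigr] \;\leq\; \Bigl(1 - \tfrac{\delta}{2}\Bigr)\,G^t \;+\; \frac{2L^2}{\delta}\,\|x^{t+1} - x^t\|^2 ,
\]
with $L = \sqrt{\tfrac{1}{n}\sum_i L_i^2}$ as defined. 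Second, I would derive a projected descent inequality from the three-point projection identity applied with $z = x^*$, combined with $L$-smoothness and convexity of $f$: after splitting the residual $-2\gamma\langle v^t - \nabla f(x^t),\, x^{t+1} - x^*\rangle$ into the $x^{t+1} - x^t$ and $x^t - x^*$ components, applying Young's inequality to the former (which is absorbed by the $\|x^{t+1}-x^t\|^2$ terms) and Cauchy--Schwarz with $\|x^t - x^*\| \leq R_{\mathcal{X}}$ to the latter, one obtains
\[
    2\gamma\,[f(x^{t+1}) - f(x^*)] \;\leq\; \|x^t - x^*\|^2 - \|x^{t+1} - x^*\|^2 - (1-\gamma L)\|x^{t+1}-x^t\|^2 + c_1\gamma^2 G^t + 2\gamma R_{\mathcal{X}}\sqrt{G^t}.
\]

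Third, I would form the Lyapunov function $\Phi^t := \|x^t - x^*\|^2 + a\,G^t$ with $a \asymp \gamma^2/\delta$. The stepsize threshold $\gamma \leq \delta/(2\sqrt{6}\,L)$ emerges precisely from requiring that, after invoking the EF21 contraction, both the $G^t$-coefficient and the $\|x^{t+1}-x^t\|^2$-coefficient in $\mathbb{E}[\Phi^{t+1}] - \Phi^t$ are simultaneously non-positive. This gives the one-step bound
\[
    \mathbb{E}[\Phi^{t+1}] \;\leq\; \mathbb{E}[\Phi^t] - 2\gamma\,\mathbb{E}[f(x^{t+1}) - f^*] + 2\gamma R_{\mathcal{X}}\sqrt{\mathbb{E}[G^t]},
\]
which telescopes to an averaged-iterate bound of order $R_{\mathcal{X}}^2/(\gamma T) + (R_{\mathcal{X}}/T)\sum_t\sqrt{\mathbb{E}[G^t]}$.

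The main technical obstacle, and the source of the logarithmic factor in the final rate, lies in sharply controlling $\sum_{t=0}^{T-1}\sqrt{\mathbb{E}[G^t]}$. Unrolling the contraction expresses $\mathbb{E}[G^t]$ as a geometric mixture of the initial distortion $G^0$ and past squared displacements $\|x^{s+1}-x^s\|^2$; the latter are themselves tied, via smoothness, to $f(x^s) - f^*$, producing a self-referential inequality. I would close the loop by running in parallel the ``non-convex-style'' EF21 Lyapunov $\Psi^t := f(x^t) - f^* + \tfrac{1}{\sqrt{6}L}G^t$ (this is exactly $\Lambda_0$ at $t = 0$), which gives the bound $\sum_t\|\nabla f(x^t)\|^2 \leq \Lambda_0/\gamma$. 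Feeding this back into the displacement estimate and splitting the time horizon at a threshold $t^\star \sim \delta^{-1}\log(\gamma \Lambda_0 T/R_{\mathcal{X}}^2)$---where the transient $\sqrt{G^0}$-contribution and the smoothness-driven ``floor'' exactly balance---produces the $1+\log(\gamma\Lambda_0 T/R_{\mathcal{X}}^2)$ factor. The final transfer from the running average to the last iterate $x^T$ uses the approximate monotonicity of $\{\mathbb{E}[f(x^t)]\}$ along the projected trajectory, which follows from smoothness and the bound on $\|x^{t+1}-x^t\|$. I expect Step~4, the simultaneous handling of the self-referential $G^t$ bound and the extraction of the correct logarithmic constant, to be the most delicate part of the argument.
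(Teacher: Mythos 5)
Your route is genuinely different from the paper's, and the difference is not cosmetic: the key step of your plan does not close. In Step 2 you fix $z = x^*$, telescope $\|x^t - x^*\|^2$, and pay for the gradient distortion via Cauchy--Schwarz, producing the term $2\gamma R_{\cX}\sqrt{G^t}$, i.e., a penalty \emph{linear} in the distortion. Everything downstream then hinges on showing $\frac{1}{T}\sum_{t}\sqrt{\mathbb{E}[G^t]} = \widetilde{\cO}(1/T)$. But the only handle you have on the displacements is $\sum_t \mathbb{E}\|x^{t+1}-x^t\|^2 \lesssim \gamma\Lambda_0$ (from the non-convex-style Lyapunov $\Psi^t$), and feeding this through the unrolled contraction gives, after Cauchy--Schwarz over $t$, $\sum_t\sqrt{\mathbb{E}[G^t]} \lesssim \delta^{-1}\sqrt{G^0} + (L/\delta)\sqrt{T\gamma\Lambda_0}$. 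The second term contributes $\Theta\bigl(R_{\cX}L\sqrt{\gamma\Lambda_0}/(\delta\sqrt{T})\bigr)$ to the averaged suboptimality --- an irreducible $1/\sqrt{T}$ floor. Your time-splitting at $t^\star$ only kills the transient $\sqrt{G^0}$ piece; the steady-state piece driven by the displacements is the real obstruction, and the self-referential inequality it produces has its fixed point at rate $1/\sqrt{T}$, not $\widetilde{\cO}(1/T)$. (Replacing Cauchy--Schwarz by Young's inequality with any parameter $\lambda$ on the cross term $\langle \nabla f(x^t)-v^t,\, x^t-x^*\rangle$ gives the same $1/\sqrt{T}$ geometric mean.) The final transfer ``average $\to$ last iterate via approximate monotonicity'' is also unsubstantiated, since $v^t$ is not a descent direction for $f$ and the per-step increase is again controlled only by the distortion.

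The paper avoids this entirely by never evaluating the three-point inequality at $x^*$. It takes $z = (1-\lambda)x^t + \lambda x^*$, so that $\|x^t - z\| \le \lambda R_{\cX}$ and $f(z) \le (1-\lambda)f(x^t) + \lambda f(x^*)$; the cross term $\langle \nabla f(x^t)-v^t,\, x^{t+1}-z\rangle$ is then absorbed by Young's inequality into the \emph{available} $-\frac{1}{2\gamma}\|x^{t+1}-z\|^2$ term, so the distortion enters only \emph{quadratically} as $\frac{\gamma}{2}V_t$ and is handled by the standard Lyapunov $\Lambda_t = \Delta_t + \frac{2\gamma}{\delta}V_t$ (your Step 1 contraction is essentially the paper's, up to constants). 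This yields the linear recursion $\Lambda_{t+1} \le (1-\lambda)\Lambda_t + \lambda^2 R_{\cX}^2/\gamma$, whose unrolling with $\lambda = \min\{\delta/4,\ \frac{1}{T}\log(\gamma\Lambda_0 T/R_{\cX}^2)\}$ produces both the logarithmic factor and a direct last-iterate bound. To repair your argument you would need to replace Step 2 by this interpolation (``shrinking target'') device; the distance-telescoping framework with $z=x^*$ cannot reach the stated rate.
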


\begin{remark}
    The current stepsize restriction is $\gamma \le \frac{\delta}{2\sqrt{6}L},$ where $L$ is the quadratic mean of the smoothness constants $L_i$. This restriction can be further improved by following the results in \citet{richtarik2024error}, which requires weighting workers' contributions by non-uniform constants. This leads to the improved step-size (and eventually improved rate) of the form $\gamma \le \cO(1/\overline{L})$, where $\overline{L} = \frac{1}{n}\sum_{i=1}^nL_i$, since $\overline{L} \le L$ always holds.
\end{remark}

Before we move to the proof of this result, a few comments are in order. First, if we set $\gamma = \frac{\delta}{2 \sqrt{6} L }$, this theorem implies $\widetilde{\mathcal O}\left( \frac{L R_{\cX}^2 }{\delta T }\right)$ convergence rate for \algname{Projected-EF21}, where $\widetilde{\mathcal O}$ hides numerical constants and a logarithmic term. This convergence rate recovers (up to a logarithmic factor) the rate of subgradient descent when $\delta = 1$ (no compression), and is $1/\delta$ times worse in the presence of compression. This is consistent with rates in non-convex and strongly convex settings~\citep{richtarik2021ef21,fatkhullin2021ef21}. We believe the logarithmic factor can be removed by a more careful choice of parameter $\lambda$ in the proof below. Second, the compactness of the set $\cX$ is critical in the analysis of the method, it would be interesting to explore if this requirement can be removed. Finally, the extension of this method to stochastic setting is possible by replacing $\nabla f_i(x^{t+1})$ with a large mini-batch or momentum estimator, however, a batch-free version of this method may not converge due to a counter-example in \citep{fatkhullin2024momentum}. 

\begin{proof}
Since each $f_i$ is $L_i$-smooth, it follows that $f(x) = \frac{1}{n} \sum_{i=1}^n f_i(x)$ is $L$-smooth with $L = \sqrt{\frac{1}{n} \sum_{i=1}^n L_i^2}$. Next, we follow the proof technique similar to Theorem 8 in \citep{fatkhullin2023stochastic}. By smoothness of $f$, we have for any $z \in \cX$

\begin{eqnarray*}
 f\left(x^{t+1}\right) & \leq& f\left(x^t\right)+\left\langle \nabla f\left(x^t\right), x^{t+1}-x^t\right\rangle+\frac{L}{2}\left\|x^{t+1}-x^t\right\|^2 \\
& = & f\left(x^t\right)+\left\langle v^t, x^{t+1}-x^t\right\rangle+\frac{1}{2 \gamma}\left\|x^{t+1}-x^t\right\|^2 + \langle\nabla f\left(x^t\right)-v^t, x^{t+1}-x^t \rangle -\left(\frac{L}{2}-\frac{1}{2 \gamma}\right)\left\|x^{t+1}-x^t\right\|^2  \\
& \leq &  f\left(x^t\right)+\frac{1}{2 \gamma}\left\|x^t-z\right\|^2-\frac{1}{2 \gamma}\left\|x^{t+1}-z\right\|^2+\left\langle v^t, z-x^t\right\rangle \\
&& \qquad + \left\langle\nabla f\left(x^t\right)-v^t, x^{t+1}-x^t\right\rangle-\left(\frac{L}{2}-\frac{1}{2 \gamma}\right)\left\|x^{t+1}-x^t\right\|^2 =: (*) ,
\end{eqnarray*}
where the last inequality follows by the update rule of the algorithm. Next, rearranging we get 
\begin{eqnarray*}
    (*) &=& f\left(x^t\right) +\frac{1}{2 \gamma}\left\|x^t-z\right\|^2-\frac{1}{2 \gamma}\left\|x^{t+1}-z\right\|^2+\left\langle\nabla f\left(x^t\right), z-x^t\right\rangle \\
&& \qquad +\left\langle\nabla f\left(x^t\right)-v^t, x^{t+1}-z\right\rangle-\left(\frac{L}{2}-\frac{1}{2 \gamma}\right)\left\|x^{t+1}-x^t\right\|^2 \\
& \leq & f\left(x^t\right)+\frac{1}{2 \gamma}\left\|x^t-z\right\|^2-\frac{1}{2 \gamma}\left\|x^{t+1}-z\right\|^2+\left\langle\nabla f\left(x^t\right), z-x^t\right\rangle \\
& &\qquad +\frac{\gamma}{2}\left\|v^t-\nabla f\left(x^t\right)\right\|^2+\frac{1}{2 \gamma}\left\|x^{t+1}-z\right\|^2-\left(\frac{L}{2}-\frac{1}{2 \gamma}\right)\left\|x^{t+1}-x^t\right\|^2  \\
& = & f\left(x^t\right)+\frac{1}{2 \gamma}\left\|x^t-z\right\|^2 + \left\langle\nabla f\left(x^t\right), z-x^t\right\rangle + \frac{\gamma}{2}\left\|v^t-\nabla f\left(x^t\right)\right\|^2 - \left(\frac{L}{2}-\frac{1}{2 \gamma}\right)\left\|x^{t+1}-x^t\right\|^2  ,
\end{eqnarray*}
where we used Young's inequality $\langle a, b \rangle \leq \frac{\gamma}{2} \|a\|^2 + \frac{2}{\gamma} \|b\|^2$ for any $a, b \in \R^d$. Using (lower curvature) smoothness of $f$, we derive

\begin{eqnarray*}
 f\left(x^{t+1}\right) &\leq& f(z) + \left( \frac{1}{2 \gamma} + \frac{L}{2} \right) \left\|x^t-z\right\|^2 + \frac{\gamma}{2}\left\|v^t-\nabla f\left(x^t\right)\right\|^2  - \left(\frac{L}{2}-\frac{1}{2 \gamma}\right)\left\|x^{t+1}-x^t\right\|^2 \\
&\leq& f(z)+\frac{1}{\gamma}\left\|x^t-z\right\|^2+\frac{\gamma}{2}\frac{1}{n}\left\|v_i^t-\nabla f_i\left(x^t\right)\right\|^2-\left(\frac{L}{2}-\frac{1}{2 \gamma}\right)\left\|x^{t+1}-x^t\right\|^2 , 
\end{eqnarray*}
where the last inequality holds since $\gamma \leq 1/L$. Now we fix some $\lambda \in[0,1]$ and select $z=(1-\lambda)\, x^t+\lambda x_* \in \cX$, where $x_* \in \cX_*$. By convexity of $f(\cdot)$, we have 
$$
f(z) \leq (1-\lambda) f(x^t)+\lambda f(x_*) - \frac{\lambda (1-\lambda)}{2L} \|\nabla f(x^t) - \nabla f(x_*)\|^2 \leq (1-\lambda) f(x^t)+\lambda f(x_*) . 
$$
Moreover, $\left\|x^t-z\right\|=\lambda\left\|x^t-x_x\right\| \leq \lambda \, R_{\cX}$, where $R_{\cX}=\max _{x, y \in \cX}\|x-y\|$. Thus, we get for any $\lambda \in[0,1]$
\begin{equation}\label{eq:HC_descent}
f(x^{t+1}) - f(x_*) \leq (1-\lambda) (f(x^{t}) - f(x_*)) + \frac{\lambda^2 R_{\cX}^2}{\gamma}+\frac{\gamma}{2} V_t-\left(\frac{L}{2}-\frac{1}{2 \gamma}\right)\|x^{t+1}-x^t\|^2 .
\end{equation}
For a contractive compressor we have $\mathbb{E}\|\mathcal C(x) - x\|^2 \leq (1-\delta)\|x\|^2$ for some $\delta \in (0,1]$. Let $V_{t,i} := \mathbb{E}\|g_i^t - \nabla f_i(x^t)\|^2$, $V_t := \frac{1}{n} \sum_{i=1}^n V_{t,i}$. Then
\begin{eqnarray*}
    V_{t+1, i} &=& \mathbb{E}\|g_i^{t+1} - \nabla f_i(x^{t+1})\|^2 = \mathbb{E}\|\mathcal C(\nabla f_i(x^{t+1}) - g_i^t) + g_i^t - \nabla f_i(x^{t+1})\|^2 \nonumber\\
    &\leq& (1-\delta)\mathbb{E}\| g_i^t - \nabla f_i(x^{t+1}) \|^2 \nonumber\\
    &\leq& (1-\delta)\left( 1+ \frac{\delta}{2} \right) \mathbb{E}\| g_i^t - \nabla f_i(x^{t}) \|^2 + \left(1+
    \frac{2}{\delta}\right)\mathbb{E}\| \nabla f_i(x^{t+1}) - \nabla f_i(x^t) \|^2 \nonumber\\
    &\leq& \left( 1 - \frac{\delta}{2} \right) V_{t, i} + \frac{3 L_i^2}{\delta}\mathbb{E}\|x^{t+1} - x^t\|^2. 
\end{eqnarray*}
By averaging for $i = 1, \ldots, n$, we get
\begin{equation}\label{eq:ef21_recursion}
V_{t+1} \leq \left( 1 - \frac{\delta}{2} \right) V_{t} + \frac{3 L^2}{\delta}\mathbb{E}\|x^{t+1} - x^t\|^2.
\end{equation}

Define $\Delta_t:= \mathbb{E}[f(x^t)-f(x_*)]$, then adding \eqref{eq:HC_descent} $+\frac{2}{\delta}$ times \eqref{eq:ef21_recursion} and taking $\gamma \leq \frac{\delta}{2 \sqrt{6} L }$, we have
$$
\begin{aligned}
\Lambda_{t+1} &:= \Delta_{t+1}+\frac{2 \gamma}{\delta} V_{t+1}\\
&\leq(1-\lambda)\Delta_t+\frac{\gamma}{2} V_t+\frac{2\gamma}{\delta}\left(1-\frac{\delta}{2}\right)V_t + \frac{\lambda^2}{\gamma}R_{\cX}^2 - \left(\frac{L}{2}-\frac{1}{2\gamma}+\frac{3L^2\cdot 2\gamma}{\delta}\right)\mathbb{E}\|x^{t+1}-x^t\|^2\\
&=(1-\lambda)\Delta_t+\frac{2\gamma}{\delta}\left(1-\frac{\delta}{2}+\frac{\gamma}{2}\frac{\delta}{2\gamma}\right)V_t + \frac{\lambda^2}{\gamma}R_{\cX}^2 - \left(\frac{L}{2}-\frac{1}{2\gamma}+\frac{3L^2}{\delta}\frac{2\gamma}{\delta}\right)\mathbb{E}\|x^{t+1}-x^t\|^2 \\
&\leq (1-\lambda)\Delta_t + \frac{2\gamma}{\delta}\left(1-\frac{\delta}{4}\right) V_t + \frac{\lambda^2}{\gamma}R_{\cX}^2 \\
&\leq (1-\lambda)\Lambda_t + \frac{\lambda^2}{\gamma}R_{\cX}^2 ,
\end{aligned}
$$
where in the last step we assume the choice $\lambda \leq \delta / 4$. Finally, we unroll the recursion for $t=0,1,\ldots,T-1$ and setting $\lambda = \min\left\{ \frac{\delta}{4} ; \frac{1}{N}\log\left(\frac{\gamma \Lambda_0 N}{R_{\cX}^2}\right) \right\}$, we derive
\begin{eqnarray*}
\Lambda_T &\leq& (1-\lambda)^T\Lambda_0 + \left(\sum_{t=0}^{T-1}(1-\lambda)^t\right)\frac{\lambda^2 R_{\cX}^2}{\gamma} \leq  (1-\lambda)^T\Lambda_0 + \frac{\lambda R_{\cX}^2}{\gamma}\\
&=& \exp(T \log(1-\lambda)) \Lambda_0 + \frac{\lambda R_{\cX}^2}{\gamma} \leq \exp\left(-\log\left(\frac{\gamma \Lambda_0 T}{R_{\cX}^2}\right) \right) \Lambda_0 + \frac{\lambda R_{\cX}^2}{\gamma}\\
&\leq& \frac{R_{\cX}^2 }{\gamma T} + \frac{R_{\cX}^2 }{\gamma T} \log\left(\frac{\gamma \Lambda_0 T}{R_{\cX}^2}\right).
\end{eqnarray*}

\end{proof}

\newpage

\section{Convergence Upper Bound for \algnametitle{Safe-EF} with Bidirectional Compression}\label{sec:proof_bidirectional}

The analysis uses the ``virtual iterates'' framework, which is often used in the literature \citep{stich2019error,koloskova2023gradient,mishchenko2023partially,islamov2024asgrad}. Define the virtual iterates $\hat{x}^t \eqdef w^t - \gamma e^t$ with $\hat{x}^0 = x^0.$ Note that then we have $\hat{x}^{t+1} = \hat{x}^t - \gamma h^t$. Indeed, assume that it is true at iteration $t$, then
\begin{equation}\label{eq:def_x_hat_t}
    \hat{x}^{t+1} = w^{t+1} - \gamma e^{t+1} = (w^{t} - \gamma v^t) - \gamma (e^t + h^t - v^t) = (w^t- \gamma e^t) - \gamma h^t = \hat{x}^t - \gamma h^t.
\end{equation}
We additionally define $\hat{e}^t \eqdef w^t - x^t$, an error that appears due to down-link (server to worker) compression.

\begin{lemma}\label{lem:lemma2_bi}
    For any $x\in\R^d$, the following inequality holds
    \begin{align*}
        \sum_{t\in\cB} \gamma(f(x^t)-f(x))
        + \sum_{t\in\cN} \gamma[c - g(x)]
        &\le \frac{1}{2}\|x^0 - x\|^2
        + \frac{1}{2}\sum_{t=0}^{T-1}\gamma^2\|h^t\|^2
        + \sum_{t=0}^{T-1}\gamma^2 \|h^t\|\cdot \|e^t\|\notag\\
        &\quad +\; \sum_{t=0}^{T-1}\gamma \|h^t\|\cdot \|\hat e^t\|
    \end{align*}
\end{lemma}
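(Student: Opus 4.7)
The plan is to use the virtual-iterate identity $\hat{x}^{t+1} = \hat{x}^t - \gamma h^t$ to obtain a standard ``descent-in-the-squared-distance'' expansion, and then to isolate the iterates $x^t$ that actually drive the convexity argument. Concretely, I will expand
$$\|\hat{x}^{t+1} - x\|^2 = \|\hat{x}^t - x\|^2 - 2\gamma\langle h^t, \hat{x}^t - x\rangle + \gamma^2 \|h^t\|^2,$$
rearrange to isolate $2\gamma\langle h^t, \hat{x}^t - x\rangle$, and then split
$$\langle h^t, \hat{x}^t - x\rangle = \langle h^t, x^t - x\rangle + \langle h^t, \hat{x}^t - x^t\rangle.$$
The second term is the ``price'' of working with the virtual iterate rather than the true iterate, and this is where the two error sources enter.

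The key identity is $\hat{x}^t - x^t = (w^t - \gamma e^t) - x^t = \hat{e}^t - \gamma e^t$, where I use the definition $\hat{e}^t = w^t - x^t$ of the downlink error. By Cauchy--Schwarz and the triangle inequality,
$$|\langle h^t, \hat{x}^t - x^t\rangle| \le \gamma \|h^t\|\,\|e^t\| + \|h^t\|\,\|\hat{e}^t\|,$$
which already accounts precisely for the two error terms appearing on the right-hand side of the lemma (multiplied by $\gamma$ once the overall factor is tracked). I then sum the resulting inequality over $t = 0,\dots,T-1$, telescope the squared distances, drop the non-negative boundary term $\|\hat{x}^T - x\|^2$, and use $\hat{x}^0 = x^0$.

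For the left-hand side, I split the sum according to the definition of $\cB$ and its complement $\cN$. By construction of Algorithm~\ref{alg:ef14}, for $t\in \cB$ the aggregated subgradient is $h^t = f'(x^t)$, and by convexity of $f$ (Assumption~\ref{asmp:convexity}),
$\langle f'(x^t), x^t - x\rangle \ge f(x^t) - f(x)$. For $t \in \cN$, we have $h^t = g'(x^t)$ and $g(x^t) > c$, so by convexity of $g$,
$$\langle g'(x^t), x^t - x\rangle \ge g(x^t) - g(x) \ge c - g(x).$$
Combining the two lower bounds, dividing by $2$, and plugging the Cauchy--Schwarz bound on the error terms gives exactly the claimed inequality.

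The main obstacle is bookkeeping for the two error sources: unlike the unidirectional case (which produces only the $\gamma^2\|h^t\|\|e^t\|$ term), bidirectional compression creates the extra $\hat{e}^t$ gap between $w^t$ and $x^t$, and one must be careful that this gap is measured in the right scale (i.e., an extra factor of $\gamma$, not $\gamma^2$, since $\hat{e}^t$ is a difference of primal iterates rather than an accumulated gradient-type quantity). Once the identity $\hat{x}^t - x^t = \hat{e}^t - \gamma e^t$ is in hand, the rest is the standard subgradient-method telescoping argument with the convexity step applied separately on $\cB$ and $\cN$.
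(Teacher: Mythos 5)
Your proposal is correct and follows essentially the same route as the paper's proof: the same virtual-iterate expansion of $\|\hat{x}^{t+1}-x\|^2$, the same decomposition of $\hat{x}^t - x^t$ into the downlink error $\hat{e}^t$ and the scaled uplink error $\gamma e^t$ (the paper writes this as $\langle h^t, x^t - w^t\rangle + \langle h^t, w^t - \hat{x}^t\rangle$, which is the same split up to sign), and the same telescoping plus convexity applied separately on $\cB$ and $\cN$. No gaps.
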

\begin{proof}
    From the update rule \eqref{eq:def_x_hat_t}, we have
    \begin{equation*}
        \|\hat x^{t+1} - x\|^2 = \|\hat x^t - x\|^2 
        - 2\gamma\<h^t, \hat{x}^t - x>
        + \gamma^2\|h^t\|^2.
    \end{equation*}
    Rewriting the above, we get
    \begin{align*}
        2\gamma\<h^t, x^t-x> &= \|\hat x^t - x\|^2
        - \|\hat x^{t+1} - x\|^2
        + \gamma^2\|h^t\|^2 + 2\gamma\<h^t, x^t - w^t>
        + 2\gamma\<h^t, w^t - \hat x^t>
        \notag \\
        &\leq \|\hat x^t - x\|^2
        - \|\hat x^{t+1} - x\|^2
        + \gamma^2\|h^t\|^2 + 2\gamma^2 \|h^t\| \|e^t\|
        + 2\gamma \| h^t \| \|\hat e^t \| .
        \notag 
    \end{align*}
    Summing up both sides, we derive 
   
    \begin{align*}
        2\sum_{t=0}^{T-1} \gamma\<h^t, x^t - x> 
        &\le \| x^0 - x\|^2 
        - \|\hat x^T - x\|^2 
        + \sum_{t=0}^{T-1}\gamma^2 \|h^t\|^2
        + 2\sum_{t=0}^{T-1}\gamma^2\|h^t\|\cdot \|e^t\| + 2\sum_{t=0}^{T-1}\gamma\|h^t\|\cdot \|\hat e^t\|
        . \notag  
    \end{align*}
   
    Dropping the non-negative term $\|\wtilde x^T - x\|^2$ and using $\hat{x}^0 = x^0$ we obtain
    \begin{equation*}
        2\sum_{t=0}^{T-1} \gamma\<h^t, x^t - x> 
        \le \| x^0 - x\|^2 
        + \sum_{t=0}^{T-1}\gamma^2 \|h^t\|^2
        + 2\sum_{t=0}^{T-1}\gamma^2\|h^t\|\cdot \|e^t\|
        + 2\sum_{t=0}^{T-1}\gamma\|h^t\|\cdot \|\hat e^t\|.
    \end{equation*}
    Now we split the sum over $\cN$ and $\cB$. For $t\in\cB$ we have, $h^t_i = f^\prime_i(x^t),$ i.e. $h^t = f^\prime (x^t),$ and for $t\in\cN$ $h_i^t = g^\prime_i(x^t),$ i.e. $h^t = g^\prime(x^t).$ Therefore, armed with the convexity of $f$ and $g$ we have
    \begin{align*}
        \<f^\prime(x^t), x^t - x> &\ge f(x^t) - f(x), \quad \forall k \in\cB,\\
         \<g^\prime(x^t), x^t - x> &\ge g(x^t) - g(x) \ge c - g(x), \quad \forall k \in\cN.
    \end{align*}
    Therefore, we have
    \begin{align*}
         \sum_{t\in\cB} \gamma(f(x^t)-f(x))
        + \sum_{t\in\cN} \gamma[c - g(x)] 
        &\leq \sum_{t\in\cB}\gamma\<f^\prime(x^t), x^t-x>
        + \sum_{t\in\cN}\gamma\<g^\prime(x^t), x^t-x>\notag \\
        &\le 
        \frac{1}{2}\| x^0 - x\|^2 
        + \frac{1}{2}\sum_{t=0}^{T-1}\gamma^2 \|h^t\|^2
        + \sum_{t=0}^{T-1}\gamma^2\|h^t\|\cdot \|e^t\|
        + \sum_{t=0}^{T-1}\gamma\|h^t\|\cdot \|\hat e^t\|.
    \end{align*}
\end{proof}

We now present the main convergence theorem, providing explicit bounds under appropriate conditions on $\gamma$ and $c.$ To do so, we need to define $\overline{x}^T$ as follows
\begin{eqnarray}\label{eq:def_x_bar_t_bi}
    \overline{x}^T \eqdef \frac{1}{\sum_{t\in\cB}\gamma}\sum_{t\in\cB}\gamma x^t = \frac{1}{|\cB|}\sum_{t\in\cB}x^t.
\end{eqnarray}
\begin{lemma}
    Suppose that the stepsize $\gamma$ and threshold $c$ satisfy
    \begin{equation}\label{eq:dnqwjdnqwqwdw}
        \frac{T}{2}\gamma c > \frac{1}{2}R^2
        + \frac{1}{2}M^2\gamma^2 T
        + M^2\gamma^2\frac{2\sqrt{1-\delta}}{\delta}T
        + M^2\gamma^2\frac{2\sqrt{10(1-\delta_{\rm s})}}{\delta_{\rm s}\delta}T.
    \end{equation}
    Then we have 
    \begin{equation}\label{eq:bkjnkjq3}
        \gamma\E{\sum_{t\in\cB} f(x^t) - f(x^*)} + \gamma\E{\sum_{t\in\cN}c-g(x^*)} \le \frac{1}{2}R^2
        + \frac{1}{2}M^2\gamma^2 T
        + 2M^2\gamma^2\frac{2\sqrt{1-\delta}}{\delta}T
        + 2M^2\gamma^2\frac{\sqrt{10(1-\delta_{\rm s})}}{\delta_{\rm s}\delta}T.
    \end{equation}
    Moreover, suppose that \eqref{eq:bkjnkjq3} holds. Then $\cB$ is non-empty, i.e. $\overline{x}^T$ is well-defined, and one of the two following conditions holds
    \begin{enumerate}
        \item $|\cB| \ge \frac{T}{2},$ or 
        \item $\gamma\E{\sum_{t\in\cB}  f (x^t) - f(x^*)} \le 0.$
    \end{enumerate}
\end{lemma}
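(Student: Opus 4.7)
The plan is to mirror the unidirectional argument while carefully tracking the additional downlink compression error $\hat e^t \eqdef w^t - x^t$. First I would instantiate \Cref{lem:lemma2_bi} at $x = x^*$, take expectations, and bound $\|h^t\| \le M$ from Assumption~\ref{asmp:bounded_grad}. Since $g(x^*) \le 0$, this reduces the target inequality \eqref{eq:bkjnkjq3} to controlling $\EE[\|e^t\|]$ and $\EE[\|\hat e^t\|]$ uniformly in $t$; any such uniform bound translates directly into the corresponding $\gamma^2 T$ term on the right-hand side.

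For the uplink error I would reuse the recursion behind the \algname{EF14} analysis: applying Young's inequality with $\eta = \delta/(2(1-\delta))$ to $e_i^{t+1} = (e_i^t + h_i^t) - \cC_i(e_i^t + h_i^t)$ and summing the resulting geometric series gives $\EE\|e^t\|^2 \le \tfrac{4(1-\delta)}{\delta^2}M^2$, after which Jensen's inequality produces the $\sqrt{1-\delta}/\delta$ factor appearing in \eqref{eq:bkjnkjq3}.

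The main new step, and the hardest part, is bounding the downlink error. From the update $x^{t+1} = x^t + \cC_0(w^{t+1}-x^t)$ we have $\hat e^{t+1} = (w^{t+1}-x^t) - \cC_0(w^{t+1}-x^t)$, so contractivity of $\cC_0$ yields $\EE\|\hat e^{t+1}\|^2 \le (1-\delta_{\rm s})\EE\|\hat e^t - \gamma v^t\|^2$. A Young split with $\eta_{\rm s} = \delta_{\rm s}/(2(1-\delta_{\rm s}))$ turns this into a contracting recursion of the form $\EE\|\hat e^{t+1}\|^2 \le (1-\tfrac{\delta_{\rm s}}{2})\EE\|\hat e^t\|^2 + \tfrac{2(1-\delta_{\rm s})\gamma^2}{\delta_{\rm s}}\EE\|v^t\|^2$. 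The obstacle is that $\|v^t\|$ is only controlled through a contractive compressor, not directly bounded; for this I would use $\|\cC_i(y)\|^2 \le 2\|\cC_i(y)-y\|^2 + 2\|y\|^2$ so that $\EE\|v_i^t\|^2 \le (4-2\delta)\EE\|e_i^t+h_i^t\|^2 \le \cO(M^2/\delta^2)$ via the uplink bound above. Unrolling the downlink recursion yields $\EE\|\hat e^t\|^2 = \cO\!\bigl(\tfrac{(1-\delta_{\rm s})\gamma^2 M^2}{\delta_{\rm s}^2\delta^2}\bigr)$, and Jensen then delivers precisely the $\sqrt{10(1-\delta_{\rm s})}/(\delta_{\rm s}\delta)$ factor of \eqref{eq:bkjnkjq3} once constants are tracked.

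For the dichotomy claim I would argue by contradiction, exactly as in the unidirectional case. Assuming \eqref{eq:bkjnkjq3}, suppose $\cB = \varnothing$; then the left-hand side is at least $T\gamma c$ while the right-hand side is strictly smaller than $\tfrac{T}{2}\gamma c$ by \eqref{eq:dnqwjdnqwqwdw}, a contradiction, so $\cB \neq \varnothing$ and $\overline{x}^T$ is well-defined. Finally, if $\gamma\EE\bigl[\sum_{t\in\cB}(f(x^t)-f(x^*))\bigr] > 0$, then using $g(x^*) \le 0$ and dropping this positive term from the LHS of \eqref{eq:bkjnkjq3} gives $\gamma|\cN|c \le \mathrm{RHS}$; combined with $|\cN| = T-|\cB|$ and the hypothesis \eqref{eq:dnqwjdnqwqwdw} this forces $|\cN| < T/2$, i.e.\ $|\cB| \ge T/2$, yielding the stated alternative.
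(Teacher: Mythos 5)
Your proposal is correct and follows essentially the same route as the paper: instantiate the descent lemma at $x=x^*$, bound the uplink error $\EE\|e^t\|^2$ by the standard geometric recursion, bound the downlink error $\EE\|\hat e^t\|^2$ via the contractive recursion driven by $\EE\|v^t\|^2=\cO(M^2/\delta^2)$ (your split $\|\cC_i(y)\|^2\le 2\|\cC_i(y)-y\|^2+2\|y\|^2$ is the same device the paper uses on the averaged vector), and then run the two contradiction arguments for the dichotomy. Only the exact numerical constants remain to be tracked, and the paper itself is not fully consistent about those.
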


\begin{proof} Let us use $x=x^*$ in \Cref{lem:lemma2_bi}. Taking the expectation and using the fact that $\|h^t\| \le M$, we get 
    \begin{align}\label{eq:qwdqklokgmwpnqwe}
        \E{\gamma\sum_{t\in\cB} f(x^t) - f(x^*) } + \E{\gamma\sum_{t\in\cN}c-g(x^t)} &\le \frac{1}{2}R^2 
        + \frac{1}{2}M^2\sum_{t=0}^{T-1}\gamma^2
        + M\sum_{t=0}^{T-1}\gamma^2\E{\|e^t\|}\\
        &\quad +\; M\sum_{t=0}^{T-1}\gamma\E{\|\hat e^t\|}.
    \end{align}
    Using the properties of the compressors $\{\cC_i\}_{i=1}^n$, we get by induction\footnote{The base of induction obviously holds since $\|e_i^0\| = 0$.} that (with the choice $\eta = \frac{\delta}{2(1-\delta)}$)
    \begin{align*}
        \E{\|e^{t+1}\|^2} &= \E{\left\|\frac{1}{n}\sum_{i=1}^ne_i^{t+1}\right\|^2} 
        \le \frac{1}{n}\sum_{i=1}^n \E{\|e_i^{t+1}\|^2}
        = \frac{1}{n}\sum_{i=1}^n\E{\|e_i^t + h_i^t - \cC_i(e_i^t + h_i^t)\|^2}\notag\\
        &\le \frac{1-\delta}{n}\sum_{i=1}^n\E{\|e_i^t + h_i^t\|^2}\notag\\
        &\le (1-\delta)\left(1 + \eta\right)\frac{1}{n}\sum_{i=1}^n\E{\|e^t_i\|^2}
        + (1-\delta)\left(1 + \eta^{-1}\right)M^2\notag\\
        &\le \sum_{l=0}^{t}[(1-\delta)(1+\eta)]^{t-l}(1-\delta)(1+\eta^{-1})M^2\notag\\
        &\le \frac{(1-\delta)(1+\eta^{-1})}{1 - (1-\delta)(1+\eta)}M^2
        = \frac{(1-\delta)(1+\eta^{-1})}{\delta - \eta(1-\delta)}M^2
        = \frac{2(1-\delta)(1+\eta^{-1})}{\delta}M^2 \le \underbrace{\frac{4(1-\delta)}{\delta^2}M^2}_{\eqcolon C^2}.
    \end{align*}
    Similarly, we bound $\E{\|\hat e^t\|^2}$
    \begin{align}
        \E{\|\hat e^{t+1}\|^2} &= \E{\|w^{t+1} - x^{t+1}\|^2}
        = \E{\|w^{t+1} - x^t - \cC(w^{t+1} - x^t)\|^2}\notag\\
        &\le (1-\delta_{\rm s})\E{\|w^{t+1} - x^t\|^2}\notag \\
        &= (1-\delta_{\rm s})\E{\|w^t - \gamma v^t - x^t\|^2} = (1-\delta_{\rm s})\E{\|\hat e^t - \gamma v^t\|^2}\notag\\
        &\le (1-\delta_{\rm s})(1+\hat\eta)\E{\|\hat e^t\|^2} 
        + (1-\delta_{\rm s})(1+\hat\eta^{-1})\gamma^2\E{\|v^t\|^2}.\label{eq:tnjinqfjnfqoqwer}
    \end{align}
    Note that 
    \begin{align*}
         \E{\left\|\frac{1}{n}\sum_{i=1}^n e_i^t + h_i^t\right\|^2} 
        &\le \frac{2}{n}\sum_{i=1}^n\E{\|e_i^t\|^2} + \E{\|h_i^t\|^2}\notag\\
        &\le \frac{2}{n}\sum_{i=1}^n \left(\frac{4(1-\delta)}{\delta^2}M^2 + M^2\right) 
        = 2M^2\frac{4(1-\delta)+\delta^2}{\delta^2}
        \le \frac{10M^2}{\delta^2}.
    \end{align*}
    Therefore, 
    \begin{align*}
        \E{\|v^t\|^2} &\leq \frac{2}{n}\sum_{i=1}^n\E{\left\| v^t_i -  (e_i^t + h_i^t)\right\|^2} + 2  \E{\left\|\frac{1}{n}\sum_{i=1}^n (e_i^t + h_i^t)\right\|^2} \notag \\
        &\le 2 (1-\delta) \frac{1}{n}\sum_{i=1}^n\E{\left\| e_i^t + h_i^t\right\|^2}  + \frac{2}{n}\sum_{i=1}^n\E{\left\| e_i^t + h_i^t\right\|^2}  \notag\\
        &\le \frac{8}{n}\sum_{i=1}^n\E{\| e_i^t\|^2}  + \frac{8}{n}\sum_{i=1}^n \E{\|h_i^t\|^2}  \notag\\
        &\leq \frac{40M^2}{\delta^2}. \notag
    \end{align*}
    Then we continue \eqref{eq:tnjinqfjnfqoqwer} as follows
    \begin{align*}
         \E{\|\hat e^{t+1}\|^2} &\le \sum_{l=0}^t[(1-\delta_{\rm s})(1+\hat{\eta})]^{t-l}(1-\delta_{\rm s})(1+\hat{\eta}^{-1})\gamma^2\cdot \frac{40M^2}{\delta^2}\notag\\
         &\le \frac{(1-\delta_{\rm s})(1+\hat{\eta}^{-1})}{1-(1-\delta_{\rm s})(1+\hat{\eta})}\gamma^2\cdot \frac{40M^2}{\delta^2}\notag\\
         &\le \gamma^2\underbrace{\frac{160(1-\delta_{\rm s})M^2}{\delta_{\rm s}^2\delta^2}}_{\eqdef B^2},
    \end{align*}
    i.e. $\E{\|e^t\|} \le C$ and $\E{\|\hat e^t\|} \le \gamma B.$ Therefore, we continue \eqref{eq:qwdqklokgmwpnqwe} as follows
    \begin{align}\label{eq:ytrewq}
        \E{\gamma\sum_{t\in\cB} f(x^t) - f(x^*) } + \E{\gamma\sum_{t\in\cN}c-g(x^t)} &\le \frac{1}{2}R^2 
        + \frac{1}{2}M^2\gamma^2T
        + M^2\gamma^2\frac{2\sqrt{1-\delta}}{\delta}T\notag\\
        &\quad +\; M^2\gamma^2\frac{4\sqrt{10(1-\delta_{\rm s})}}{\delta_{\rm s}\delta}T.
    \end{align}
    Assume that $\cB=\emptyset$, then we have using the fact that  $g(x^*) \le 0$ 
    \begin{align*}
        T\gamma c \le \frac{1}{2}R^2
        + \frac{1}{2}M^2\sum_{t=0}^{T-1} \gamma^2
        + M \sum_{t=0}^{T-1}\gamma^2\|e^t\|
        + M^2\gamma^2\frac{2\sqrt{1-\delta}}{\delta}T
        + M^2\gamma^2\frac{4\sqrt{10(1-\delta_{\rm s})}}{\delta_{\rm s}\delta}T.
    \end{align*}
    This contradicts the assumption of the lemma \eqref{eq:dnqwjdnqwqwdw}. Therefore, we must have $\cB \neq \emptyset.$
    If we have 
    \begin{equation*}
        \gamma\E{\sum_{t\in\cB} f(x^t) - f(x^*)} \le 0,
    \end{equation*}
    then part 2. holds automatically. If we have the opposite, i.e.
    \begin{eqnarray*}
        \gamma\E{\sum_{t\in\cB} f(x^t) - f(x^*)} > 0,
    \end{eqnarray*}
    then from \eqref{eq:ytrewq} we have 
    \begin{equation*} 
        \gamma\E{\sum_{t\in\cN}(c-g(x^*))} \le \frac{1}{2}R^2 
        + \frac{1}{2}M^2\gamma^2T
        + M^2\gamma^2\frac{2\sqrt{1-\delta}}{\delta}T
        + M^2\gamma^2\frac{4\sqrt{10(1-\delta_{\rm s})}}{\delta_{\rm s}\delta}T.
    \end{equation*}
    Since $g(x^*) \le 0$, we have $c - g(x^*) \ge c$. Therefore, we have 
    \begin{equation}\label{eq:bmvwoefoqef}
        \E{\sum_{t\in\cN}\gamma c} \le \frac{1}{2}R^2 
        + \frac{1}{2}M^2\gamma^2T
        + M^2\gamma^2\frac{2\sqrt{1-\delta}}{\delta}T
        + M^2\gamma^2\frac{4\sqrt{10(1-\delta_{\rm s})}}{\delta_{\rm s}\delta}T.
    \end{equation}
    Assume $|\cB| < \frac{T}{2},$ this means that $|\cN| \ge \frac{T}{2}.$ Therefore, from \eqref{eq:bmvwoefoqef} we derive 
    \begin{equation*}
        \frac{T}{2}\gamma c \le \E{\sum_{t\in\cN}\gamma c} \le \frac{1}{2}R^2 
        + \frac{1}{2}M^2\gamma^2T
        + M^2\gamma^2\frac{2\sqrt{1-\delta}}{\delta}T
        + M^2\gamma^2\frac{4\sqrt{10(1-\delta_{\rm s})}}{\delta_{\rm s}\delta}T,
    \end{equation*}
    which contradicts \eqref{eq:dnqwjdnqwqwdw}. Therefore, $|\cB| \ge \frac{T}{2},$ i.e. part $1.$ holds.

\end{proof}

Now we are ready to prove our main theorem.
    \begin{theorem}\label{th:ef14_bi_proof_in_exp}
        Suppose that $\gamma$ and $c$ are chosen such that \eqref{eq:dnqwjdnqwqwdw} holds. Then we have 
        \begin{align*}
            \E{f(\overline{x}^T) - f(x^*)} &\le \frac{R^2}{\gamma T} 
        + M^2\gamma
        + 4M^2\gamma\frac{\sqrt{1-\delta}}{\delta}
        + 8M^2\gamma\frac{\sqrt{10(1-\delta_{\rm s})}}{\delta_{\rm s}\delta},\\
            \E{g(\overline{x}^T)} \le c.
        \end{align*}
    \end{theorem}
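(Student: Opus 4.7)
The plan is to combine the two conclusions of the preceding lemma with Jensen's inequality applied to the averaged iterate $\overline{x}^T$ defined in \eqref{eq:def_x_bar_t_bi}. Since condition \eqref{eq:dnqwjdnqwqwdw} is assumed, the lemma guarantees both that \eqref{eq:bkjnkjq3} holds in expectation and that one of its two dichotomous conclusions holds: either $|\cB|\ge T/2$, or $\gamma\E{\sum_{t\in\cB} f(x^t)-f(x^*)}\le 0$. In either branch we will deduce the claimed bound on $\E{f(\overline{x}^T)-f(x^*)}$.

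First I would handle the easy case: if $\gamma\E{\sum_{t\in\cB} f(x^t)-f(x^*)}\le 0$, then by convexity of $f$ and Jensen's inequality applied to $\overline{x}^T=\frac{1}{|\cB|}\sum_{t\in\cB} x^t$, we immediately get $\E{f(\overline{x}^T)-f(x^*)}\le 0$, which is trivially dominated by the right-hand side of the theorem. Otherwise, the lemma yields $|\cB|\ge T/2$. In this branch, since $g(x^*)\le 0$, the constraint-violation sum $\gamma\E{\sum_{t\in\cN}(c-g(x^*))}$ in \eqref{eq:bkjnkjq3} is nonnegative and can be dropped, leaving
\begin{equation*}
\gamma\,\E{\sum_{t\in\cB} f(x^t)-f(x^*)} \le \tfrac{1}{2}R^2 + \tfrac{1}{2}M^2\gamma^2 T + 2M^2\gamma^2\tfrac{2\sqrt{1-\delta}}{\delta}T + 2M^2\gamma^2\tfrac{\sqrt{10(1-\delta_{\rm s})}}{\delta_{\rm s}\delta}T.
\end{equation*}
Dividing by $\gamma|\cB|\ge \gamma T/2$ and applying Jensen's inequality to the convex function $f$ on the average $\overline{x}^T$ produces the bound claimed in the theorem (the factor $2/T$ doubles each term, matching the stated constants).

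For the constraint bound, I would use that by definition of $\cB$ we have $g(x^t)\le c$ for every $t\in\cB$, so by convexity of $g$ and Jensen's inequality applied to the average $\overline{x}^T=\frac{1}{|\cB|}\sum_{t\in\cB}x^t$, we obtain $g(\overline{x}^T)\le \frac{1}{|\cB|}\sum_{t\in\cB}g(x^t)\le c$; taking expectations yields $\E{g(\overline{x}^T)}\le c$. Note this step only makes sense once we know $\cB\neq\emptyset$, which is guaranteed by the preceding lemma under \eqref{eq:dnqwjdnqwqwdw}.

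I do not expect a significant obstacle here: all the hard work (building the virtual-iterates telescoping bound, controlling $\E{\|e^t\|}$ and $\E{\|\hat e^t\|}$ via the recursions for worker and server errors, and the dichotomy argument that rules out the empty-$\cB$ and small-$\cB$ cases) is already encapsulated in Lemma~\ref{lem:lemma2_bi} and the preceding lemma. The proof of the theorem is essentially an algebraic unpacking of \eqref{eq:bkjnkjq3} combined with Jensen's inequality. The only subtle point worth verifying carefully is that the numerical constants in \eqref{eq:bkjnkjq3} propagate correctly to match the constants $1$, $4$, $8\sqrt{10}$ appearing in the final stated bound after the division by $\gamma T/2$.
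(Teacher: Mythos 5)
Your proposal is correct and follows essentially the same route as the paper: case split on the lemma's dichotomy, drop the nonnegative constraint-violation sum using $g(x^*)\le 0$, divide by $\gamma|\cB|\ge\gamma T/2$, and apply Jensen to both $f$ and $g$. The only caveat you already flagged yourself: the constants in \eqref{eq:bkjnkjq3} as displayed do not literally reproduce the stated $4$ and $8\sqrt{10}$ after multiplying by $2/(\gamma T)$ (the paper's own proof actually uses the tighter bound from its intermediate display \eqref{eq:ytrewq}, which does), but this is a bookkeeping discrepancy internal to the paper rather than a gap in your argument.
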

    \begin{proof}
        We start by using the results of \Cref{lem:lemma2_bi}. Using convexity of $f$ and Jensen inequality we get that if part $2.$ of \Cref{lem:lemma2_bi} holds, we have 
        \begin{equation*}
            \E{f(\overline{x}^T) - f(x^*)} \le 0.
        \end{equation*}
        If part $2.$ does not hold, then we must have $|\cB| \ge \frac{T}{2}.$ Since $g(x^*) \le 0$, from \eqref{eq:bkjnkjq3} we obtain
        \begin{equation*}
        \gamma\E{\sum_{t\in\cB}f(x^t) - f(x^*)} \le \frac{1}{2}R^2
        + \frac{1}{2}M^2\gamma^2 T
        + M^2\gamma^2\frac{2\sqrt{1-\delta}}{\delta}T
        + M^2\gamma^2\frac{4\sqrt{10(1-\delta_{\rm s})}}{\delta_{\rm s}\delta}T.
    \end{equation*}
    This implies that 
    \begin{align*}
        \E{f(\overline{x}^T) - f(x^*)} 
        &\le \frac{2}{\gamma T}\left(\frac{1}{2}R^2
        + \frac{1}{2}M^2\gamma^2 T
        + M^2\gamma^2\frac{2\sqrt{1-\delta}}{\delta}T
        + M^2\gamma^2\frac{4\sqrt{10(1-\delta_{\rm s})}}{\delta_{\rm s}\delta}T.\right)\notag\\
        &= \frac{R^2}{\gamma T} 
        + M^2\gamma
        + 4M^2\gamma\frac{\sqrt{1-\delta}}{\delta}
        + 8M^2\gamma\frac{\sqrt{10(1-\delta_{\rm s})}}{\delta_{\rm s}\delta}.
    \end{align*}
    Since $g(x^t) \le c$ for $t\in\cB$ we get from convexity of $g$ and Jensen inequality that 
    \begin{equation*}
        \E{g(\overline{x}^T)} \le c.
    \end{equation*}
    \end{proof}

    \begin{corollary}\label{cor:stepsize_choice}
        If $\gamma = \frac{R\sqrt{\delta_{\rm s}\delta}}{M\sqrt{T}}$ and $c = \frac{32RM}{\sqrt{\delta_{\rm s}\delta T}}$, then we have 
       \begin{align*}
            \E{f(\overline{x}^T) - f(x^*)} &\le \frac{32MR}{\sqrt{\delta T}},\\
            \E{g(\overline{x}^T)} &\le \frac{32MR}{\sqrt{\delta T}}.
        \end{align*}
    \end{corollary}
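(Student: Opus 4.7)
The plan is to apply Theorem~\ref{th:ef14_bi_proof_in_exp} directly with the proposed $\gamma$ and $c$, which reduces the corollary to two purely algebraic checks. First I would verify the hypothesis~\eqref{eq:dnqwjdnqwqwdw}: the left hand side is
$\tfrac{T}{2}\gamma c = \tfrac{T}{2}\cdot\tfrac{R\sqrt{\delta_{\rm s}\delta}}{M\sqrt T}\cdot\tfrac{32RM}{\sqrt{\delta_{\rm s}\delta T}} = 16R^2$,
while the identity $M^2\gamma^2 T = R^2\delta_{\rm s}\delta$ collapses the right hand side to
$\tfrac12 R^2 + \tfrac12 R^2\delta_{\rm s}\delta + 2R^2\delta_{\rm s}\sqrt{1-\delta} + 2R^2\sqrt{10(1-\delta_{\rm s})}$.
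Using $\delta_{\rm s},\delta\in(0,1]$, this sum is bounded by $(3+2\sqrt{10})R^2 < 16R^2$, so the hypothesis holds and Theorem~\ref{th:ef14_bi_proof_in_exp} is applicable.

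Next I would substitute $\gamma$ into each of the four summands in the conclusion of Theorem~\ref{th:ef14_bi_proof_in_exp}. The optimization term yields $R^2/(\gamma T) = MR/\sqrt{\delta_{\rm s}\delta T}$; the subgradient term $M^2\gamma = MR\sqrt{\delta_{\rm s}\delta}/\sqrt T$ is at most $MR/\sqrt{\delta_{\rm s}\delta T}$ since $\delta_{\rm s}\delta\le 1$; the worker-compression term simplifies to $4MR\sqrt{\delta_{\rm s}(1-\delta)}/\sqrt{\delta T}$, which is at most $4MR/\sqrt{\delta_{\rm s}\delta T}$ as $\delta_{\rm s}\sqrt{1-\delta}\le 1$; and the server-compression term becomes $8\sqrt{10}\,MR\sqrt{1-\delta_{\rm s}}/\sqrt{\delta_{\rm s}\delta T}$, bounded by $8\sqrt{10}\,MR/\sqrt{\delta_{\rm s}\delta T}$. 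Summing gives
$(1+1+4+8\sqrt{10})\,MR/\sqrt{\delta_{\rm s}\delta T} < 32\,MR/\sqrt{\delta_{\rm s}\delta T}$,
which matches the stated bound on $\E{f(\overline{x}^T)-f(x^*)}$ once the factor $\sqrt{\delta_{\rm s}}$ is absorbed into the numerical constant, consistent with the paper's convention of treating $\delta_{\rm s}$ as a constant (see the discussion following Theorem~\ref{th:ef14_bidirectional}). The constraint bound is immediate from Theorem~\ref{th:ef14_bi_proof_in_exp}, which already delivers $\E{g(\overline{x}^T)}\le c$, and the same $\sqrt{\delta_{\rm s}}$ absorption yields the stated form.

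The one delicate piece is arithmetic bookkeeping: I need to check that the feasibility margin $16R^2$ strictly dominates $(3+2\sqrt{10})R^2\approx 9.32\,R^2$, and that the dominating server-compression constant $8\sqrt{10}\approx 25.3$ together with the smaller contributions $1{+}1{+}4$ still fit beneath the claimed constant $32$. Beyond this, no new estimates are required: the heavy lifting (controlling the accumulated worker-side error $\|e^t\|$ and server-side error $\|\hat e^t\|$ in expectation via the contraction parameters $\delta, \delta_{\rm s}$, together with the switching dichotomy that guarantees $|\cB|\ge T/2$) is already packaged in Theorem~\ref{th:ef14_bi_proof_in_exp}.
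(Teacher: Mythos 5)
Your proposal is correct and follows essentially the same route as the paper: verify the stepsize/threshold condition \eqref{eq:dnqwjdnqwqwdw} (with $M^2\gamma^2T=R^2\delta_{\rm s}\delta$ the right-hand side is at most $(3+2\sqrt{10})R^2<16R^2=\tfrac{T}{2}\gamma c$), then substitute $\gamma$ into Theorem~\ref{th:ef14_bi_proof_in_exp} and bound the four terms by $(1+1+4+8\sqrt{10})MR/\sqrt{\delta_{\rm s}\delta T}<32MR/\sqrt{\delta_{\rm s}\delta T}$, with $\E{g(\overline{x}^T)}\le c$ immediate. Your remark about the leftover $\sqrt{\delta_{\rm s}}$ is apt: the paper's own proof likewise ends with $32MR/\sqrt{\delta_{\rm s}\delta T}$ (consistent with Theorem~\ref{th:ef14_bidirectional}), so the $\sqrt{\delta T}$ in the corollary statement is best read as a typo rather than something requiring a separate absorption argument.
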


    \begin{proof}
        Note that $\gamma c = \frac{R\sqrt{\delta_{\rm s}\delta}}{M\sqrt{T}} \frac{32RM}{ \sqrt{\delta_{\rm s}\delta T}} = \frac{32R^2}{T},$ i.e. $\frac{T}{2}\gamma c = 16R^2$, and 
        \begin{align*}
            & \frac{1}{2}R^2
            + \frac{1}{2}M^2\gamma^2 T
            + M^2\gamma^2\frac{2\sqrt{1-\delta}}{\delta}T 
            + M^2\gamma^2\frac{\sqrt{10(1-\delta_{\rm s})}}{\delta_{\rm s}\delta}T \notag\\
            = & \frac{1}{2}R^2 
            + \frac{1}{2}M^2T \frac{R^2\delta\delta_{\rm s}}{M^2T}
            + M^2T\frac{2\sqrt{1-\delta}}{\delta} \frac{R^2\delta\delta_{\rm s}}{M^2T}
            + M^2T\frac{4\sqrt{10(1-\delta_{\rm s})}}{\delta_{\rm s}\delta}\frac{R^2\delta_{\rm s}\delta}{M^2T}\notag\\
            =& \frac{1}{2}R^2 
            + \frac{1}{2}R^2\delta\delta_{\rm s}
            + 2R^2\sqrt{1-\delta}\delta_{\rm s}
            + 4\sqrt{10(1-\delta_{\rm s})}R^2 \le 16R^2.
        \end{align*}
        Therefore, \eqref{eq:dnqwjdnqwqwdw} is satisfied. Hence, we have from \Cref{th:ef14_bi_proof_in_exp}
        \begin{align*}
            \E{f(\overline{x}^T) - f(x^*)} &\le 
            \frac{R^2}{\frac{R\sqrt{\delta_{\rm s}\delta}}{M\sqrt{T}} T} 
            + M^2\frac{R\sqrt{\delta_{\rm s}\delta}}{M\sqrt{T}}
            + 4M^2\frac{R\sqrt{\delta_{\rm s}\delta}}{M\sqrt{T}}\frac{\sqrt{1-\delta}}{\delta}
            + 8M^2\frac{R\sqrt{\delta_{\rm s}\delta}}{M\sqrt{T}}\frac{\sqrt{10(1-\delta_{\rm s})}}{\delta_{\rm s}\delta}\notag\\
            &= \frac{MR}{\sqrt{\delta_{\rm s}\delta T}} 
            + \frac{MR\sqrt{\delta_{\rm s}\delta}}
            {\sqrt{T}}
            + \frac{4MR\sqrt{(1-\delta)\delta_{\rm s}}}{\sqrt{\delta T}}
            + \frac{8MR\sqrt{10(1-\delta_{\rm s})}}{\sqrt{\delta_{\rm s}\delta}}\notag\\
            &\le \frac{32MR}{\sqrt{\delta_{\rm s}\delta T}},
    \end{align*}
    and 
    \begin{align*}
        g(\overline{x}^T) &\le c = \frac{32MR}{\sqrt{\delta_{\rm s}\delta T}}.
    \end{align*}
    \end{proof}

\newpage 

\section{Lower bound under Communication Compression for Non-smooth Convex Setting}\label{sec:lower_bound_uni_proof}

In this section, we establish a lower bound in non-smooth convex setting, assuming workers can compute exact subgradients $f^\prime(x) \in \partial f(x)$ or $g^\prime(x) \in \partial g(x),$ and the compression is the only source of stochasticity in the training. 
First, in the next subsection, we provide some preliminary background on the class of zero-respecting algorithms following the exposition in \citep{huang2022lower}, and justify that our \algname{Safe-EF} method satisfies this general property. In the subsequent \Cref{subsec:lower_bound_unconstrained,subsec:lower_bound_constrained}, we provide the proof of \Cref{th:lower_bound_uni}.

\subsection{Zero-respecting algorithms}

Let $[x]_j$ denote the $j$-th coordinate of a vector $x\in\R^d$ for $j\in[d]$, and define $\prog(x)$ as 
\begin{equation*}
    \prog(x) \eqdef \begin{cases}
        0 &\text{if } x=0;\\
        \max_{1\le j \le d}\{j \colon [x]_j \neq 0\}, &\text{otherwise}.
    \end{cases}
\end{equation*}

Similarly, for a set of points $X = \{x_1,x_2\dots\}$, we define $\prog(X) \eqdef  \max_{x\in X} \prog(x)$. It holds that $\prog(X \cup Y) = \max\{\prog(X), \prog(Y)\}$ for any $X, Y \subseteq \R^d$, and $\prog(X ) \le \prog(\wtilde X)$ for any $X\subseteq \wtilde X \subseteq \R^d$.

We examine a distributed learning framework incorporating communication compression. For each worker $i$ and time step $t\ge 0$, we denote by $y_i^t$ and $z^t_i$ the points at which worker $i$ queries its subgradient (of $f_i$ and/or $g_i$) and function (of $f_i$ and/or $g_i$) oracles, respectively\footnote{We consider deterministic oracles only.}. In more detail, $O_{i,f_i}(y_i^t, z_i^t)$ returns a pair of the subgradient of $f_i^\prime(y_i^t)$ and the function value $f_i(z_i^t),$ namely, 
$$(f_i^\prime(y_i^t), f_i(z_i^t)) \in O_{i,f_i}(y_i^t, z_i^t) \eqdef (O^{\rm sg}_{i,f_i}(y_i^t,z_i^t), O^{\rm fv}_{i,f_i}(y_i^t,z_i^t)) , $$
where $f_i^\prime(y_i^t) \in \partial f_i(y_i^t)$ is an arbitrary selection of subgradient element from subdifferential of $f_i$ at the point $y_i^t.$ We assume similarly the oracle for each constraint function $g_i$, $O_{i,g_i}(y_i^t, z_i^t)$ which returns a pair $(g_i^\prime(y_i^t), g_i(z_i^t)),$ where $g_i^\prime(y_i^t) \in \partial g_i(y_i^t).$ Additionally, $x^t_i$ represents the local model updated by worker $i$ after the $t$-th query. It is important to note that $y^t_i$ and $z^t_i$ are not necessarily equal to the previous local model $x^{t-1}_i$; instead, they may serve as auxiliary vectors.

Between the $(t-1)$-th and $t$-th gradient queries, each worker is allowed to communicate with the server
by transmitting (compressed) vectors. For worker $i$, we let $\cV_{w_i\to s}^t$ denote the set of vectors that worker $i$
aims to send to the server, i.e., the vectors before compression. Due to communication compression, the
vectors received by the server from worker $i$, which we denote by $\cV^{t,\star}_{w_i\to s}$, are the compressed version of $\cV^{t}_{w_i\to s} = \cC_i(\cV^{t,\star}_{w_i\to s})$\footnote{The compression is performed vector-wise.} with some underlying compressors $\cC_i.$ Note that $\cV^t_{w_i\to s}$ is a set that may include
multiple vectors, and its cardinality equals the rounds of communication. After receiving the compressed vectors from all workers, the server will broadcast some vectors back to all workers. We let $\cV^{t}_{s\to w}$ denote the set of vectors that the server aims to send to workers.  Since we consider the setting with unidirectional compression only, then $\cV^{t}_{s\to w}\equiv \cV^{t,\star}_{s\to w}$\footnote{$\cV_{s\to w}^{t,\star} = \cC_0(\cV_{s\to w}^{t})$.}.

We now extend the zero-respecting property \citep{huang2022lower} to distributed learning with communication compression with functional constraints.
\begin{definition}
    We say a distributed algorithm $A$ is zero-respecting if for any $t \ge 0$ and $1 \le  k \le d$, the following requirements are satisfied:
    \begin{enumerate}
        \item If worker $i$ queries at $y^t_i$ and $z^t_i$ with  $[y^t_i]_k \neq 0$, then one of the following must be true:
        \begin{equation*}
            \begin{cases}
                \text{there exists some } 0\le s < t \text{ such that } [x_i^s]_k \neq 0;\\
                \text{there exists some } 1\le s < t \text{ such that } [O_{f_i,i}{}(y_i^s)]_k \neq 0 \text{ or } [O_{g_i,i}(y_i^s)]_k \neq 0;\\
                \text{there exists some } 1\le s < t \text{ such that worker } i \text{ has received some } v\in\cV_{s\to w}^t \text{ with } [v]_k \neq 0;\\
                \text{there exists some } 1\le s < t \text{ such that worker } i \text{ has compressed some } v\in\cV_{w_i\to s}^t \text{ with } [v]_k \neq 0;
            \end{cases}
        \end{equation*}

        \item If the local model $x_i^t$ of worker $i$, after $t$-th query, has $[x_i^t]_k\neq 0$, then one of the following must be true:
        \begin{equation*}
            \begin{cases}
                \text{there exists some } 0\le s < t \text{ such that } [x_i^s]_k \neq 0;\\
                \text{there exists some } 1\le s < t \text{ such that } [O_{f_i,i}{}(y_i^s)]_k \neq 0 \text{ or } [O_{g_i,i}(y_i^s)]_k \neq 0;\\
                \text{there exists some } 1\le s < t \text{ such that worker } i \text{ has received some } v\in\cV_{s\to w}^t \text{ with } [v]_k \neq 0;\\
                \text{there exists some } 1\le s < t \text{ such that worker } i \text{ has compressed some } v\in\cV_{w_i\to s}^t \text{ with } [v]_k \neq 0;
            \end{cases}
        \end{equation*}

        \item If worker $i$ aims to send some $v\in\cV_{w_i\to s}^t$ with $[v]_k\neq 0$, then one of the following must be true:
        \begin{equation*}
            \begin{cases}
                \text{there exists some } 0\le s < t \text{ such that } [x_i^s]_k \neq 0;\\
                \text{there exists some } 1\le s < t \text{ such that } [O_{i,f_i}{}(y_i^s)]_k \neq 0 \text{ or } [O_{i,g_i}(y_i^s)]_k \neq 0;\\
                \text{there exists some } 1\le s < t \text{ such that worker } i \text{ has received some } v^\prime \in\cV_{s\to w}^t \text{ with } [v^\prime ]_k \neq 0;\\
                \text{there exists some } 1\le s < t \text{ such that worker } i \text{ has compressed some } v^\prime \in\cV_{w_i\to s}^t \text{ with } [v^\prime ]_k \neq 0;
            \end{cases}
        \end{equation*}

        \item If the server aims to broadcast some $v\in\cV_{s\to w}^{t}$ with $[v]_k\neq 0$, then one of the following must be true:
        \begin{equation*}
            \begin{cases}
                \text{there exists some } 1\le s < t \text{ and } 1\le i \le n \text{ such that the server} \text{ has received some } v^\prime \in\cV_{w_i\to s}^s \text{ with } [v^\prime ]_k \neq 0;
            \end{cases}
        \end{equation*}
    \end{enumerate}

\end{definition}

\paragraph{\algname{Safe-EF} is zero-respecting.} Fundamentally, the zero-respecting property ensures that any increase in the number of nonzero coordinates in $x^t_i$, $y_i^t$, or other related vectors at worker $i$ stems from its past local gradient updates, local compression operations, or synchronization with the server. Likewise, any expansion of nonzero coordinates in the server's vectors must result from receiving compressed messages from workers. Notably, this definition explicitly prohibits expanding the set of nonzero entries through function value queries of $f_i$ and/or $g_i$. Therefore, our algorithm class excludes zero-order methods. Nevertheless, function values can be used to set a stepsize or coefficients in linear combination to compute local model $x^t_i.$ For instance, in \algname{Safe-EF} function evaluation of $g_i$ are used to define an update direction:
\begin{equation*}
    h_i^t = f_i^\prime(x^t) \, \1(g(x^t) \le c) + g_i^\prime(x^t) \, (g(x^t) > c).
\end{equation*}
In this case, function values are only used to choose which of the directions, $f_i^\prime(x^t)$ or $g_i^\prime(x^t)$, to follow, but they cannot be used to compute the update direction itself.

\subsection{Lower bound in unconstrained case}\label{subsec:lower_bound_unconstrained} 

We first establish the lower bound in unconstrained setting when $g(x) \equiv 0$, which is the most challenging part of the proof. Without loss of generality, we assume that $x^0 = 0$. Given local loss functions $\{f_i\}_{i=1}^n\subseteq \cF_{R,M}$, compressors $\{\cC_i\}_{i=1}^n \subseteq \cC(\delta)$, and an algorithm $A\in\cA_{\{\cC\}_{i=1}^n}^U$ to solve problem~\eqref{eq:problem}, we let $\hat{x}_{A, \{f_i\}_{i=1}^n, \{\cC_i\}_{i=1}^n, T}$ denote the output of algorithm $A$ using no more than $T$ subgradient queries and rounds of communication by each worker node. Let us define the minimax measure in unconstrained case as 
\begin{equation}\label{eq:minimax_measure}
    \inf_{A\in\cA}\sup_{\{\cC_i\}_{i=1}^n \subseteq \cC(\delta)}\sup_{\{f_i\}_{i=1}^n\subseteq \cF_{R,M}} \E{f(\hat{x}_{A, \{f_i\}_{i=1}^n, \{\cC_i\}_{i=1}^n, T}) - f^*}.
\end{equation}
In \eqref{eq:minimax_measure}, we do not require the compressors $\{\cC_i\}_{i=1}^n$ to be distinct or independent. We allow the compression parameter $\delta$ to be accessible by algorithm $A$. Let $[x]_j$ denote the $j$-th coordinate of a vector $x\in\R^d$ for $j\in[d],$ and define $\prog(x)$ as 
\begin{equation*}
    \prog(x) \eqdef \begin{cases}
        0 & \text{ if } x = 0;\\
        \max_{1\le j \le d}\{j: [x]_j \neq 0\} & \text{ otherwise}.
    \end{cases}
\end{equation*}
In other words, $\prog(x)$ outputs the largest coordinate of input $x$ that corresponds to a non-zero entry. Importantly, $\prog(x)$ satisfies $\prog(\cX\cup \cY) = \max\{\prog(\cX), \prog(\cY)\}$ for any $\cX,\cY\in\R^d$, and $\prog(\cX) \le \prog(\wtilde{\cX})$ for any $\cX \subseteq \wtilde{\cX} \subseteq \R^d$ (see, e.g., \citep{huang2022lower}). Now we are ready to state and prove the lower bound stated in the unconstrained setting.

\begin{theorem}[Unconstrained setting]\label{thm:lower_bound_deterministic}
    For any $R, M > 0, n \ge 2,$ $\delta \le 0.3, T \ge \delta^{-2}$ there exist functions $\{f_i\}_{i=1}^n \subseteq \cF_{R,M}$, compressors $\{\cC_i\}_{i=1}^n \subseteq \cC(\delta),$ oracles $\{\cO_{f_i,i}\}_{i=1}^n$, and the starting point $x^0 = 0$ such that for any first-order algorithm $A \in \cA_{\{\cC_i\}_{i=1}^n}^U$ run for $T \le d$ iterations from $x^0$, satisfies
    \begin{equation*}
        \E{f(\hat{x}_{A,\{f_i\}_{i=1}^n, \{\cC\}_{i=1}^n, T } ) - \min_{x\in \R^d}f(x)} \ge \Omega\left(\frac{MR}{\sqrt{\delta T}}\right).
    \end{equation*}
\end{theorem}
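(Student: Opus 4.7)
The plan is to adapt Nesterov's classical lower bound for non-smooth convex optimization so that communication compression constrains the set of coordinates the algorithm can productively discover. Take identical local losses $f_i\equiv f$ with
$$
f(x)\eqdef C\max_{1\le j\le T}[x]_j + \frac{\mu}{2}\|x\|_2\cdot\max\!\left\{\|x\|_2,\tfrac{R}{2}\right\},
$$
where $C=\Theta(M)$ and $\mu=\Theta(M\sqrt{\delta}/R)$ are calibrated so that $f\in \cF_{R,M}$ and the minimizer $x^\star$ satisfies $\|x^0-x^\star\|=R$ for $x^0=0$; the outer quadratic envelope (rather than plain $\|x\|^2$) is what ensures $f$ is globally $M$-Lipschitz and not only locally so.

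Next, I would design an adversarial subgradient oracle together with adversarial compressors. The max-term subgradient is chosen as $e_{j^\star}$ with $j^\star\eqdef\min\{j:[x]_j=\max_k[x]_k\}$, which advances the query-point frontier by at most one coordinate per oracle call. For the compressors, set all $\cC_i\equiv\cC$ to a common randomized mask that preserves coordinates $1,\dots,\prog(v)-1$ exactly and keeps the leading new coordinate $\prog(v)$ with Bernoulli probability $\delta$; a short check gives $\cC\in\mathbb{C}(\delta)$. Sharing a single compressor is the crucial adversarial choice: since $f_i\equiv f$ forces identical subgradients, the $n$ compressed messages are perfectly correlated rather than independent, precluding any $1/\sqrt n$-style variance reduction and thereby explaining the $n$-independence of \eqref{eq:LB_unidirectional}. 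Define the server-side frontier
$$
\Pi^t\eqdef \max\!\Bigl(\prog(\cV^{t,\star}_{s\to w}),\ \max_i \prog(\cV^{t,\star}_{w_i\to s})\Bigr).
$$
An inductive application of the zero-respecting property shows that $\Pi^t$ grows by at most one per round, and only through a Bernoulli($\delta$) success of the mask.

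A Chernoff bound---where $T\ge\delta^{-2}$ makes the $\sqrt{\delta T}$ fluctuations strictly lower order than the mean $\delta T$---then yields $\Pi^T\le 2\delta T$ with probability at least $1/2$. On this event, I would argue that the algorithm's output $\hat x_{A,T}\in\mathrm{span}\{x_i^s\}$ is effectively confined to the coordinates $\{1,\dots,\Pi^T\}$, so Nesterov's standard analysis on the remaining $\ge T/2$ "unseen" directions gives
$$
f(\hat x_{A,T}) - f^\star \ge \Omega\!\left(\frac{CR}{\sqrt{T-\Pi^T}}\right) = \Omega\!\left(\frac{MR}{\sqrt{\delta T}}\right),
$$
and Markov's inequality converts this into the required expected-value bound.

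The main obstacle will be rigorously justifying the "confinement" step: since workers can in principle grow the support of their own local models through their own oracle queries---independently of what reaches the server---one must argue why the algorithm's coefficients in $\hat x_{A,T}=\sum_{s,i}c_{s,i}x_i^s$ cannot exploit this unseen side information. A clean route is a symmetry/coupling argument exploiting that the identical-worker, shared-compressor design makes the $n$ worker trajectories exchangeable in distribution, so that any coefficient vector producing a function-value reduction on coordinates beyond $\Pi^T$ would require information the server has not received; alternatively, one first restricts attention to algorithms whose coefficients are measurable with respect to the server's filtration and then lifts to the general class by averaging over the compressor's randomness. The restrictions $\delta\le 0.3$ and $T\ge\delta^{-2}$ enter precisely to make the Chernoff concentration of $\Pi^T$ tight.
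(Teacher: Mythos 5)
Your hard function, resisting subgradient oracle, shared-randomness compressor, and $\prog$-based frontier bookkeeping all mirror the paper's proof (the paper uses Rand-$K$ with $K=\lceil d\delta\rceil$ and shared randomness where you use a Bernoulli($\delta$) mask on the leading coordinate; both are legitimate). The genuine gap is in the final step, where the $\sqrt{\delta}$ factor must actually be earned. For a \emph{fixed} function of the form $C\max_{1\le j\le T'}[x]_j+(\cdot)$, on the event that the frontier has not reached coordinate $T'$ some coordinate in $\{1,\dots,T'\}$ of $\hat{x}$ is zero, so $\max_{1\le j\le T'}[\hat{x}]_j\ge 0$ and $f(\hat{x})\ge 0$; the suboptimality on that event is therefore exactly $-f^\star=\frac{C^2}{2\mu T'}$, a constant of the construction. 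It does \emph{not} improve to $\Omega\bigl(CR/\sqrt{T-\Pi^T}\bigr)$ as the frontier falls further short — that formula only applies when the function is re-tuned to the shorter horizon. Reading your displayed chain literally, with $C=\Theta(M)$ and $T-\Pi^T=\Theta(T)$ you obtain only $\Omega(MR/\sqrt{T})$, and with your calibration $\mu=\Theta(M\sqrt{\delta}/R)$ the quantity $\frac{C^2}{2\mu T}$ is $\Theta\bigl(\frac{MR}{\sqrt{\delta}\,T}\bigr)$; both are weaker than the target $\Omega(MR/\sqrt{\delta T})$ by at least $\sqrt{\delta}$ (recall $MR/\sqrt{\delta T}\ge MR/\sqrt{T}$). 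Your observation $\Pi^T=O(\delta T)$ is the right one, but to cash it in you must shrink the number of active coordinates in the max term to $T'=\Theta(\delta T)$ — so that the algorithm, which pushes only $O(\delta T)$ coordinates through the $\delta$-bottleneck in $T$ rounds, still misses one with constant probability — and recalibrate $C=\Theta(M)$, $\mu=\Theta\bigl(M/(R\sqrt{T'})\bigr)$, giving $\|x^\star\|=\Theta(R)$ and $-f^\star=\Theta(MR/\sqrt{\delta T})$. (The paper instead keeps $T$ coordinates and rescales $C$ and $\mu$ by $1+\sqrt{\delta T}$; either way the rate comes from calibrating the construction to the effective horizon, not from a ``remaining directions'' estimate.) The expectation bound then follows from $\mathbb{E}[f(\hat{x})-f^\star]\ge\Prob(\text{frontier short})\cdot(-f^\star)$ and nonnegativity of the gap, not from Markov's inequality.

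On the confinement issue you single out as the main obstacle: the paper does not use a symmetry or coupling argument. It resolves it directly through the zero-respecting accounting — each local subgradient call advances a worker's frontier by at most one beyond what that worker has received, the server's frontier advances only when the compressor transmits the leading new coordinate (probability $\approx\delta$ per round, tracked by the recursion $B^{(t)}\le B^{(t-1)}+\1(E^t)$ in its technical lemma), and the output, being a linear combination of local models, satisfies $\prog(\hat{x})\le\max_{t,i}\prog(v^t_{w_i\to s})+1$. So a careful induction on $\prog$ over the communication protocol suffices, and your proposed measurability/averaging detour is not needed.
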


\begin{proof}
    
    \textbf{Step 1.} Let us fix some $R$ and define $\cS \eqdef \left\{x\in\R^d \mid \|x\|_2\le \frac{R}{2}\right\}$. Let $h\colon \R^d \to \R$ be defined as 

    \begin{equation*}
        h(x) \eqdef \begin{cases}
            C\cdot \max\limits_{1\le j\le T} x_j + \frac{\mu}{2}\|x\|^2_2 & \text{ if } x\in\cS,\\
            C\cdot \max\limits_{1\le j\le T} x_j + \frac{\mu R}{4}\|x\|_2 & \text{ if } x\notin\cS.
        \end{cases}
    \end{equation*}
    Here we assume that $T \le d$. The constant $C = \frac{M\sqrt{T}}{1+\sqrt{\delta T}}$ and $\mu = \frac{2M}{R(1+\sqrt{\delta T})}$. This implies that $C = \frac{R\mu\sqrt{T}}{2}.$ Note that it is never optimal to have $[x^*]_j \neq 0$ for $T < j \le d$, and by symmetry, we know that 
    \[
        [x^\star]_1 = \dots = [x^\star]_T.
    \]
    Thus, as long as $C \le \frac{R\mu\sqrt{T}}{2}$ the optimal solution $x^*$ and optimal value of the problem $f^* \eqdef \min_x f(x)$ are given by 
    \[
        [x^*]_j = \begin{cases}
            -\frac{C}{\mu T} & \text{ for } 1\le j \le T,\\
            0 & \text{ for } T < j \le d,
        \end{cases}
        \quad \text{and}
        \quad f^* = - \frac{C^2}{2\mu T}.
    \]
    One can show that the function $h$ is convex. Indeed, this is because taking $\max$ and/or a sum of convex functions preserves convexity. We consider the following subgradient oracle $O_h$ 
    \[
    h^\prime(x) = \begin{cases}
        \mu x + Ce_k & \text{ if } x\in\cS,\\
        \mu R\frac{x}{4\|x\|} + Ce_k & \text{ otherwise},
    \end{cases}
    \]
    where $k$ is the smallest index such that $[x]_k = \max\limits_{1\le j \le T}[x]_j.$  We set $f_i\equiv h$ with $O_i \equiv O_h$ for all $i\in[n].$ Note that the first part of the subgradient (either $\mu x$ or $\mu R\frac{x}{2\|x\|}$) is proportional to $x.$ Therefore, the algorithms are hampered by oracle $O_i$ to reach more non-zero coordinates due to the second part $Ce_k$ only. However, it might increase $\prog(O_i(x))$ at most by one, namely, 
    \begin{equation}\label{eq:nwejfnwekjfnefSD}
        \prog(O_i(x)) \le \prog(x) + 1.
    \end{equation}

    \textbf{Step 2.} Next, we assume that each worker $i$ uses Rand-$K$ compressor with $K = \lceil d\delta\rceil.$ Moreover, we assume that the randomness of the compressors is shared among workers. Then this compressor belongs to $\mathbb{C}(\delta).$ This step ensures there is no speedup of the final rate in the number of workers $n.$

    \textbf{Step 3.} We let $v_{s\to w}^{t}$ be the vector that workers receive from the central server in the $t$-th communication (similar definition is used for $v_{w_i\to s}^t$) and let $x_i^{t}$ be the local model that worker $i$ produces after the $t$-th communication round. Recall that algorithms satisfy the zero-respecting property. Therefore, we find that each worker can only achieve one more non-zero coordinate in the local model by local subgradient updates based on the received messages from the central server. Thus, we have that
    \begin{equation}\label{eq:ghbdwqkodnqw}
        \prog(x_i^{t}) \le \max\limits_{1\le s\le t} \prog(v_{s\to w}^{s}) + 1.
    \end{equation}
    By further noting that vector $v_{s\to w}^{t}$ sent by the central server can be traced back to past vectors received from all workers, we have 
    \begin{equation}\label{eq:sqsdsqscascascasca}
        \prog(v_{s\to w}^{t}) \le \max\limits_{1\le s \le t}\max\limits_{1\le i \le n}\prog(v_{w_i\to s}^{s}).
    \end{equation}
    Combining \eqref{eq:ghbdwqkodnqw} and \eqref{eq:sqsdsqscascascasca}, we reach
    \begin{equation}\label{eq:jfkwdjefwfwe}
        \prog(x_i^{t}) \le \max\limits_{1\le s\le t}\max\limits_{1\le i \le n}\prog(v_{w_i \to s}^{s}) + 1.
    \end{equation}

    \textbf{Step 4.} Let 
    \[
    \hat{x} \in {\rm span}\left(\left\{x_i^{t} \mid 0 \le t \le T, 1 \le i \le n\right\}\right).
    \]
    be the final algorithm output after $T$ subgradient queries on each worker. By \eqref{eq:jfkwdjefwfwe}, we have 
    \begin{equation*}
        \prog(\hat{x}) \le \max\limits_{1\le t \le T}\max_{1\le i \le n}\prog(v_{w_i\to s}^{t}) + 1.
    \end{equation*}
    By \Cref{lem:lemma1_lower_bound_uni}, we have 
    \begin{equation*}
        \Prob(\max\limits_{1\le t\le T}\max\limits_{1\le i \le n}\prog(v_{w_i\to s}^{t}) \ge T - 1) \le \exp\left((e-1)T\lceil d\delta\rceil/ - T + 1\right).
    \end{equation*}
    Note that if $\prog(\hat{x}) < T$ then we have 
    \[
    f(\hat{x}) \ge 0 \Leftrightarrow f(\hat{x}) - f^\star \ge - f^\star = \frac{C^2}{2\mu T}.
    \]
    Therefore, we have 
    \begin{equation*}
        \E{f(\hat{x}) - f^\star} \ge (1-\exp\left((e-1)T\lceil d\delta\rceil/d - T + 1\right))\frac{C^2}{2\mu T}.
    \end{equation*}
    If we let $d = \lfloor 5T\delta \rfloor$ and $T$ to be no less than $\frac{1}{\delta^2}$, we have 
    \[
    d = \lfloor 5T\delta\rfloor \ge 5T\delta - 1 \ge 4T\delta + \frac{1}{\delta^2}\delta - 1 \ge 4T\delta \ge \frac{4}{\delta} \ge 4.
    \]
    Then it is easy to verify 
    \begin{align*}
    (e-1)T\lceil d\delta\rceil/d + 1 - T &\le (e-1)T(d\delta + 1)/d + 1 - T\notag\\
    &= (e-1)T\delta + \frac{(e-1)T}{d} + 1 - T\notag\\
    &\le (e-1)T\delta + (e-1)\frac{T\delta}{4} + 1 - T\notag\\
    &= (e-1)\frac{5T\delta}{4} + 1 - T.
    \end{align*}
    Note that since $\delta \le 0.3$ and $T \ge \frac{1}{\delta^2}$ we have 
    \begin{align*}
        (e-1)\frac{5T\delta}{4} + 1 - T \le -1 \Leftrightarrow T\left(1 - \frac{(e-1)5\delta}{4}\right) \ge 2 \Leftarrow 6.25\cdot \left(1 - \frac{(e-1)5\cdot 0.4}{4}\right) \approx 3.95 > 2.
    \end{align*}
    Since the last inequality holds, then we have $(e-1)\frac{5T\delta}{4} + 1 - T < -1.$ Therefore, this leads to 
    \[
    \E{f(\hat{x}) - f^\star} \ge \Omega\left(\frac{C^2}{2\mu T}\right) = \Omega\left(\frac{M^2T}{(1+\sqrt{\delta T})^2} \frac{1}{2T} \frac{R(1+\sqrt{\delta T})}{2M}\right) = \Omega\left(\frac{MR}{1+\sqrt{\delta T}}\right).
    \]
\end{proof} 

\begin{lemma}[Technical lemma]\label{lem:lemma1_lower_bound_uni}
    In example used in the proof of Theorem~\ref{thm:lower_bound_deterministic}, it holds that 
    \begin{equation*}
        \Prob(\max\limits_{1\le t\le T}\max\limits_{1\le i \le n}\prog(v_{w_i\to s}^{t}) \ge T-1) \le \exp\left((e-1)T\lceil d\delta \rceil/d -T + 1\right).
    \end{equation*}
\end{lemma}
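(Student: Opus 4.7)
The plan is to reduce the claim to a standard Chernoff bound on a sum of conditionally Bernoulli increments, with the per-step success probability coming entirely from the Rand-$K$ sampling.

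First I would introduce the non-decreasing process $Y_t := \max_{1 \le s \le t} \max_{1\le i \le n} \prog(v_{w_i\to s}^s)$ (with $Y_0 = 0$) and argue that $Y_t - Y_{t-1} \in \{0, 1\}$ almost surely. This uses two ingredients: (i) the zero-respecting property, which forces any vector a worker is about to send at round $t$ to lie in the span of messages received earlier, prior local iterates, and fresh oracle outputs; and (ii) the bound $\prog(O_i(x)) \le \prog(x) + 1$ from \eqref{eq:nwejfnwekjfnefSD} guaranteed by the specific ``$C\max_j x_j + \tfrac{\mu}{2}\|x\|^2$'' construction. Induction on $t$ then yields that every vector a worker prepares at round $t$, and therefore also its Rand-$K$-compressed version, has progress at most $Y_{t-1} + 1$.

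Next I would quantify the increment. Because each worker employs Rand-$K$ with $K = \lceil d\delta\rceil$ and the random index set is shared across workers, the coordinate at position $Y_{t-1} + 1$ survives compression (and so is actually transmitted) with probability exactly $p := K/d$, and this event is independent of the filtration $\cF_{t-1}$ generated by previous rounds. Hence $Z_t := Y_t - Y_{t-1}$ satisfies $\Prob(Z_t = 1 \mid \cF_{t-1}) \le p$, which gives
\begin{equation*}
\mathbb{E}\bigl[e^{Z_t} \mid \cF_{t-1}\bigr] \le 1 + p(e - 1) \le \exp\bigl(p(e-1)\bigr).
\end{equation*}

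To finish, I would iterate the tower property to get $\mathbb{E}[e^{Y_T}] = \mathbb{E}\bigl[e^{\sum_{t=1}^T Z_t}\bigr] \le \exp\bigl(Tp(e-1)\bigr)$, and then bound the tail by Markov:
\begin{equation*}
\Prob(Y_T \ge T - 1) \le e^{-(T-1)} \mathbb{E}[e^{Y_T}] \le \exp\bigl((e-1)T \lceil d\delta\rceil / d - T + 1\bigr),
\end{equation*}
which is precisely the claim. The step demanding the most care is the structural one: rigorously propagating the zero-respecting bookkeeping through a round in which a worker may mix arbitrarily many previously-received broadcasts with its new oracle output before selecting something to compress, so that no path exists for progress to jump by more than one in a single round. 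Once that invariant is locked in, the Chernoff step with multiplicative exponent $\lambda = 1$ is routine and delivers the precise constants in the exponent of the lemma.
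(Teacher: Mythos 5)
Your proposal is correct and follows essentially the same route as the paper: the paper tracks $B^{(t)} = \max_{1\le s\le t}\max_i \prog(v^s_{w_i\to s})+1$ (your $Y_t+1$), bounds its increments by the indicator that the relevant coordinate index is selected by the shared Rand-$K$ sampling (a Bernoulli$(K/d)$ event), and concludes with the same MGF-plus-Markov computation yielding the exponent $(e-1)T\lceil d\delta\rceil/d - T + 1$. The only cosmetic difference is that you phrase the increment bound conditionally via the tower property while the paper treats the selection indicators as i.i.d.\ Bernoulli variables; both are valid here since the target coordinate is measurable with respect to the past and the sampling is uniform and independent across rounds.
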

\begin{proof}
    Note that at the $t$-th round of communication where $1\le t\le T$, the non-zero coordinates of $v_{w_i\to s}^{(t,\star)}$, {\it the vector that is to be transmitted by worker $i$ to the server before compression}, are achieved by utilizing previously received vectors $\{v_{s\to w}^{(s)}\colon 1\le s \le t-1\}$ and local subgradient queries. Following the argument in Step 3 of Theorem~\ref{thm:lower_bound_deterministic}, we find that worker $i$ can only achieve one more non-zero coordinate in $v_{w_i\to s}^{(t,\star)}$ by local subgradient updates based on received vectors $\{v_{s\to w}^{(s)} \mid 1 \le s \le t - 1\}.$ Therefore, it holds that 
    \begin{equation}\label{eq:njibfwihebfwiefb}
        \prog(v_{w_i\to s}^{(t,\star)}) 
        \le \max\limits_{1\le s\le t-1}\prog(v_{s\to w}^{(s)}) + 1 
        \le \max\limits_{1\le s \le t-1} \max\limits_{1\le i \le n} \prog(v_{w_i\to s}^{s}) + 1 \eqcolon B^{(t-1)}.
    \end{equation}
    We additionally define $B^{(0)} = 1$. By the definition of $B^{(t)}$ and that 
    \begin{equation}\label{eq:vdcsgerwevwe}
        \prog(v_{w_i\to s}^{t}) \le \prog(v_{w_i\to s}^{(t,\star)}),
    \end{equation} 
    it naturally holds that 
    \begin{align}
        B^{(t-1)} \le B^{(t)} &= \max\limits_{1\le s \le t}\max\limits_{1\le i \le n} \prog(v_{w_i\to s}^{s}) + 1\notag \\
        &= \max\left\{B^{(t-1)}, \max\limits_{1\le i \le n} \prog(v_{w_i\to s}^{t}) + 1\right\}\notag \\
        &\overset{\eqref{eq:vdcsgerwevwe}}{\le} \max\left\{B^{(t-1)}, \max\limits_{1\le i \le n} \prog(v_{w_i\to s}^{(t,\star)}) + 1\right\}\notag\\
        &\overset{\eqref{eq:njibfwihebfwiefb}}{\le} \max\left\{B^{(t-1)}, B^{(t-1)} + 1\right\} \le B^{(t-1)} + 1.\label{eq:efnwefdwfdew}
    \end{align}
    Therefore, one round of communication can increase $B^{(t)}$ at most by $1.$ Moreover, \eqref{eq:efnwefdwfdew} implies that $B^{(t)} = B^{(t-1)} + 1$ only if $\max\limits_{1\le i \le n}\prog (v_{w_i\to s}^{(t,\star)}) = \max\limits_{1\le i \le n} \prog(v_{w_i\to s}^{t}).$ Let $k = \max\limits_{1\le i \le n} \prog(v_{w_i\to s}^{(t,\star)}).$ Recall that the compressors $\{\cC\}_{i=1}^n$ share the randomness, we therefore conclude that having $\max\limits_{1\le i \le n} \prog(v_{w_i\to s}^{t}) = \max\limits_{1\le i \le n}\prog (v_{w_i\to s}^{(t,\star)}) = k$ is equivalent to that coordinate index $k$ is chosen to communicate in communication round $t$, which happens with probability $\frac{K}{d}.$ Therefore, we have 
    \begin{align*}
        \Prob(B^{(t)} = B^{(t-1)} + 1) &\le \Prob(\max\limits_{1\le i \le n} \prog(v_{w_i\to s}^{t}) = \max\limits_{1\le i \le n}\prog (v_{w_i\to s}^{(t,\star)}))\\
        &= \Prob\left(\text{the coordinate index } \max\limits_{1\le i \le n}\prog(v_{w_i\to s}^{(t,\star)}) \text{ is chosen at round } t\right) = \frac{K}{d}.
    \end{align*}
    Let us define the event $E^t = \{\text{the coordinate index } \max\limits_{1\le i \le n}\prog(v_{w_i\to s}^{(t,\star)}) \text{ is chosen at round } t\}$. Since the compression happens uniformly at random, we have $\1(E^{(1)}), \dots, \1(E^t)$ are i.i.d. Be$(\frac{K}{d})$ random variables where $\1(\cdot)$ is the indicator function. By the above argument, we also have $B^{(t)} - B^{(t-1)} \le \1(E^t)$ for any $1\le t\le T.$ As a result, we reach by Markov's inequality 
    \begin{align*}
        \Prob(B^{(T)} \ge T) &= \Prob(e^{B^{(T)}} \ge e^{T})\\
        &\le e^{-T}\E{\exp\left(B^{(T)}\right)}\\
        &=e^{-T}\E{\exp\left(B^{(0)} + \sum_{t=1}^T(B^{(t)} - B^{(t-1)})\right)}\\
        &\le e^{-T}\E{e^{B^{(0)}}}\prod_{t=1}^T\E{\exp\left(\1(E^t)\right)}\\
        &= e^{-(T-1)}\prod_{t=1}^T\left((1-\frac{K}{d})\cdot 1 + \frac{K}{d}\cdot e\right)\\
        &=e^{-(T-1)}\prod_{t=1}^T\left(1 + \frac{K}{d}( e-1)\right)\\
        &\le e^{-(T-1)}\prod_{t=1}^Te^{(e-1)K/d}\\
        &= e^{(e-1)TK/d - T + 1}.
    \end{align*}
    This concludes the proof of the lemma.
\end{proof}

\subsection{Proof of \Cref{th:lower_bound_uni} (constrained case)}\label{subsec:lower_bound_constrained} 

Now we are ready to extend the proof of \Cref{th:lower_bound_uni} to constrained setting based on the construction in \Cref{thm:lower_bound_deterministic}. Notice that the function classes $\cF_{R,M}$ and $\mathcal G_{R, M}$ for objective and constraints have the same properties: convex with $M$-bounded subgradients. Moreover, in the construction of \Cref{thm:lower_bound_deterministic}, all functions $f_i$ are identical and equal to $f$. Thus, we can set $g_i(x) \eqdef f(x) - \min_{y\in \R^d} f(y)$ for all $i\in [n].$ Then such problem is in the class $\mathcal H_{R, M}$ by construction, and it has a unique feasible point, $x^*$, which also coincides with the solution to unconstrained problem $\min_{x\in \R^d} f(x)$. Since $\partial f_i(x) = \partial g_i(x)$ for any $x\in \R^d$ and all $i\in [n]$, the trajectory of zero-respecting algorithm on the unconstrained problem $\min_{x\in \R^d} f(x)$ and the constrained problem $$\min_{x\in \R^d} f(x) \qquad \text{s.t. }\qquad  g(x) \leq 0$$  are identical. Therefore, the statement of \Cref{thm:lower_bound_deterministic} implies the lower bound in \Cref{th:lower_bound_uni}.

\newpage 

\section{Convergence Upper Bound for \algname{Safe-EF} in Stochastic Setting}\label{sec:stochastic_proofs}

We first recall a standard concentration inequality result for sub-Gaussian random vector.

\begin{lemma}[Lemma C.3 from \citet{gorbunov2019optimal}]\label{lem:concentration_lemma} Let $\{\xi_k\}_{k=1}^N$ be the sequence of random vectors with values in $\R^n$ such that 
\begin{equation*}
    \E{\xi_k \mid \xi_{k-1}, \dots, \xi_1} = 0 \text{ a.s. } \forall~k\in \{1,\dots, N\},
\end{equation*}
and set $S_N \eqdef \sum_{k=1}^N\xi_k$. Assume that the sequence is $\{\xi_k\}_{k=1}^N$ are sub-Gaussian, i.e., 
\begin{equation*}
    \E{\exp(\nicefrac{\|\xi_k\|^2}{\sigma_k^2}\mid \xi_{k-1}, \dots, \xi_1} \le \exp(1) \text{ a.s. } \forall~k\in\{1,\dots, N\},
\end{equation*}
where $\sigma_1, \dots, \sigma_N$ are some positive numbers. Then for all $b \ge 0$ we have 
\begin{equation*}
    \Prob\left(\|S_N\| \ge (\sqrt{2}+\sqrt{2}b)\sqrt{\sum_{k=1}^N\sigma^2_k}\right) \le \exp(-\nicefrac{b^2}{3}).
\end{equation*}
    
\end{lemma}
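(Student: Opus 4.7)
The plan is to obtain the tail bound on $\|S_N\|$ by a Chernoff-type argument applied to the squared norm. Writing $\sigma_*^2 := \sum_{k=1}^N \sigma_k^2$, the strategy is to first establish an exponential moment bound
\begin{equation*}
    \E{\exp\!\left(\tfrac{\|S_N\|^2}{c_*\,\sigma_*^2}\right)} \le \exp(1)
\end{equation*}
for a suitable absolute constant $c_* > 0$, and then apply Markov's inequality, yielding $\Prob(\|S_N\|\ge r) \le \exp(1 - r^2/(c_*\sigma_*^2))$. Substituting $r = \sqrt{2}(1+b)\sigma_*$ and using $1 - 2(1+b)^2/c_* \le -b^2/3$ (which holds for $c_* = 3$ once $b \ge \sqrt{5}-2$; for smaller $b$ the claimed bound is trivially at least one) recovers the stated inequality.

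To establish the MGF bound I would proceed by induction on $N$, conditioning on the natural filtration $\mathcal{F}_{k-1}$ generated by $\xi_1,\dots,\xi_{k-1}$. The key decomposition is
\begin{equation*}
    \|S_k\|^2 = \|S_{k-1}\|^2 + 2\langle S_{k-1}, \xi_k\rangle + \|\xi_k\|^2.
\end{equation*}
Conditional on $\mathcal{F}_{k-1}$, the cross term $\langle S_{k-1}, \xi_k\rangle$ is mean zero by the martingale hypothesis and, thanks to Cauchy--Schwarz combined with the norm-sub-Gaussian assumption $\E{\exp(\|\xi_k\|^2/\sigma_k^2) \mid \mathcal{F}_{k-1}} \le e$, is a scalar mean-zero sub-Gaussian random variable with parameter at most $\|S_{k-1}\|\sigma_k$, so that $\E{\exp(t\langle S_{k-1},\xi_k\rangle)\mid\mathcal{F}_{k-1}}\le \exp(C\, t^2\|S_{k-1}\|^2\sigma_k^2)$ for an absolute $C$. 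Meanwhile, for $s\sigma_k^2\le 1$, Jensen's inequality applied to the concave map $y\mapsto y^{s\sigma_k^2}$ gives $\E{\exp(s\|\xi_k\|^2)\mid\mathcal{F}_{k-1}}\le \exp(s\sigma_k^2)$. Separating the cross and quadratic contributions by a conditional Cauchy--Schwarz step then produces a one-step estimate of the form
\begin{equation*}
    \E{\exp(\lambda\|S_k\|^2)\mid\mathcal{F}_{k-1}} \le \exp\!\left(\lambda(1 + c_1 \lambda\sigma_k^2)\|S_{k-1}\|^2 + c_2 \lambda \sigma_k^2\right),
\end{equation*}
valid whenever $\lambda\sigma_k^2$ is sufficiently small.

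The main obstacle is the multiplicative inflation $(1 + c_1\lambda\sigma_k^2)$ in front of $\|S_{k-1}\|^2$: naively iterating the recursion does not preserve a fixed coefficient on $\|S_{k-1}\|^2$. The remedy is to work with a backward sequence $\lambda_N := \lambda$, $\lambda_{k-1} := \lambda_k(1 + c_1 \lambda_k \sigma_k^2)$, and to observe that if $\lambda \le 1/(c_1\sigma_*^2)$ then the inflation is small because $\log(\lambda_{k-1}/\lambda_k) \le c_1 \lambda_k \sigma_k^2$ and $\sum_k c_1 \lambda_k \sigma_k^2 \le \mathcal{O}(c_1 \lambda\,\sigma_*^2) = \mathcal{O}(1)$, so $\lambda_k$ stays within a constant factor of $\lambda$ for all $k$. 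Iterating the one-step estimate from $k = N$ backwards to $k = 0$ (where $S_0 = 0$ trivializes the MGF) then yields $\E{\exp(\lambda\|S_N\|^2)} \le \exp(c_3 \lambda \sigma_*^2)$ for all admissible $\lambda$; choosing $\lambda = 1/(c_*\sigma_*^2)$ with $c_*$ tuned so that $c_3/c_* \le 1$ delivers the target MGF bound and closes the argument.
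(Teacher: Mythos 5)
The paper itself does not prove this statement: it is imported verbatim as Lemma C.3 of \citet{gorbunov2019optimal} (ultimately descending from Juditsky--Nemirovski's large-deviation bound for vector-valued martingales), so there is no in-paper proof to compare against. Your overall strategy --- bound $\E{\exp(\|S_N\|^2/(c_*\sigma_*^2))}$ by a backward induction on the one-step decomposition $\|S_k\|^2=\|S_{k-1}\|^2+2\langle S_{k-1},\xi_k\rangle+\|\xi_k\|^2$ and then apply Markov --- is indeed the standard route for results of this type, and the individual ingredients (Jensen with the concave map $y\mapsto y^{s\sigma_k^2}$, the sub-Gaussian MGF bound for the cross term, the self-consistent control of the inflated parameters $\lambda_k$) are all sound.

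There are, however, two genuine gaps, both quantitative. First, your dismissal of small $b$ is false: for $b\in(0,\sqrt{5}-2)$ the claimed bound $\exp(-b^2/3)$ is strictly less than $1$ (about $0.98$ near the endpoint), so it is not trivial, while your Markov step with $c_*=3$ gives $\exp\left(1-2(1+b)^2/3\right)$, which exceeds $1$ at $b=0$ and hence proves nothing there; covering all $b\ge 0$ requires $c_*\le 2$. Second, and more seriously, the constants do not close. Your one-step estimate is obtained by a Cauchy--Schwarz split of the cross and quadratic terms, which doubles each exponent, and the cross term contributes $c_1=\cO(1)$ with $c_1$ of order $8C$ for the absolute sub-Gaussian constant $C$; the bootstrapping then forces $\lambda\lesssim 1/(c_1\sigma_*^2)$, i.e.\ a value of $c_*$ on the order of $10$--$30$, far above the $c_*\le 2$ needed in the Markov step. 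With such a $c_*$ the inequality $1-2(1+b)^2/c_*\le -b^2/3$ fails for every $b\ge 0$, so the sketch proves a concentration bound with a strictly worse threshold constant, not the stated lemma. Recovering the specific constants $\sqrt{2}(1+b)$ and $\exp(-b^2/3)$ requires a sharper one-step inequality that treats the cross term without the Cauchy--Schwarz split (e.g.\ via the elementary bound $e^x\le x+e^{9x^2/16}$, as in the cited source), so you should either supply that refinement or state and prove the weaker-constant version you can actually reach.
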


We first establish several lemmas. 

\begin{lemma}\label{lem:bound_et}
    Assume that \Cref{asmp:subgaussian_noise} holds. Assume that the compressors $\{\cC_i\}_{i=1}^n$ are deterministic (e.g., Top-$K$). Then for all $t\ge 0$ and $i\in[n]$ we have $\|e_i^t\|^2 \le \frac{4(1-\delta)}{\delta^2}M^2.$
\end{lemma}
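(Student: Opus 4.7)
The statement is a per-worker, pointwise (not in expectation) bound on the accumulated error vector, which relies on the fact that the compressors are deterministic so that the contractive inequality $\|\cC_i(x)-x\|^2 \le (1-\delta)\|x\|^2$ holds without an expectation. The plan is to derive a one-step contraction on $\|e_i^t\|^2$ and then unroll it from the initialization $e_i^0 = 0$.

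First I would write down the recursion for $e_i^{t+1}$ coming straight from the algorithm: $e_i^{t+1} = e_i^t + h_i^t - \cC_i(e_i^t + h_i^t)$. Applying the deterministic contractive property to the point $e_i^t + h_i^t$ yields $\|e_i^{t+1}\|^2 \le (1-\delta)\|e_i^t + h_i^t\|^2$. Next, by Young's inequality with a free parameter $\eta>0$ and the subgradient bound $\|h_i^t\| \le M$ from \eqref{eq:asmp_bounded_subgrad}, this gives
\begin{equation*}
\|e_i^{t+1}\|^2 \le (1-\delta)(1+\eta)\|e_i^t\|^2 + (1-\delta)(1+\eta^{-1})M^2.
\end{equation*}
Choosing $\eta = \delta/(2(1-\delta))$ makes the contraction factor equal to $1-\delta/2 \in (0,1)$, which is the standard tuning used also in the deterministic bidirectional proof in \Cref{sec:proof_bidirectional}.

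Then I would unroll the recursion from $e_i^0 = 0$ and sum the geometric series, obtaining
\begin{equation*}
\|e_i^t\|^2 \le \frac{(1-\delta)(1+\eta^{-1})}{1-(1-\delta)(1+\eta)} M^2 = \frac{2(1-\delta)(1+\eta^{-1})}{\delta} M^2.
\end{equation*}
Substituting $1+\eta^{-1} = (2-\delta)/\delta \le 2/\delta$ collapses the constant to exactly $4(1-\delta)/\delta^2$, giving the claim.

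There is no real obstacle here: the argument is a deterministic version of the contraction-plus-Young calculation already carried out (in expectation) inside the proof of the lemma preceding Theorem~\ref{th:ef14_bidirectional}. The only thing to be careful about is emphasizing where determinism of $\cC_i$ is used, namely that the contractive inequality is applied pointwise rather than in expectation, so that all subsequent bounds hold for every sample path; the stochastic subgradient bound \eqref{eq:asmp_bounded_subgrad} is also a sure bound, not an in-expectation one, which is what lets $\|h_i^t\|\le M$ be used inside the deterministic recursion.
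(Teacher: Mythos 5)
Your proposal is correct and follows essentially the same route as the paper: the identical recursion $e_i^{t+1}=e_i^t+h_i^t-\cC_i(e_i^t+h_i^t)$, the same Young's-inequality split with $\eta=\delta/(2(1-\delta))$, and the same geometric-series unrolling to the constant $4(1-\delta)/\delta^2$. Your remark on where determinism of $\cC_i$ and the sure bound $\|h_i^t\|\le M$ enter is exactly the right point of care, so there is nothing to add.
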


\begin{proof}
    Using the properties of the compressors $\{\cC_i\}_{i=1}^n$, we get by induction\footnote{The base of induction obviously holds since $\|e_i^0\| = 0$.} that (with the choice $\eta = \frac{\delta}{2(1-\delta)}$)
    \begin{align*}
        \|e^{t+1}\|^2 &= \left\|\frac{1}{n}\sum_{i=1}^ne_i^{t+1}\right\|^2
        \le \frac{1}{n}\sum_{i=1}^n \|e_i^{t+1}\|^2
        = \frac{1}{n}\sum_{i=1}^n\|e_i^t + h_i^t - \cC_i(e_i^t + h_i^t)\|^2\notag\\
        &\le \frac{1-\delta}{n}\sum_{i=1}^n\|e_i^t + h_i^t\|^2\notag\\
        &\le (1-\delta)\left(1 + \eta\right)\frac{1}{n}\sum_{i=1}^n\|e^t_i\|^2
        + (1-\delta)\left(1 + \eta^{-1}\right)M^2\notag\\
        &\le \sum_{l=0}^{t}[(1-\delta)(1+\eta)]^{t-l}(1-\delta)(1+\eta^{-1})M^2\notag\\
        &\le \frac{(1-\delta)(1+\eta^{-1})}{1 - (1-\delta)(1+\eta)}M^2
        = \frac{(1-\delta)(1+\eta^{-1})}{\delta - \eta(1-\delta)}M^2
        = \frac{2(1-\delta)(1+\eta^{-1})}{\delta}M^2 \le \frac{4(1-\delta)}{\delta^2}M^2,
    \end{align*}
    which concludes the proof.
\end{proof}

\begin{theorem}\label{th:theorem_stoch}
    Let Assumptions~\ref{asmp:subgaussian_noise} and \ref{asmp:stoch_convexity} hold. Let $\beta\in (0,1)$ be the failure probability. Suppose $\gamma^2w_t \le \frac{n}{32M^2}.$\footnote{This restriction is needed to apply Lemma 2.2 from \citet{liu2023high}.} For every $0\le t \le T-1$ we have 
    \begin{equation*}
        \E{\exp(S_t) \mid \cF_t} \le \exp\left(48M^2\sum_{l=t}^{T-1}\gamma^2w_l + \frac{8\fvsigma^2}{n\fvN}\sum_{l=t}^{T-1}w_l^2\gamma^2\right),
    \end{equation*}
    where $S_t$ is defined in \eqref{eq:def_St}.
\end{theorem}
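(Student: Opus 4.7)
The plan is to prove the theorem by backward induction on $t$, starting from the trivial base case at $t=T$ (where both sides equal $1$ since the two sums on the right are empty and $S_T=0$ by convention) and propagating down to $t=0$. At each step I use the tower property of conditional expectation to peel off the contribution of iteration $t$:
\begin{equation*}
\E{\exp(S_t) \mid \cF_t} = \E{\exp(S_t - S_{t+1}) \cdot \E{\exp(S_{t+1}) \mid \cF_{t+1}} \mid \cF_t},
\end{equation*}
invoking the induction hypothesis on the inner conditional expectation to replace it by $\exp\bigl(48 M^2 \sum_{l=t+1}^{T-1}\gamma^2 w_l + \frac{8\fvsigma^2}{n\fvN}\sum_{l=t+1}^{T-1} w_l^2 \gamma^2\bigr)$. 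Since this bound is $\cF_{t+1}$-measurable (it depends only on deterministic sequences), it can be pulled out of the inner conditional expectation.

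This reduces the argument to bounding the single-step MGF $\E{\exp(S_t - S_{t+1}) \mid \cF_{t+1}}$. By the definition of $S_t$ coming from the virtual-iterate analysis of \Cref{lem:lemma2_bi} adapted to the stochastic case, this increment decomposes into two martingale-difference components: a subgradient-noise term coming from either the stochastic subgradient of $f_i$ or of $g_i$ (depending on the switching rule at iteration $t$), and a function-value-noise term coming from the sub-Gaussian estimator of $g(x^t)$ that drives the switch. For the subgradient-noise term I apply the second-moment-based exponential inequality of Lemma 2.2 in \citet{liu2023high}; the $M$-bounded-subgradient assumption \eqref{eq:asmp_bounded_subgrad} gives a conditional variance of order $M^2/n$ after averaging across workers, and the stepsize restriction $\gamma^2 w_t \le n/(32 M^2)$ is precisely what is required to validate that lemma, producing a factor of the form $\exp(48 M^2 \gamma^2 w_t)$. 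For the function-value-noise term, the standard sub-Gaussian MGF inequality $\E{\exp(\lambda X)} \le \exp(\lambda^2 \sigma^2)$ applied to the $\fvsigma^2/(n\fvN)$-sub-Gaussian average of function evaluations across $n$ workers contributes $\exp\bigl(8\fvsigma^2 w_t^2 \gamma^2 / (n \fvN)\bigr)$. Multiplying the two single-step factors by the induction hypothesis yields exactly the right-hand side at level $t$.

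The main obstacle is the coupling between the two stochastic sources. The switching rule $h_i^t = f_i^\prime(x^t)\mathbf{1}\{g(x^t) \le c\} + g_i^\prime(x^t)\mathbf{1}\{g(x^t) > c\}$ in \Cref{alg:ef14} is driven by the noisy estimate of $g(x^t)$, so the subgradient-noise and function-value-noise components of $S_t - S_{t+1}$ are not independent and cannot be treated as two separate pieces of a product MGF without care. I exploit the independence built into \Cref{asmp:subgaussian_noise} (workers compute subgradients and function evaluations independently for any given $x$) to decouple them: conditionally on $\cF_t$ together with the switching indicator, the subgradient noise is still a zero-mean martingale difference bounded by $M$, and the function-value noise retains its sub-Gaussian MGF bound with parameter $\fvsigma^2/(n\fvN)$. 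This lets me factor the joint MGF as the product of the two individual bounds and push the induction through. A secondary technicality is verifying that both components of $S_t - S_{t+1}$ actually have zero conditional mean given $\cF_t$; this is immediate from the unbiasedness of the stochastic subgradient and function-value oracles, which is implicit in \eqref{eq:problem_stochastic}--\eqref{eq:constraints_stochastic}.
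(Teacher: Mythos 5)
Your overall skeleton matches the paper's proof: backward induction from $t=T$ with $S_T=0$, the tower property to peel off $Z_t = S_t - S_{t+1}$, Lemma~2.2 of \citet{liu2023high} for the subgradient noise, the sub-Gaussian MGF bound for the averaged function-value noise, and the use of the independence in Assumption~\ref{asmp:subgaussian_noise} to decouple the switching indicator from the subgradient noise. However, there is a genuine gap in your single-step bound. The dominant stochastic contribution to $Z_t$ is not the noise vector itself but the inner product $\frac{2\gamma w_t}{n}\sum_{i}\langle \omega_i^t, \wtilde x^t - x^*\rangle$ (and its analogue with $\nu_i^t$), whose conditional sub-Gaussian parameter scales with $\|\wtilde x^t - x^*\|^2$. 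Applying Lemma~2.2 of \citet{liu2023high} to this term therefore produces a factor $\exp\bigl(\tfrac{48\gamma^2 w_t^2}{n}M^2\|\wtilde x^t - x^*\|^2\bigr)$ in the exponent, not the clean constant $\exp(48M^2\gamma^2 w_t)$ you claim. Since the iterates are not assumed bounded --- the paper stresses this is exactly why the analysis of \citet{lan2020algorithms} cannot be reused --- this distance-dependent term cannot be absorbed into a constant.

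The device that closes the induction is built into the definition of $Z_t$ in \eqref{eq:def_Zt}: $Z_t = w_t A_t - v_t\|\wtilde x^t - x^*\|^2$ with the compensator $v_t := \tfrac{48\gamma^2 w_t^2}{n}M^2$ chosen so that the $\|\wtilde x^t - x^*\|^2$-dependent output of the MGF lemma cancels exactly against $-v_t\|\wtilde x^t - x^*\|^2$ at each step (and the leftover $-w_t\|\wtilde x^t-x^*\|^2+w_t\|\wtilde x^{t+1}-x^*\|^2$ telescopes later under the condition $w_t + v_t \le w_{t-1}$). Your plan never mentions this compensator, and your assertion that ``both components of $S_t - S_{t+1}$ have zero conditional mean'' is not the relevant property: $Z_t$ contains deterministic drift terms and the $\cF_t$-measurable compensator by design, and it is the exact cancellation of the quadratic-in-distance terms, not a martingale property of $Z_t$ itself, that makes $\E{\exp(Z_t)\mid\cF_t}$ bounded by a deterministic quantity. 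Without this step the induction does not close. (A minor additional slip: this theorem is in the unidirectional setting and uses the deterministic error bound of Lemma~\ref{lem:bound_et}, not the bidirectional Lemma~\ref{lem:lemma2_bi} you cite.)
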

\begin{proof}
        We use the same definition of $\wtilde x^t$ established in \eqref{eq:def_x_hat_t}:
        \begin{equation}\label{eq:def_x_hat_t_stoch}
            \wtilde x^{t} = x^{t} - \gamma e^t \quad \text{where} \quad \wtilde x^0 = x^0.
        \end{equation}
        We start by extending the norm of $\|\wtilde x^{t+1} - x\|^2$:
    \begin{align*}
        \|\wtilde x^{t+1} - x\|^2 &= 
        \|\wtilde x^t - x^*\|^2 
        - 2\gamma\<h^t, \wtilde x^t-x^*> 
        + \gamma^2\|h^t\|^2\notag\\
        &= \|\wtilde x^t - x^*\|^2 
        - 2\gamma\<h^t, x^t-x^*> 
        - 2\gamma\<h^t, \wtilde x^t - x^t>
        + \gamma^2\|h^t\|^2.
    \end{align*}

    Rearranging terms gives us 
    \begin{align}\label{eq:nofjafaq}
        2\gamma\<h^t, x^t-x> &= \|\wtilde x^t - x\|^2
        - \|\wtilde x^{t+1} - x\|^2
        - 2\gamma\<h^t, \wtilde x^t - x^t>
        + \gamma^2\|h^t\|^2.
    \end{align}
    Note that for $t\in\cN$ we have $h^t = \frac{1}{n}\sum_{i=1}^n g_i^\prime(x^t,\xi^t_i)$, and for $t\in\cB$ we have $h^t = \frac{1}{n}\sum_{i=1}^n f_i^\prime(x^t,\xi^t_i).$ Therefore, we get from \eqref{eq:nofjafaq}
    \begin{align}\label{eq:egnowjnefq}
        &\frac{2\gamma}{n}\sum_{i=1}^n \<g_i^\prime(x^t,\xi^t_i), x^t-x>\1(t\in\cN)
        + \frac{2\gamma}{n}\sum_{i=1}^n \<f_i^\prime(x^t,\xi^t_i), x^t-x>\1(t\in\cB)\notag\\
        &\quad \le \|\wtilde x^t - x\|^2
        - \|\wtilde x^{t+1} - x\|^2
        - 2\gamma\<h^t, \wtilde x^t - x^t>\notag\\
        &\qquad +\; \frac{\gamma^2}{n}\sum_{i=1}^n\|g_i^\prime(x^t,\xi^t_i)\|^2\1(t\in\cN)
        + \frac{\gamma^2}{n}\sum_{i=1}^n\|f_i^\prime(x^t,\xi^t_i)\|^2\1(t\in\cB)\notag\\
        &\quad \quad \le \|\wtilde x^t - x\|^2
        - \|\wtilde x^{t+1} - x\|^2
        - 2\gamma\<h^t, \wtilde x^t - x^t>\notag\\
        &\qquad +\; \frac{2\gamma^2}{n}\sum_{i=1}^n\|g_i^\prime(x^t)\|^2\1(t\in\cN)
        + \frac{2\gamma^2}{n}\sum_{i=1}^n\|g_i^\prime(x^t) - g_i^\prime(x^t,\xi^t_i)\|^2\1(t\in\cN)\notag\\
        &\qquad +\; \frac{2\gamma^2}{n}\sum_{i=1}^n\|f_i^\prime(x^t)\|^2\1(t\in\cB)
        + \frac{2\gamma^2}{n}\sum_{i=1}^n\|f_i^\prime(x^t) - f_i^\prime(x^t,\xi^t_i)\|^2\1(t\in\cB).
    \end{align}
    Note that we have 
    \begin{align*}
        |\<h^t,\wtilde{x}^t - x^t>| &\le \|h^t\|\cdot \gamma\|e^t\|\notag\\
        &\le  M \cdot \gamma\frac{2\sqrt{1-\delta}}{\delta}M=\frac{2\sqrt{1-\delta}}{\delta}\gamma M^2.
    \end{align*}
    Therefore, we continue from \eqref{eq:egnowjnefq} as follows
    \begin{align*}
        &\frac{2\gamma}{n}\sum_{i=1}^n \<g_i^\prime(x^t,\xi^t_i), x^t-x>\1(t\in\cN)
        + \frac{2\gamma}{n}\sum_{i=1}^n \<f_i^\prime(x^t,\xi^t_i), x^t-x>\1(t\in\cB)\notag\\
        &\quad \le \|\wtilde x^t - x\|^2
        - \|\wtilde x^{t+1} - x\|^2
        + \frac{4\sqrt{1-\delta}}{\delta}\gamma^2M^2
        + 2\gamma^2M^2\notag\\
        &\quad + \frac{2\gamma^2}{n}\sum_{i=1}^n\|g_i^\prime(x^t) - g_i^\prime(x^t,\xi^t_i)\|^2\1(t\in\cN)
        + \frac{2\gamma^2}{n}\sum_{i=1}^n\|f_i^\prime(x^t) - f_i^\prime(x^t,\xi^t_i)\|^2\1(t\in\cB).
    \end{align*}
    We add and subtract full subgradients and derive
    \begin{align*}
        &\frac{2\gamma}{n}\sum_{i=1}^n \<g_i^\prime(x^t), x^t-x>\1(t\in\cN)
        + \frac{2\gamma}{n}\sum_{i=1}^n \<f_i^\prime(x^t), x^t-x>\1(t\in\cB)\notag\\
        &\quad \le \|\wtilde x^t - x\|^2
        - \|\wtilde x^{t+1} - x\|^2
        + \frac{4\sqrt{1-\delta}}{\delta}\gamma^2M^2
        + 2\gamma^2M^2\notag\\
        &\quad + \; \frac{2\gamma}{n}\sum_{i=1}^n \<g_i(x^t)-g_i^\prime(x^t,\xi^t_i), x^t-x>\1(t\in\cN)
        + \frac{2\gamma}{n}\sum_{i=1}^n \<f_i^\prime(x^t) - f_i^\prime(x^t,\xi^t_i), x^t-x>\1(t\in\cB)\notag\\
        &\quad + \frac{2\gamma^2}{n}\sum_{i=1}^n\|g_i^\prime(x^t) - g_i^\prime(x^t,\xi^t_i)\|^2\1(t\in\cN)
        + \frac{2\gamma^2}{n}\sum_{i=1}^n\|f_i^\prime(x^t) - f_i^\prime(x^t,\xi^t_i)\|^2\1(t\in\cB).
    \end{align*}
    Now we use convexity of $g_i$ and $f_i$ to derive 
    \begin{align*}
        &\frac{2\gamma}{n}\sum_{i=1}^n (g_i(x^t)-g_i(x))\1(t\in\cN)
        + \frac{2\gamma}{n}\sum_{i=1}^n (f_i(x^t)- f_i(x))\1(t\in\cB)\notag\\
        &\quad \le \|\wtilde x^t - x\|^2
        - \|\wtilde x^{t+1} - x\|^2
        + \frac{4\sqrt{1-\delta}}{\delta}\gamma^2M^2
        + 2\gamma^2M^2\notag\\
        &\quad + \; \frac{2\gamma}{n}\sum_{i=1}^n \<g_i(x^t)-g_i^\prime(x^t,\xi^t_i), x^t-x>\1(t\in\cN)
        + \frac{2\gamma}{n}\sum_{i=1}^n \<f_i^\prime(x^t) - f_i^\prime(x^t,\xi^t_i), x^t-x>\1(t\in\cB)\notag\\
        &\quad + \frac{2\gamma^2}{n}\sum_{i=1}^n\|g_i^\prime(x^t) - g_i^\prime(x^t,\xi^t_i)\|^2\1(t\in\cN)
        + \frac{2\gamma^2}{n}\sum_{i=1}^n\|f_i^\prime(x^t) - f_i^\prime(x^t,\xi^t_i)\|^2\1(t\in\cB).
    \end{align*}
    We add and substract $\frac{1}{n}\sum_{i=1}^ng_i(x^t,\xi^t_i)$ to obtain
    \begin{align*}
        &\frac{2\gamma}{n}\sum_{i=1}^n (g_i(x^t,\xi^t_i)-g_i(x))\1(t\in\cN)
        + \frac{2\gamma}{n}\sum_{i=1}^n (f_i(x^t)- f_i(x))\1(t\in\cB)\notag\\
        &\quad \le \|\wtilde x^t - x\|^2
        - \|\wtilde x^{t+1} - x\|^2
        + \frac{4\sqrt{1-\delta}}{\delta}\gamma^2M^2
        + 2\gamma^2M^2\notag\\
        &\quad + \; \frac{2\gamma}{n}\sum_{i=1}^n \<g_i(x^t)-g_i^\prime(x^t,\xi^t_i), x^t-x>\1(t\in\cN)
        + \frac{2\gamma}{n}\sum_{i=1}^n \<f_i^\prime(x^t) - f_i^\prime(x^t,\xi^t_i), x^t-x>\1(t\in\cB)\notag\\
        &\quad + \; \frac{2\gamma}{n}\sum_{i=1}^n(g_i(x^t,\xi^t_i) - g_i(x^t))\1(t\in\cN)\notag\\
        &\quad + \frac{2\gamma^2}{n}\sum_{i=1}^n\|g_i^\prime(x^t) - g_i^\prime(x^t,\xi^t_i)\|^2\1(t\in\cN)
        + \frac{2\gamma^2}{n}\sum_{i=1}^n\|f_i^\prime(x^t) - f_i^\prime(x^t,\xi^t_i)\|^2\1(t\in\cB).
    \end{align*}
    Now we set $x=x^*$. Since $\frac{1}{n}\sum_{i=1}^ng_i(x^t,\xi^t_i)\ge c$ for $t\in\cN$ and $g(x^*) \le 0$ we get
    \begin{align*}
        &2\gamma c\1(t\in\cN)
        + \frac{2\gamma}{n}\sum_{i=1}^n (f_i(x^t)- f_i(x))\1(t\in\cB)
        - \|\wtilde x^t - x\|^2
        + \|\wtilde x^{t+1} - x\|^2
        - \frac{4\sqrt{1-\delta}}{\delta}\gamma^2M^2
        - 2\gamma^2M^2\notag\\
        \le \;& \frac{2\gamma}{n}\sum_{i=1}^n \<g_i(x^t)-g_i^\prime(x^t,\xi^t_i), x^t-x>\1(t\in\cN)
        + \frac{2\gamma}{n}\sum_{i=1}^n \<f_i^\prime(x^t) - f_i^\prime(x^t,\xi^t_i), x^t-x>\1(t\in\cB)\notag\\
        &\quad + \; \frac{2\gamma}{n}\sum_{i=1}^n(g_i(x^t,\xi^t_i) - g_i(x^t))\1(t\in\cN)\notag\\
        &\quad + \frac{2\gamma^2}{n}\sum_{i=1}^n\|g_i^\prime(x^t) - g_i^\prime(x^t,\xi^t_i)\|^2\1(t\in\cN)
        + \frac{2\gamma^2}{n}\sum_{i=1}^n\|f_i^\prime(x^t) - f_i^\prime(x^t,\xi^t_i)\|^2\1(t\in\cB).
    \end{align*}
    Let us denote $\omega^t_i \eqdef g_i^\prime(x^t) - g_i^\prime(x^t,\xi^t)$ and $\nu^t_i \eqdef f_i^\prime(x^t) - f_i^\prime(x^,\xi^t_i)$. Then we have 
    \begin{align*}
        &2\gamma c \1(t\in\cN)
        + \frac{2\gamma}{n}\sum_{i=1}^n (f_i(x^t)- f_i(x^*))\1(t\in\cB)
        - \|\wtilde x^t - x^*\|^2
        + \|\wtilde x^{t+1} - x^*\|^2
        - \frac{4\sqrt{1-\delta}}{\delta}\gamma^2M^2
        - 2\gamma^2M^2\notag\\
        \le\;& \frac{2\gamma}{n}\sum_{i=1}^n \<\omega^t_i, x^t-x^*>\1(t\in\cN)
        + \frac{2\gamma}{n}\sum_{i=1}^n \<\nu^t_i, x^t-x^*>\1(t\in\cB)\notag\\
        &\quad + \; \frac{2\gamma}{n}\sum_{i=1}^n(g_i(x^t,\xi^t_i) - g_i(x^t))\1(t\in\cN)
        + \frac{2\gamma^2}{n}\sum_{i=1}^n\|\omega_i^t\|^2\1(t\in\cN)
        + \frac{2\gamma^2}{n}\sum_{i=1}^n\|\nu_i^t\|^2\1(t\in\cB).
    \end{align*}
    We add and subtract $\wtilde x^t$ in some terms to obtain
    \begin{align*}
        &2\gamma c \1(t\in\cN)
        + \frac{2\gamma}{n}\sum_{i=1}^n (f_i(x^t)- f_i(x^*))\1(t\in\cB)
        - \|\wtilde x^t - x^*\|^2
        + \|\wtilde x^{t+1} - x^*\|^2
        - \frac{4\sqrt{1-\delta}}{\delta}\gamma^2M^2
        - 2\gamma^2M^2\notag\\
        \le \;& \frac{2\gamma}{n}\sum_{i=1}^n \<\omega^t_i, \wtilde x^t-x^*>\1(t\in\cN)
        + \frac{2\gamma}{n}\sum_{i=1}^n \<\omega^t_i,x^t - \wtilde x^t>\1(t\in\cN)\notag\\
        &\quad +\; \frac{2\gamma}{n}\sum_{i=1}^n \<\nu^t_i, \wtilde x^t-x^*>\1(t\in\cB)
        + \frac{2\gamma}{n}\sum_{i=1}^n \<\nu^t_i,x^t- \wtilde x^t>\1(t\in\cB)\notag\\
        &\quad + \; \frac{2\gamma}{n}\sum_{i=1}^n(g_i(x^t,\xi^t_i) - g_i(x^t))\1(t\in\cN)
        + \frac{2\gamma^2}{n}\sum_{i=1}^n\|\omega_i^t\|^2\1(t\in\cN)
        + \frac{2\gamma^2}{n}\sum_{i=1}^n\|\nu_i^t\|^2\1(t\in\cB).
    \end{align*}
    Using \eqref{eq:def_x_hat_t_stoch} we derive
    \begin{align*}
        &2\gamma c \1(t\in\cN)
        + \frac{2\gamma}{n}\sum_{i=1}^n (f_i(x^t)- f_i(x^*))\1(t\in\cB)
        - \|\wtilde x^t - x^*\|^2
        + \|\wtilde x^{t+1} - x^*\|^2
        - \frac{4\sqrt{1-\delta}}{\delta}\gamma^2M^2
        - 2\gamma^2M^2\notag\\
        \le \;& \frac{2\gamma}{n}\sum_{i=1}^n \<\omega^t_i, \wtilde x^t-x^*>\1(t\in\cN)
        + \frac{2\gamma^2}{n}\sum_{i=1}^n \<\omega^t_i,e^t>\1(t\in\cN)\notag\\
        &\quad +\; \frac{2\gamma}{n}\sum_{i=1}^n \<\nu^t_i, \wtilde x^t-x^*>\1(t\in\cB)
        + \frac{2\gamma^2}{n}\sum_{i=1}^n \<\nu^t_i,e^t>\1(t\in\cB)\notag\\
        &\quad + \; \frac{2\gamma}{n}\sum_{i=1}^n(g_i(x^t,\xi^t_i) - g_i(x^t))\1(t\in\cN)
        + \frac{2\gamma^2}{n}\sum_{i=1}^n\|\omega_i^t\|^2\1(t\in\cN)
        + \frac{2\gamma^2}{n}\sum_{i=1}^n\|\nu_i^t\|^2\1(t\in\cB).
    \end{align*}
    Since $\|\omega_i^t\|, \|\nu_i^t\|\le 2M$ we get from \Cref{lem:bound_et}
    \begin{align*}
        &2\gamma c \1(t\in\cN)
        + \frac{2\gamma}{n}\sum_{i=1}^n (f_i(x^t)- f_i(x^*))\1(t\in\cB)
        - \|\wtilde x^t - x^*\|^2
        + \|\wtilde x^{t+1} - x^*\|^2
        - \frac{4\sqrt{1-\delta}}{\delta}\gamma^2M^2
        - 2\gamma^2M^2\notag\\
        \le\;&\frac{2\gamma}{n}\sum_{i=1}^n \<\omega^t_i, \wtilde x^t-x^*>\1(t\in\cN)
        + \frac{2\gamma^2}{n}\sum_{i=1}^n 2M\cdot\frac{2\sqrt{1-\delta}}{\delta}M\1(t\in\cN)\notag\\
        &\quad +\; \frac{2\gamma}{n}\sum_{i=1}^n \<\nu^t_i, \wtilde x^t-x^*>\1(t\in\cB)
        + \frac{2\gamma^2}{n}\sum_{i=1}^n 2M\cdot \frac{2\sqrt{1-\delta}}{\delta}M\1(t\in\cB)
        + \frac{2\gamma}{n}\sum_{i=1}^n(g_i(x^t,\xi^t_i) - g_i(x^t))\1(t\in\cN)\notag\\
        &\quad + \; 
        \frac{2\gamma^2}{n}\sum_{i=1}^n\|\omega_i^t\|^2\1(t\in\cN)
        + \frac{2\gamma^2}{n}\sum_{i=1}^n\|\nu_i^t\|^2\1(t\in\cB).
    \end{align*}
    Rearranging terms, we obtain
    \begin{align*}
        &2\gamma c \1(t\in\cN)
        + \frac{2\gamma}{n}\sum_{i=1}^n (f_i(x^t)- f_i(x^*))\1(t\in\cB)
        - \|\wtilde x^t - x^*\|^2
        + \|\wtilde x^{t+1} - x^*\|^2
        - \frac{12\sqrt{1-\delta}}{\delta}\gamma^2M^2
        - 2\gamma^2M^2\notag\\
        \le \;& \frac{2\gamma}{n}\sum_{i=1}^n \<\omega^t_i, \wtilde x^t-x^*>\1(t\in\cN)
        + \frac{2\gamma}{n}\sum_{i=1}^n \<\nu^t_i, \wtilde x^t-x^*>\1(t\in\cB)
        + \frac{2\gamma}{n}\sum_{i=1}^n(g_i(x^t,\xi^t_i) - g_i(x^t))\1(t\in\cN)\notag\\
        &\quad + \;
        \frac{2\gamma^2}{n}\sum_{i=1}^n\|\omega_i^t\|^2\1(t\in\cN)
        + \frac{2\gamma^2}{n}\sum_{i=1}^n\|\nu_i^t\|^2\1(t\in\cB).
    \end{align*}
    Now we define 
    \begin{equation}\label{eq:def_At}
        A_{t} \eqdef 2\gamma c \1(t\in\cN)
        + \frac{2\gamma}{n}\sum_{i=1}^n (f_i(x^t)- f_i(x^*))\1(t\in\cB)
        - \|\wtilde x^t - x^*\|^2
        + \|\wtilde x^{t+1} - x^*\|^2
        - \frac{12\sqrt{1-\delta}}{\delta}\gamma^2M^2
        - 2\gamma^2M^2.
    \end{equation}
    In the case $t\in\cN$, we have
    \begin{align*}
        A_{t} &= \frac{2\gamma c}{n} 
        - \|\wtilde x^t-x^*\|^2 
        + \|\wtilde x^{t+1} - x^*\|^2 
        - \frac{12\sqrt{1-\delta}}{\delta}\gamma^2M^2
        - 2\gamma^2M^2\notag\\
        &\le \;\frac{2\gamma}{n}\sum_{i=1}^n \<\omega^t_i, \wtilde x^t-x^*>
        + \frac{2\gamma}{n}\sum_{i=1}^n(g_i(x^t,\xi^t_i) - g_i(x^t))
        + \frac{2\gamma^2}{n}\sum_{i=1}^n\|\omega_i^t\|^2.
    \end{align*}
    In the case $t\in\cB$, we have
    \begin{align*}
        A_{t} &= \frac{2\gamma}{n}\sum_{i=1}^n (f_i(x^t)- f_i(x^*)) 
        - \|\wtilde x^t-x^*\|^2 
        + \|\wtilde x^{t+1} - x^*\|^2 
        - \frac{12\sqrt{1-\delta}}{\delta}\gamma^2M^2
        - 2\gamma^2M^2\notag\\
        &\le \;\frac{2\gamma}{n}\sum_{i=1}^n \<\nu^t_i, \wtilde x^t-x^*>
        + \frac{2\gamma^2}{n}\sum_{i=1}^n\|\nu_i^t\|^2.
    \end{align*}
    Following \citet{liu2023high} we define $Z_t$ as follows
    \begin{equation}\label{eq:def_Zt}
        Z_t \eqdef w_t A_t - v_t\|\wtilde x^t-x^*\|^2,
    \end{equation}
    where $w_t$ and $v_t$ will be defined later. Next, we define 
    \begin{equation}\label{eq:def_St}
        S_t \eqdef \sum_{l=t}^{T-1}Z_t.
    \end{equation} Let us define the natural filtration $\cF_t \eqdef\sigma(\xi_0, \dots, \xi_{t-1}).$ We will show by induction that
    \begin{equation*}
        \E{\exp(S_t) \mid \cF_t} \le \exp\left(48M^2\sum_{l=t}^{T-1}w_l\gamma^2 + \frac{8\fvsigma^2}{n\fvN}\sum_{l=t}^{T-1}w_l^2\gamma^2\right).
    \end{equation*}
    The base of induction is trivial for $t=T$ since $S_T = 0.$ Assume that the statement holds for $t\in \{0,\dots, T-1\}.$ We have 
    \begin{align*}
        \E{\exp(S_t)\mid \cF_t} &= \E{\exp(S_{t+1} + Z_t) \mid \cF_t}\notag\\
        &= \E{\E{\exp(S_{t+1} + Z_t \mid \cF_{t+1} } \mid \cF_t}.
    \end{align*}
    We now analyze the inner expectation. Conditioned on $\cF_{t+1}$ we have $Z_t$ fixed. Using the inductive hypothesis, we derive
    \begin{equation*}
        \E{\exp(Z_t + S_{t+1}) \mid \cF_{t+1}} \le \exp(Z_t)\exp\left(48M^2\sum_{l=t+1}^{T-1}w_l\gamma^2\right).
    \end{equation*}
    Therefore, 
    \begin{equation}\label{eq:wfFWQEfwe}
        \E{\exp(Z_t + S_{t+1}) \mid \cF_{t}} \le \E{\exp(Z_t)\mid\cF_t}\exp\left(48M^2\sum_{l=t+1}^{T-1}w_l\gamma^2\right).
    \end{equation}
    From \eqref{eq:def_At}, \eqref{eq:def_Zt}, and assuming that $t\in\cN$ we have the following bound
    \begin{align*}
        \exp(Z_t) &= \exp\left(w_t\frac{2\gamma c}{n} - w_t\|\wtilde x^t - x^*\|^2 + w_t\|\wtilde x^{t+1} - x^*\|^2 -w_t\left(2+\frac{12\sqrt{1-\delta}}{\delta}\right)\gamma^2M^2 -v_t\|\wtilde x^t-x^*\|^2\right) \notag\\
        &\le\; \exp\left(\frac{2\gamma w_t}{n}\sum_{i=1}^n \<\omega_i^t,\wtilde x^t-x^*> 
        + \frac{2\gamma^2 w_t}{n}\sum_{i=1}^n \|\omega_i^t\|^2
        + \frac{2\gamma w_t}{n}\sum_{i=1}^n (g_i(x^t,\xi^t_i)
        - g_i(x^t))\right)\exp(-v_t\|\wtilde x^t-x^*\|^2).\notag
    \end{align*}
    Next, we use Lemma 2.2 from \citet{liu2023high} (with $a=\frac{2\gamma w_t}{n}(\wtilde x^t-x^*)$ and $b^2=\frac{2\gamma^2w_t}{n}$ for the terms with $\omega_i^t$, and with $a=\frac{2\gamma w_t}{n}\cdot 1$ for the terms with $g_i(x^t,\xi^t_i) - g_i(x^t)$) and independence of function and subgradient evaluations
    \begin{align}\label{eq:ndqijwdqwdqwdqw}
        &\E{\exp\left(\frac{2\gamma w_t}{n}\sum_{i=1}^n \<\omega_i^t,\wtilde x^t-x^*> 
        + \frac{2\gamma^2 w_t}{n}\sum_{i=1}^n \|\omega_i^t\|^2
        + \frac{2\gamma w_t}{n}\sum_{i=1}^n (g_i(x^t,\xi^t_i) - g_i(x^t))\right)\mid \cF_t, t\in\cN}\notag\\
        \le \;& \exp\left(n\cdot \left[3\left\{\frac{4\gamma^2 w_t^2}{n^2}\cdot 4M^2\|\wtilde x^t-x^*\|^2
        + \frac{2\gamma^2w_t}{n}\cdot 4M^2\right\} + 2\frac{4\gamma^2w_t^2}{n^2}\frac{\fvsigma^2}{\fvN}\right]\right)\notag\\
        =\;& \exp\left(\frac{48\gamma^2 w_t^2}{n}M^2\|\wtilde x^t-x^*\|^2 +24\gamma^2w_tM^2 + \frac{8\gamma^2w_t^2}{n}\frac{\fvsigma^2}{\fvN}\right).
    \end{align}
    Therefore, from \eqref{eq:wfFWQEfwe} we derive using the definition of $v_t \eqdef \frac{48\gamma^2w_t^2}{n}M^2$
    \begin{align*}
        \E{\exp(S_t) \mid \cF_t} &\le \exp\left(\left[\frac{48\gamma^2w_t^2}{n}M^2 - v_t\right]\|\wtilde x^t-x^*\|^2 
        + 24M^2\sum_{l=t}^{T-1}w_l\gamma^2
        + \frac{8\fvsigma^2}{n\fvN}\sum_{l=t}^{T-1}w_l^2\gamma^2\right)\notag\\
        &= \exp\left(48M^2\sum_{l=t}^{T-1}w_l\gamma^2
        + \frac{8\fvsigma^2}{n\fvN}\sum_{l=t}^{T-1}w_l^2\gamma^2\right).
    \end{align*}
    This concludes the transition step in the case $t\in\cN.$

    Now we move on to the case $t\in\cB.$ The derivations are similar, but we do not have function values. Therefore, instead of 
    \begin{equation*}
        \exp\left(\frac{48\gamma^2w_t^2}{n}M^2\|\wtilde x^t-x^*\|^2 
        + 24M^2w_t\gamma^2
        + \frac{8\fvsigma^2}{n\fvN}w_t^2\gamma^2\right)
    \end{equation*}
    in \eqref{eq:ndqijwdqwdqwdqw} we get
    \begin{equation*}
        \exp\left(\frac{48\gamma^2w_t^2}{n}M^2\|\wtilde x^t-x^*\|^2 
        + 24M^2w_t\gamma^2\right).
    \end{equation*}
    Therefore, the transition step holds in both cases.
    \end{proof}

    \begin{corollary}\label{cor:g5}
        Let $\beta\in (0,1)$ be a failure probability. Suppose the sequence $\{w_t\}$ satisfy the restrictions of \Cref{th:theorem_stoch} and $w_t + \underbrace{\frac{48\gamma^2w_t^2}{n}M^2}_{=v_t} \le w_{t-1}.$ Let the stepsize $\gamma = \frac{\wtilde \gamma}{\sqrt{T}}$. Then with probability at least $1-\beta$
        \begin{align*}
            \sum_{t\in\cN}\gamma c + \sum_{t\in\cB}\gamma(f(x^t) - f(x^*)) 
            &\le C_1\log\frac{1}{\beta}
            + \|\wtilde x^0-x^*\|^2 
            + \gamma^2M^2\left(50+\frac{12\sqrt{1-\delta}}{\delta}\right)T\\
            &\quad + \frac{8\fvsigma^2}{C_1n\fvN}T\gamma^2,
        \end{align*}
        where $C_1 \eqdef \frac{48\wtilde\gamma^2M^2}{n}.$
    \end{corollary}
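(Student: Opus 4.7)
The first step is a one-shot Markov inequality applied to the exponential-moment bound in Theorem~\ref{th:theorem_stoch} specialized to $t = 0$:
\begin{equation*}
    \mathbb{E}[\exp(S_0)] \le \exp\!\left(48M^2\gamma^2\sum_{l=0}^{T-1}w_l + \frac{8\fvsigma^2\gamma^2}{n\fvN}\sum_{l=0}^{T-1}w_l^2\right).
\end{equation*}
With probability at least $1-\beta$, Markov's inequality then gives
\begin{equation*}
    S_0 \;\le\; \log\tfrac{1}{\beta} + 48M^2\gamma^2\sum_{l=0}^{T-1}w_l + \frac{8\fvsigma^2\gamma^2}{n\fvN}\sum_{l=0}^{T-1}w_l^2.
\end{equation*}

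The heart of the argument is to lower-bound $S_0$ by the progress quantity on the LHS of the corollary. Unfolding $S_0 = \sum_{t=0}^{T-1}(w_tA_t - v_t\|\wtilde x^t-x^*\|^2)$ via the expression for $A_t$ in \eqref{eq:def_At} and applying summation by parts to the telescoping contribution yields
\begin{equation*}
    \sum_{t=0}^{T-1} w_t\bigl(\|\wtilde x^{t+1}-x^*\|^2 - \|\wtilde x^t-x^*\|^2\bigr) = -w_0\|\wtilde x^0-x^*\|^2 + w_{T-1}\|\wtilde x^T-x^*\|^2 + \sum_{t=1}^{T-1}(w_{t-1}-w_t)\|\wtilde x^t-x^*\|^2.
\end{equation*}
Merging this with the $-\sum v_t\|\wtilde x^t-x^*\|^2$ piece in $Z_t$, the coefficient of $\|\wtilde x^t - x^*\|^2$ becomes $-(w_0+v_0)$ at $t=0$, $w_{T-1}$ at $t=T$, and $(w_{t-1}-w_t-v_t)$ at each interior $t$. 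The standing hypothesis $w_t + v_t \le w_{t-1}$ makes the latter two non-negative and lets me discard them, leaving
\begin{equation*}
    S_0 \;\ge\; 2\gamma\sum_{t=0}^{T-1}w_t\bigl[c\,\1(t\in\cN) + (f(x^t)-f(x^*))\,\1(t\in\cB)\bigr] - (w_0+v_0)\|\wtilde x^0-x^*\|^2 - \Big(2 + \tfrac{12\sqrt{1-\delta}}{\delta}\Big)\gamma^2M^2\sum_{t=0}^{T-1}w_t.
\end{equation*}

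Chaining the two bounds collapses $48+2$ into $50$ and leaves an inequality involving $\sum w_t[\cdots]$, $\sum w_l$, $\sum w_l^2$, and $(w_0+v_0)$. The final step is to instantiate a valid weight sequence. Since $v_t = C_1 w_t^2/T$ is $O(1/T)$ in the regime of the corollary, a nearly constant, gently decaying sequence of order $1/C_1$ simultaneously meets the step-size restriction $\gamma^2 w_t \le n/(32M^2)$ from Theorem~\ref{th:theorem_stoch} and the monotonicity condition $w_{t-1} \ge w_t + v_t$, while making $(w_0+v_0)$, $\sum w_l$, and $\sum w_l^2$ comparable to $1/C_1$, $T/C_1$, and $T/C_1^2$ respectively. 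Bounding $\sum w_t[\cdots]$ below by $(\min_t w_t)\sum[\cdots]$ and dividing through by $2\min_t w_t$ then recovers the four displayed constants $C_1\log(1/\beta)$, $\|\wtilde x^0-x^*\|^2$, $(50 + 12\sqrt{1-\delta}/\delta)T\gamma^2M^2$, and $8\fvsigma^2 T\gamma^2/(C_1 n\fvN)$. The main obstacle is precisely this last bookkeeping: one must exhibit a weight sequence that concurrently satisfies the two standing conditions and lands the numerical constants on the nose. Everything else is a routine combination of the Markov tail bound with the summation-by-parts identity.
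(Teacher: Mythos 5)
Your proposal follows the paper's proof essentially step for step: Markov's inequality applied to $\E{\exp(S_0)}$, the telescoping lower bound on $S_0$ obtained by absorbing $-v_t\|\wtilde x^t-x^*\|^2$ into the telescope via $w_t+v_t\le w_{t-1}$, and a final passage from the weighted to the unweighted sum using $\min_t w_t$. The one piece you flag but do not carry out — exhibiting a concrete weight sequence — is exactly what the paper supplies: it sets $w_{T-1}=\frac{1}{2C_1}$ and runs the backward recursion $w_{t-1}=w_t+\frac{C_1}{T}w_t^2$, showing by induction that $w_t\le\frac{1}{C_1+\frac{C_1}{T}t}\le\frac{1}{C_1}$ while $w_t\ge w_{T-1}=\frac{1}{2C_1}$, so that $\sum_t w_t\le T/C_1$, $\sum_t w_t^2\le T/C_1^2$, the stepsize restriction $\gamma^2w_t\le\frac{n}{32M^2}$ holds, and multiplying the resulting inequality through by $C_1$ lands the stated constants.
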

    \begin{proof}
        Let $T=48M^2\sum_{t=0}^{T-1}w_t\gamma^2 + \frac{8\fvsigma^2}{n\fvN} \sum_{t=0}^{T-1}w_t^2\gamma^2+ \log\frac{1}{\beta}$. By \Cref{th:theorem_stoch} and Markov's inequality, we have
        \begin{align*}
            \Prob(S_0 \ge T) &\le \Prob(\exp(S_0) \ge \exp(T))\notag\\
            &\le \exp(-T)\E{\exp(S_0)}\notag\\
            &\le \exp(-T)\exp\left(48M^2\sum_{t=0}^{T-1}\gamma^2w_t
            + \frac{8\fvsigma^2}{n\fvN}\sum_{t=0}^{T-1}\gamma^2w_t^2\right)\notag\\
            &= \beta.
        \end{align*}
        Note that since $w_t + v_t \le w_{t-1}$ by the assumption of the lemma
        \begin{align*}
            S_0 &= \sum_{t=0}^{T-1} Z_t\notag\\
            &=\sum_{t=0}^{T-1}\bigg[w_t\left(2\gamma c\1(t\in\cN) + 2\gamma(f(x^t) - f(x^*))\1(t\in\cB)\right) - (v_t+w_t)\|\wtilde x^t-x^*\|^2 + w_t\|\wtilde x^{t+1}-x^*\|^2\notag\\
            &\quad \left. -w_t\left(2+\frac{12\sqrt{1-\delta}}{\delta}\right)\gamma^2M^2\right]\notag\\
            &\ge \sum_{t=0}^{T-1}\bigg[2\gamma w_t\left(c\1(t\in\cN) + (f(x^t) - f(x^*))\1(t\in\cB)\right) 
            -\sum_{t=0}^{T-1} \left(w_{t-1}\|\wtilde x^t-x^*\|^2 - w_t\|\wtilde x^{t+1}-x^*\|^2\right)\notag\\
            &\quad \left. -\sum_{t=0}^{T-1}w_t\left(2+\frac{12\sqrt{1-\delta}}{\delta}\right)\gamma^2M^2 \right]\\
            &\ge  \sum_{t=0}^{T-1}2\gamma w_t\left(c\1(t\in\cN) + (f(x^t) - f(x^*))\1(t\in\cB)\right) 
            -w_{0}\| x^0-x^*\|^2 + w_{T-1}\|\wtilde x^{T}-x^*\|^2\notag\\
            &\quad -\sum_{t=0}^{T-1}w_t\left(2+\frac{12\sqrt{1-\delta}}{\delta}\right)\gamma^2M^2\notag\\
            &\ge  \sum_{t=0}^{T-1}2\gamma w_t\left(c\1(t\in\cN) + (f(x^t) - f(x^*))\1(t\in\cB)\right) 
            -w_{0}\| x^0-x^*\|^2 + w_{T-1}\|\wtilde x^{T}-x^*\|^2\notag\\
            &\quad  -\sum_{t=0}^{T-1}w_t\left(2+\frac{12\sqrt{1-\delta}}{\delta}\right)\gamma^2M^2.
        \end{align*}
        Therefore, with a probability of at least $1-\beta$ we have 
        \begin{align*}
            & \sum_{t\in\cN}2\gamma w_tc
            + \sum_{t\in\cB}2\gamma w_t(f(x^t) - f(x^*)) 
            + w_{T-1}\|\wtilde x^{T}-x^*\|^2\notag\\
            \le\;& S_0 
            + w_{0}\| x^0-x^*\|^2 
            + \sum_{t=0}^{T-1}w_t\left(2+\frac{12\sqrt{1-\delta}}{\delta}\right)\gamma^2M^2\\
            \le\;& \log\frac{1}{\beta}
            + w_{0}\| x^0-x^*\|^2 
            + \gamma^2\left(48M^2
            + 2M^2+\frac{12\sqrt{1-\delta}}{\delta}M^2\right)\sum_{t=0}^{T-1}w_t
            + \frac{8\fvsigma^2}{n\fvN}\sum_{t=0}^{T-1}w_t^2\gamma^2.
        \end{align*}
        We need to satisfy the following restrictions on $w_t$:
        \begin{align*}
            &w_t \le \frac{n}{32\gamma^2M^2}\\
            &w_t + \frac{48\gamma^2}{n}w_t^2 \le w_{t-1}.
        \end{align*}
        Let 
        \begin{equation}\label{eq:def_C1}
            C_1 \eqdef \frac{48\wtilde\gamma^2M^2}{n}.
        \end{equation} 
        Then we set $w_{T-1} = \frac{1}{C_1+\frac{48\wtilde\gamma^2M^2}{n}} = \frac{1}{2C_1}$. Next, we set $w_{t-1}$ such that the second inequality holds with equality, namely, 
        \begin{equation*}
            w_{t-1} = w_t + \frac{48\gamma^2M^2}{n}w_t^2 = w_t + \frac{C_1}{T}w_t^2.
        \end{equation*}
        We can show by induction that $w_t \le \frac{1}{C_1 + \frac{C_1}{T}t}.$ Indeed, the base of induction holds by the choice of $w_{T-1}.$ Assume it holds at $t,$ let us show that it holds at $t-1$ as well:
        \begin{align*}
            w_{t-1} &= w_t + \frac{C_1}{T}w_t^2\\
            &\le \frac{1}{C_1 + \frac{C_1}{T}t} 
            + \frac{C_1}{T(C_1+\frac{C_1}{T}t)^2}\\
            &\le \frac{1}{C_1 + \frac{C_1}{T}t} + \frac{(C_1 +\frac{C_1}{T} t) - (C_1 + \frac{C_1}{T}(t-1))}{(C_1+\frac{C_1}{T}(t-1))(C_1+\frac{C_1}{T}t)}\\
            &= \frac{1}{C_1 + \frac{C_1}{T}t}\left(\frac{C_1+\frac{C_1}{T}(t-1)}{C_1+\frac{C_1}{T}(t-1)} + \frac{C_1+\frac{C_1}{T}t - (C_1+\frac{C_1}{T}(t-1))}{C_1+\frac{C_1}{T}(t-1)}\right) = \frac{1}{C_1+\frac{C_1}{T}(t-1)}.
        \end{align*}
        Now we show that the first condition is satisfied as well
        \begin{equation*}
            w_t\gamma^2 = w_t\frac{\wtilde\gamma^2}{T} \le  \frac{1}{\frac{C_1}{T}t}\frac{\wtilde\gamma^2}{T}
            = \frac{1}{\frac{48\wtilde\gamma^2M^2}{n}t}\wtilde\gamma^2
            = \frac{n}{48M^2t} \le \frac{n}{32M^2}.
        \end{equation*}
        Therefore, with a probability at least $1-\beta$, we have 
        \begin{align}
            & \sum_{t\in\cN}2\gamma w_tc
            + \sum_{t\in\cB}2\gamma w_t(f(x^t) - f(x^*)) 
            + w_{T-1}\|\wtilde x^{T}-x^*\|^2\notag\\
            \le\;& \log\frac{1}{\beta}
            + w_{0}\| x^0-x^*\|^2 
            + \gamma^2\left(48M^2 
            + 2M^2+\frac{12\sqrt{1-\delta}}{\delta}M^2\right)\sum_{t=0}^{T-1}w_t
            + \frac{8\fvsigma^2}{n\fvN}\sum_{t=0}^{T-1}w_t^2\gamma^2.
        \end{align}
        Since $w_{T-1} = \frac{1}{2C_1}$ and $\frac{1}{2C_1} \le w_t \le \frac{1}{C_1}$ we have with probability at least $1-\beta$
        \begin{align}
            & \frac{1}{C_1}\sum_{t\in\cN}\gamma c+ \frac{1}{C_1}\sum_{t\in\cB}\gamma(f(x^t) - f(x^*)) 
            \notag\\
            \le\;& \log\frac{1}{\beta}
            + \frac{1}{C_1}\| x^0-x^*\|^2 
            + \gamma^2\left(50M^2+\frac{12\sqrt{1-\delta}}{\delta}M^2\right)\sum_{t=0}^{T-1}w_t
            + \frac{8\fvsigma^2}{n\fvN}\sum_{t=0}^{T-1}w_t^2\gamma^2.
        \end{align}
        We estimate the sums $\sum_{t=0}^{T-1}w_t \le \frac{T}{C_1}$ and $\sum_{t=0}^{T-1}w_t^2 \le \frac{T}{C_1^2}.$ Therefore, we derive
        \begin{align}
            & \frac{1}{C_1}\sum_{t\in\cN}\gamma c+ \frac{1}{C_1}\sum_{t\in\cB}\gamma(f(x^t) - f(x^*)) 
            \notag\\
            \le\;& \log\frac{1}{\beta}
            + \frac{1}{C_1}\| x^0-x^*\|^2 
            + \gamma^2M^2\left(50+\frac{12\sqrt{1-\delta}}{\delta}\right)\frac{T}{C_1}
            + \frac{8\fvsigma^2}{n\fvN}\frac{T}{C_1^2}\gamma^2.
        \end{align}
        Canceling $C_1$ in both sides, we finally obtain
        \begin{align}
            & \sum_{t\in\cN}\gamma c + \sum_{t\in\cB}\gamma(f(x^t) - f(x^*)) 
            \notag\\
            \le\;& C_1\log\frac{1}{\beta}
            + \| x^0-x^*\|^2 
            + \gamma^2M^2\left(50+\frac{12\sqrt{1-\delta}}{\delta}\right)T
            + \frac{8\fvsigma^2}{C_1n\fvN}T\gamma^2.
        \end{align}
    \end{proof}

    \begin{lemma}\label{lem:g6}Let $\beta\in(0,1)$ be the failure probability and $C_1$ be defined as in\eqref{eq:def_C1}. Suppose that the stepsize $\gamma=\frac{\wtilde\gamma}{\sqrt{T}}$ and threshold $c$ satisfy
    \begin{equation}\label{eq:nofenolqef}
        \frac{T}{2}\gamma c > C_1\log\frac{1}{\beta}
            + \| x^0-x^*\|^2 
            + \gamma^2M^2\left(50+\frac{12\sqrt{1-\delta}}{\delta}\right)T
            + \frac{8\fvsigma^2}{C_1n\fvN}T\gamma^2.
    \end{equation}
    Then we have with probability at least $1-\beta$
    \begin{align}\label{eq:mqokenfqokenfq}
            & \sum_{t\in\cN}\gamma c + \sum_{t\in\cB}\gamma(f(x^t) - f(x^*)) 
            \notag\\
            \le\;& C_1\log\frac{1}{\beta}
            + \| x^0-x^*\|^2 
            + \gamma^2M^2\left(50+\frac{12\sqrt{1-\delta}}{\delta}\right)T
            + \frac{8\fvsigma^2}{C_1n\fvN}T\gamma^2.
        \end{align}
        Moreover, assume that \eqref{eq:mqokenfqokenfq} holds. Then $\cB$ is non-empty, i.e. $\overline{x}^T = \frac{1}{|\cB|}\sum_{t\in\cB}x^t$ is well-defined, and one of the following conditions holds
        \begin{enumerate}
            \item $|\cB| \ge \frac{T}{2},$ or 
            \item $\gamma\sum_{t\in\cB} f(x^t) - f(x^*) \le 0.$
        \end{enumerate}
    \end{lemma}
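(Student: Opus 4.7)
The plan is to reduce this lemma to the high-probability bound already established in \Cref{cor:g5}, and then to mirror the contradiction arguments from the deterministic analysis in \Cref{sec:proof_bidirectional} to deduce the structural consequences for $\cB$.

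First, I would apply \Cref{cor:g5} directly with the same choices of weights $\{w_t\}$ and stepsize $\gamma = \wtilde{\gamma}/\sqrt{T}$. This immediately yields the inequality \eqref{eq:mqokenfqokenfq} on an event $\cE$ with $\Prob(\cE) \geq 1 - \beta$, establishing the first claim of the lemma. No additional probabilistic work is needed here: all of the sub-Gaussian handling, the MGF recursion, and the telescoping of $\|\wtilde{x}^t - x^*\|^2$ have already been absorbed into \Cref{th:theorem_stoch} and its corollary.

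Second, I would argue on the event $\cE$ that $\cB \neq \emptyset$ by contradiction. Suppose $\cB = \emptyset$. Then $\cN = \{0,\ldots,T-1\}$, so the left-hand side of \eqref{eq:mqokenfqokenfq} reduces to $T\gamma c$. Combining this with \eqref{eq:mqokenfqokenfq} gives
\[
T\gamma c \;\le\; C_1\log\tfrac{1}{\beta} + \|x^0 - x^*\|^2 + \gamma^2 M^2\bigl(50 + \tfrac{12\sqrt{1-\delta}}{\delta}\bigr)T + \tfrac{8\fvsigma^2}{C_1 n \fvN}T\gamma^2,
\]
which contradicts the strict inequality \eqref{eq:nofenolqef} (since $T\gamma c > 2 \cdot$ RHS by \eqref{eq:nofenolqef}). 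Hence $\cB \neq \emptyset$ and $\overline{x}^T$ is well-defined.

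Third, for the dichotomy, I consider the sign of $\gamma\sum_{t\in\cB}(f(x^t) - f(x^*))$. If it is non-positive, conclusion $2$ holds trivially. Otherwise, drop this positive term from \eqref{eq:mqokenfqokenfq} to obtain $\sum_{t\in\cN}\gamma c \le$ RHS of \eqref{eq:nofenolqef}. Assuming for contradiction $|\cB| < T/2$ yields $|\cN| > T/2$, hence $\tfrac{T}{2}\gamma c < |\cN|\gamma c \le $ RHS, again contradicting \eqref{eq:nofenolqef}. Therefore $|\cB|\ge T/2$, giving conclusion $1$.

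I do not anticipate a substantive obstacle: the hard analytical work — the MGF recursion with the error-feedback residual term controlled via \Cref{lem:bound_et}, and the choice of the weight sequence $w_t$ satisfying $w_t + v_t \le w_{t-1}$ together with $w_t \gamma^2 \le n/(32M^2)$ — is already completed upstream. The only care needed is to verify that the event on which \eqref{eq:mqokenfqokenfq} holds has probability at least $1-\beta$, and that the structural conclusions are derived deterministically on this event, so no additional union bound is required.
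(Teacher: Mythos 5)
Your proposal is correct and follows essentially the same route as the paper: the high-probability inequality is a direct invocation of \Cref{cor:g5}, and both the non-emptiness of $\cB$ and the dichotomy are obtained by the same deterministic contradiction arguments against \eqref{eq:nofenolqef} on the event where \eqref{eq:mqokenfqokenfq} holds.
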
 

    \begin{proof}
        Assume that $\cB=\emptyset.$ Then from \Cref{cor:g5} we have that with probability at least $1-\beta$ we have
        \begin{align*}
            T\gamma c 
            &\le C_1\log\frac{1}{\beta}
            + \| x^0-x^*\|^2 
            + \gamma^2M^2\left(50 +\frac{12\sqrt{1-\delta}}{\delta}\right)T
            + \frac{8\fvsigma^2}{C_1n\fvN}T\gamma^2,
        \end{align*}
        This contradicts the assumption of the lemma. Hence, we must have $\cB\neq \emptyset.$ Now assume that \eqref{eq:mqokenfqokenfq} holds. If we have $\gamma\sum_{t\in\cB} f(x^t) - f(x^*) \le 0,$ then the second condition holds. Assume that $\gamma\sum_{t\in\cB} f(x^t) - f(x^*) > 0,$ then from \eqref{eq:mqokenfqokenfq} we obtain
        \begin{align*}
            & \sum_{t\in\cN}\gamma c  \le  C_1\log\frac{1}{\beta}
            + \| x^0-x^*\|^2 
            + \gamma^2M^2\left(50+\frac{12\sqrt{1-\delta}}{\delta}\right)T
            + \frac{8\fvsigma^2}{C_1n\fvN}T\gamma^2.
        \end{align*}
        Assume that $|\cB| < \frac{T}{2},$ this means that $|\cN| \ge \frac{T}{2}.$ Therefore, we have 
        \begin{align*}
            \frac{T}{2}\gamma c\le \sum_{t\in\cN}\gamma c  \le  C_1\log\frac{1}{\beta}
            + \| x^0-x^*\|^2 
            + \gamma^2M^2\left(50+\frac{12\sqrt{1-\delta}}{\delta}\right)T
            + \frac{8\fvsigma^2}{C_1n\fvN}T\gamma^2,
        \end{align*}
        which contradicts \eqref{eq:nofenolqef}. Hence, if $\gamma\sum_{t\in\cB}(f(x^t) - f(x^*)) >0,$ then $|\cB|\ge \frac{T}{2}.$
    
    \end{proof}

    Now we are ready to establish our main convergence result in the stochastic setting.

    \begin{theorem}\label{th:ef14_stoch_appx} Let $\beta\in(0,1)$ be the failure probability and $C_1$ be defined as in \eqref{eq:def_C1}. Suppose that the choice of $\gamma$ and $c$ are chosen such that \eqref{eq:nofenolqef} holds. Then we have with a probability of at least $1-\beta$ that
    \begin{align*}
        f(\overline{x}^T) - f(x^*) &\le \frac{2C_1\log\frac{1}{\beta} + 2\|x^0-x^*\|^2}{\gamma T}
        + 2\gamma M^2\left(50+\frac{12\sqrt{1-\delta}}{\delta}\right)
        + \frac{16\fvsigma^2}{C_1n\fvN}\gamma.
    \end{align*}
    \end{theorem}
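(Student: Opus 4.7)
The plan is to derive Theorem~\ref{th:ef14_stoch_appx} as a direct corollary of Lemma~\ref{lem:g6}. Since the hypothesis on $\gamma$ and $c$ in \eqref{eq:nofenolqef} is exactly what Lemma~\ref{lem:g6} requires, the high-probability inequality \eqref{eq:mqokenfqokenfq} holds with probability at least $1-\beta$, and on that event we also have the dichotomy: either $|\cB|\ge T/2$ or $\gamma\sum_{t\in\cB}(f(x^t)-f(x^*))\le 0$. Everything in the proof will be carried out conditionally on this $1-\beta$ event.

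First I would handle the trivial case $\gamma\sum_{t\in\cB}(f(x^t)-f(x^*))\le 0$. Since $\cB$ is non-empty by Lemma~\ref{lem:g6}, the averaged iterate $\overline{x}^T=\frac{1}{|\cB|}\sum_{t\in\cB}x^t$ is well-defined, and convexity of $f$ together with Jensen's inequality gives
\begin{equation*}
f(\overline{x}^T)-f(x^*)\;\le\;\frac{1}{|\cB|}\sum_{t\in\cB}(f(x^t)-f(x^*))\;\le\;0,
\end{equation*}
which is trivially majorized by the right-hand side of the claimed bound since that right-hand side is non-negative.

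In the remaining case, $|\cB|\ge T/2$. I would then start from \eqref{eq:mqokenfqokenfq}, drop the non-negative term $\sum_{t\in\cN}\gamma c$ to obtain
\begin{equation*}
\gamma\sum_{t\in\cB}(f(x^t)-f(x^*))\;\le\;C_1\log\tfrac{1}{\beta}+\|x^0-x^*\|^2+\gamma^2 M^2\!\left(50+\tfrac{12\sqrt{1-\delta}}{\delta}\right)T+\tfrac{8\fvsigma^2}{C_1 n\fvN}T\gamma^2,
\end{equation*}
and then divide both sides by $\gamma|\cB|$, using $|\cB|\ge T/2$ so that $\frac{1}{\gamma|\cB|}\le\frac{2}{\gamma T}$. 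Applying Jensen's inequality on the left exactly as above converts the averaged sum into $f(\overline{x}^T)-f(x^*)$, and the factor of two from the bound on $1/|\cB|$ produces precisely the constants $2C_1\log\frac{1}{\beta}$, $2\|x^0-x^*\|^2/(\gamma T)$, $2\gamma M^2(50+12\sqrt{1-\delta}/\delta)$, and $16\fvsigma^2/(C_1 n\fvN)\cdot\gamma$ in the stated bound.

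There is essentially no novel obstacle at this stage, since all the heavy machinery, namely the exponential supermartingale construction of Theorem~\ref{th:theorem_stoch}, the concrete choice $w_t\le 1/(C_1+\frac{C_1}{T}t)$ of weights in Corollary~\ref{cor:g5}, and the feasibility/dichotomy argument in Lemma~\ref{lem:g6}, has already been done. The only care needed is bookkeeping: verifying that the $1-\beta$ event is a single event under which both branches of the dichotomy are covered by the claimed bound, and verifying that the negative branch is compatible with the non-negative right-hand side. The final bound in terms of $\wtilde\gamma=\gamma\sqrt{T}$ and the rate $\cO\bigl(\tfrac{(MR+\fvsigma/\sqrt{\fvN})(1+\log(1/\beta))}{\sqrt{\delta T}}\bigr)$ advertised in Theorem~\ref{thm:stochastic} then follows by optimizing $\wtilde\gamma$, in analogy with Corollary~\ref{cor:stepsize_choice}.
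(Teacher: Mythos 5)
Your proposal is correct and follows essentially the same route as the paper: invoke Lemma~\ref{lem:g6} to obtain, on a single event of probability at least $1-\beta$, both the bound \eqref{eq:mqokenfqokenfq} and the dichotomy, dispose of the branch $\gamma\sum_{t\in\cB}(f(x^t)-f(x^*))\le 0$ by Jensen's inequality, and in the branch $|\cB|\ge T/2$ drop the non-negative $\sum_{t\in\cN}\gamma c$ term and divide by $\gamma|\cB|\ge \gamma T/2$ to get exactly the stated constants. The bookkeeping you describe matches the paper's proof step for step.
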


    \begin{proof}
        We start by using the results \Cref{lem:g6}. Using the convexity of $f$ and Jensen's inequality we get that if part 2. holds, then with a probability of at least $1-\beta$ we have 
        \[
        f(\overline{x}^T) - f(x^*) \le 0.
        \]
        If part 2. does not hold, then $|\cB| \ge \frac{T}{2}.$ Therefore, from \eqref{eq:nofenolqef} we obtain
        \begin{align*}
            f(\overline{x}^T) - f(x^*) &\le \frac{2}{\gamma T}\left(C_1\log\frac{1}{\beta}
            + \| x^0-x^*\|^2 
            + \gamma^2M^2\left(50+\frac{12\sqrt{1-\delta}}{\delta}\right)
            + \frac{8\fvsigma^2}{C_1n\fvN}T\gamma^2\right)\\
            &= \frac{2C_1\log\frac{1}{\beta}+2\|x^0-x^*\|^2}{\gamma T}
            + 2\gamma M^2\left(50+\frac{12\sqrt{1-\delta}}{\delta}\right)T
            + \frac{16\fvsigma^2}{C_1n\fvN}\gamma.
        \end{align*}
    \end{proof}

    \begin{corollary}
        Let $\beta\in(0,1/2)$ be the failure probability. Let
        \begin{align*}
            R^2 \ge \|x^0-x^*\|^2 + \frac{\fvsigma^2/\fvN}{6M^2}.
        \end{align*}
        If $\gamma = \frac{\wtilde\gamma}{\sqrt{T}} = \frac{R\sqrt{\delta}}{M\sqrt{T}}$, i.e., $\wtilde{\gamma}=\frac{R\sqrt{\delta}}{M}$ and $c = \frac{128RM(1+\log\nicefrac{1}{\beta})}{\sqrt{\delta T}}$,
        then we have with a probability of at least $1-2\beta$
        \begin{align*}
            f(\overline{x}^T) - f(x^*) &\le \frac{MR}{\sqrt{\delta T}}\left(48\log\frac{1}{\beta} + 128\right),\\
            g(\overline{x}^T) &\le \frac{256RM(1+\log\nicefrac{1}{\beta})}{\sqrt{\delta T}}.
        \end{align*}
    \end{corollary}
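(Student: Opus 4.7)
The plan is to invoke Theorem~\ref{th:ef14_stoch_appx} with the stated parameters and separately convert the bound on the empirical switching statistic $\widehat g(x^t) := \tfrac{1}{n}\sum_i g_i(x^t, \xi^t_i)$ into a bound on the true constraint $g(\overline{x}^T)$. Substituting $\widetilde\gamma = R\sqrt{\delta}/M$ yields $C_1 = 48R^2\delta/n$ and $\tfrac{T}{2}\gamma c = 64 R^2(1 + \log\tfrac{1}{\beta})$; each term on the right-hand side of \eqref{eq:nofenolqef} is then $\mathcal{O}(R^2)$: $C_1 \log\tfrac{1}{\beta} \leq 48 R^2 \log\tfrac{1}{\beta}$, the initial-distance term is at most $R^2$, the smoothness-like term equals $R^2\delta\bigl(50 + 12\sqrt{1-\delta}/\delta\bigr) \leq 62 R^2$, and the noise term $\tfrac{8\fvsigma^2 T \gamma^2}{C_1 n \fvN}$ simplifies to $\tfrac{\fvsigma^2}{6 M^2 \fvN}$, which is at most $R^2$ by the assumption on $R$. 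Since $64(1+\log\tfrac{1}{\beta}) > 48\log\tfrac{1}{\beta} + 64$ for all $\beta \in (0,1)$, condition \eqref{eq:nofenolqef} holds, and Theorem~\ref{th:ef14_stoch_appx} yields, after collecting constants, $f(\overline{x}^T) - f(x^*) \leq \tfrac{MR}{\sqrt{\delta T}}\bigl(48\log\tfrac{1}{\beta} + 128\bigr)$ with probability at least $1-\beta$.

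\textbf{Constraint bound.} For each $t \in \cB$ the switching rule enforces $\widehat g(x^t) \leq c$. By convexity of $g$ and Jensen's inequality,
\begin{equation*}
    g(\overline{x}^T) \leq \tfrac{1}{|\cB|}\sum_{t\in\cB} g(x^t) \leq c + \tfrac{1}{|\cB|}\sum_{t\in\cB}\bigl(g(x^t) - \widehat g(x^t)\bigr).
\end{equation*}
Conditional on the history $\cF_t$, the scalar random variable $g(x^t) - \widehat g(x^t)$ has mean zero and, by independence across the $n$ workers in Assumption~\ref{asmp:subgaussian_noise}, is $\fvsigma^2/(n\fvN)$-sub-Gaussian. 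Applying Lemma~\ref{lem:concentration_lemma} in scalar form at confidence level $\beta/T$ for each $t$ and union-bounding over $t \in \{0,\ldots,T-1\}$, with probability at least $1-\beta$ we have $|g(x^t) - \widehat g(x^t)| \leq \mathcal{O}\bigl(\fvsigma\sqrt{\log(T/\beta)/(n\fvN)}\bigr)$ for every $t$. The stipulated batch size $\fvN \geq \wtilde{\cO}(\fvsigma^2/(nc^2))$ then makes this uniform error at most $c$, giving $g(\overline{x}^T) \leq 2c = 256 RM(1+\log\tfrac{1}{\beta})/\sqrt{\delta T}$. A union bound over the two failure events of probability $\beta$ each yields the claimed probability $1-2\beta$.

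\textbf{Main obstacle.} The delicate point is that the random set $\cB$ is determined by the very noise $\widehat g(x^t) - g(x^t)$ we want to control, so $(g(x^t) - \widehat g(x^t))\1(t \in \cB)$ is not a martingale-difference sequence in any useful sense and one cannot apply a sharp Freedman-style tail bound directly. The remedy above is to argue \emph{uniform} control of $|g(x^t) - \widehat g(x^t)|$ over all $t$, paying an extra $\log T$ factor that is absorbed into the $\wtilde{\cO}$ of the batch-size requirement. This uniformity is precisely what forces the sample-complexity suboptimality noted in the remark after Theorem~\ref{thm:stochastic}: $\fvN$ must scale as $\wtilde{\cO}(\fvsigma^2/(nc^2))$, which then drives the overall $\wtilde{\cO}(1/\varepsilon^4)$ sample complexity.
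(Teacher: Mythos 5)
Your proposal is correct and follows essentially the same route as the paper: verify condition \eqref{eq:nofenolqef} by the identical term-by-term bounds (including the simplification of the noise term to $\fvsigma^2/(6M^2\fvN)$ absorbed via the assumption on $R$), invoke Theorem~\ref{th:ef14_stoch_appx} for the objective, and handle the constraint by concentrating $g(x^t)$ around the empirical switching statistic uniformly over all $t$ via Lemma~\ref{lem:concentration_lemma} and a union bound, then absorbing the deviation into $c$ through the batch-size requirement. The paper passes through $\max_{t\in\cB} g(x^t)$ rather than your average decomposition, but both reduce to the same uniform-in-$t$ concentration argument, and your remark about why uniformity (rather than a martingale argument) is needed accurately reflects what the paper does implicitly.
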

    \begin{proof}
        First, we check that the stepsize $\gamma$ and threshold $c$ satisfy \eqref{eq:nofenolqef}. We have with $C_1 = \frac{48\wtilde\gamma^2M^2}{n}$
        \begin{align*}
            &\frac{48\frac{R^2\delta}{M^2}M^2}{n}\log\frac{1}{\beta} 
            + \|x^0-x^*\|^2 
            + \frac{R^2\delta}{M^2 T}M^2\left(50 + \frac{12}{\delta}\right)T
            + \frac{8\fvsigma^2}{n\fvN} \frac{n}{48\frac{R^2\delta}{ M^2}M^2} T \frac{R^2\delta}{ M^2 T}\\
            \le\;& \frac{48R^2\delta M^2}{n M^2}\log\frac{1}{\beta} 
            + \|x^0-x^*\|^2
            + 50R^2\delta
            + 12R^2
            + \frac{\fvsigma^2/\fvN}{6M^2}\\
            \le\;& \frac{48R^2\delta}{n}\log\frac{1}{\beta} 
            + \|x^0-x^*\|^2
            + 62R^2
            + R^2\\
            \le\; & 64R^2\log\frac{1}{\beta} + 64R^2.
        \end{align*}
        At the same time, we have
        \begin{align*}
            \frac{T}{2}\gamma c = \frac{T}{2} \frac{R\sqrt{\delta}}{ M \sqrt{T}} \frac{128R M(1+\log\nicefrac{1}{\beta})}{\sqrt{\delta T}} = 64R^2(1+\log\nicefrac{1}{\beta}).
        \end{align*}
        Therefore, with a probability of at least $1-\beta$ we have
        \begin{align}\label{eq:21}
            f(\overline{x}^T) - f(x^*) &\le \frac{48R^2\delta\log\frac{1}{\beta} + 2\|x^0-x^*\|^2}{T}\frac{M\sqrt{T}}{R\sqrt{\delta}}
            + 2\frac{R\sqrt{\delta}}{M\sqrt{T}}M^2\left(50 +\frac{12}{\delta}\right)
            + \frac{16\fvsigma^2/\fvN}{48\frac{R^2\delta}{M^2n}n}\frac{R\sqrt{\delta}}{M\sqrt{T}}\notag\\
            &= (48\delta\log\frac{1}{\beta} + 2)\frac{MR}{\sqrt{\delta T}}
            + 100\frac{RM\sqrt{\delta}}{M\sqrt{T}}
            + 24\frac{RM}{\sqrt{\delta T}}
            + 2\frac{MR\sqrt{\delta}}{\sqrt{\delta T}}\notag\\
            &= \frac{MR}{\sqrt{\delta T}}\left(48\log\frac{1}{\beta} + 128\right).
        \end{align}
        For the constraint violation we have that 
        \begin{align*}
            g(\overline{x}^T) \le \frac{1}{|\cB|}\sum_{t\in\cB}g(x^t) \le \max_{t\in\cB}g(x^t).
        \end{align*}
        Moreover, from \eqref{eq:asmp_subgaussian_func} and \Cref{lem:concentration_lemma}  we have 
        \begin{align*}
            \Prob\left(\left|\sum_{i=1}^n g_i(x^t) - g_i(x^t,\xi^t_i)\right| > (\sqrt{2}+\sqrt{2}b)\sqrt{\sum_{i=1}^n \frac{\fvsigma^2}{\fvN}}\right) \le \exp(-\nicefrac{b^2}{3}).
        \end{align*}
        This implies that 
        \begin{align*}
            \Prob\left(g(x^t) > \frac{1}{n}\sum_{i=1}^ng_i(x^t,\xi^t_i)+ (\sqrt{2}+\sqrt{2}b)\frac{\fvsigma}{\sqrt{n \fvN}}\right) \le \exp(-\nicefrac{b^2}{3}).
        \end{align*}`
        Since for $t\in\cB$ we have $\frac{1}{n}\sum_{i=1}^ng_i(x^t,\xi^t_i) \le c$, then we get
        \begin{align*}
            \Prob\left(g(\overline{x}^T) \le c + (\sqrt{2}+\sqrt{2}b)\frac{\fvsigma}{\sqrt{n \fvN}}\right) \ge 1 - T\exp(-\nicefrac{b^2}{3}).
        \end{align*}
        Choosing $b^2 = 3\log\frac{T}{\beta}$ we obtain
        \begin{align*}
            \Prob\left(g(\overline{x}^T) \le c + (\sqrt{2}+\sqrt{2}b)\frac{\fvsigma}{\sqrt{n \fvN}}\right) \ge 1 - \beta.
        \end{align*}
        Now we choose $\fvN \ge (\sqrt{2}+\sqrt{2}b)^2\frac{\fvsigma^2}{nc^2}$ we obtain
        \begin{align}\label{eq:12}
            \Prob\left(g(\overline{x}^T) \le 2c\right) \ge 1 - \beta.
        \end{align}
        Thus with probability at least $1-2\beta$ we have both \eqref{eq:21} and \eqref{eq:12} hold. The batch-size $\fvN$ depends on the problem constants as follows
        \begin{align*}
            \fvN \ge (\sqrt{2} + \sqrt{2}b)\frac{\fvsigma^2}{nc^2} = \wtilde\cO\left(\frac{\fvsigma^2}{n\frac{R^2M^2}{\delta T}}\right) = \wtilde\cO\left(\frac{\fvsigma^2\delta T}{nR^2M^2}\right).
        \end{align*}
        The number of iterations of \algname{Safe-EF} to converge to $\varepsilon$-accuracy is
        \begin{align*}
            T = \wtilde\cO\left(\frac{R^2M^2}{\delta\varepsilon^2}\right).
        \end{align*}
        Therefore, the batch-size required in the stochastic setting is of order
        \begin{align*}
            \fvN \ge \wtilde{\cO}\left(\frac{\fvsigma^2\delta \frac{R^2M^2}{\delta\varepsilon^2}}{nR^2M^2}\right) = \wtilde\cO\left(\frac{\fvsigma^2}{n\varepsilon^2}\right).
        \end{align*}
        
        This concludes the proof.
        
    \end{proof}

\newpage

\section{Primal-dual Methods}\label{sec:primal_dual}
\paragraph{A short primer on primal-dual methods.}
In \Cref{sec:intro}, we briefly mentioned the primal-dual approach to solving the constrained problem \eqref{eq:problem}, \eqref{eq:constraints}, here we elaborate more on this direction. Consider the Lagrangian with non-negative multiplier $\lambda$:
$$
\mathcal L(x, \lambda) := f(x) + \lambda \, g(x) = \frac{1}{n} \sum_{i=1}^n f_i(x) + \frac{\lambda}{n} \sum_{i=1}^n g_i(x) .
$$
Primal-dual schemes aim to find the saddle-point of this Lagrangian. If Slater's conditions hold, i.e., $f(x)$ is convex and there exists a strictly feasible solution $g(x) < 0$, then the strong duality holds, that is
$$
\min_x \max_{\lambda \geq 0} \, \mathcal L(x, \lambda) = \max_{\lambda \geq 0} \min_x \, \mathcal L(x, \lambda) , 
$$
and general purpose methods for minimizing the primal-dual gap, $\operatorname{Gap}(x^t, \lambda^t) := \max_{\lambda \geq 0} \mathcal L(\lambda, x^t) - \min_{x} \mathcal L(\lambda^t, x)$, can be used. The basic variant of such a scheme is Gradient Descent Ascent:
\begin{align} 
    \text{\algname{Primal-dual}} \qquad &
    \begin{aligned}
        x^{t+1} &= x^t - \gamma_t \, (f^{\prime}(x^t) + \lambda^t g^{\prime}(x^t) ) , \\
        \lambda^{t+1} &= \Pi_{\lambda \geq 0}(\lambda^t + \eta_t \, g(x^{t+1}) ) ,
    \end{aligned}
    \label{eq:Primal-dual}
\end{align}
where  $\{\gamma_t\}$, $\{\eta_t\}$ are primal and dual stepsizes respectively, and $\Pi_{\lambda \geq 0}$ denotes the projection onto the non-negative ray. Similarly to the design of \algname{Safe-EF}, we can write down an error feedback variant of this method for distributed optimization \Cref{alg:pd_ef14_dist_summary}. The intuitive justification of \Cref{alg:pd_ef14_dist_summary} is similar to that of \algname{Safe-EF} in \Cref{sec:proof_bidirectional}. However, a rigorous convergence analysis of $\operatorname{Gap}(x^t, \lambda^t)$ for \Cref{alg:pd_ef14_dist_summary} remains open since even the analysis of \eqref{eq:Primal-dual} (special case of \Cref{alg:pd_ef14_dist_summary} in case of no compression) typically requires the projection step in $x^t$ variable. This is problematic for EF analysis because the virtual iterates $\hat x^t$ defined in \eqref{eq:def_x_hat_t} do not have such simple form anymore. 

\begin{algorithm*}[ht]
\caption{Primal-dual Error Feedback for Constrained Optimization with Bidirectional Compression}
\label{alg:pd_ef14_dist_summary}
\begin{algorithmic}[1]
\State \textbf{Input:} initial point $x^0, \lambda^0 \in \R^d,$ stepsizes $\{\gamma_t\}$, $\{\eta_t\}$, compressors $\cC$ and $\cC_{s}$ at the workers and the server
    \For{$t=0, \ldots, T-1$}
        \For{$i = 1, \ldots, n$}
            \State Compute $h^t_i = f_i^\prime(x^t) + \lambda_t g_i^\prime(x^t) $
            \State Compute $v^{t}_i = \cC(e^{t}_i + h_i^t)$ and send to server
            \State Compute $e^{t+1}_i = e^t_i + h^t_i - v^{t}_i$            
        \EndFor
        \State Compute $v^t = \frac{1}{n}\sum_{i=1}^n v_i^t$ 
        \State Compute $w^{t+1} = w^t - \gamma_t v^t$ 
        \State Compute $x^{t+1} = x^t + \cC_s(w^{t+1} - x^t)$ and send $\cC_s(w^{t+1} - x^t)$ to workers
        \For{$i = 1, \ldots, n$}
        \State Compute $x^{t+1} = x^t + \cC_s(w^{t+1} - x^t)$ 
        \State Compute $g_i(x^{t+1})$ and send to server 
        \Comment{Cheap communication of one float}
        \EndFor
        \State Compute $u^{t+1} = \frac{1}{n}\sum_{i=1}^n g_i(x^{t+1})$
        \State Compute $\lambda^{t+1} = \Pi_{\lambda \geq 0} (\lambda^t + \eta_t u^{t+1})$ 
    \EndFor
\end{algorithmic}	
\end{algorithm*}

\paragraph{Experiments.}
Although a rigorous convergence analysis for \algname{Primal-dual} remains open, we investigate its practical performance through empirical evaluation. We follow the same experimental setup as before and compare \algname{Safe-EF} with \Cref{alg:pd_ef14_dist_summary}, analyzing its sensitivity to different dual initializations $\lambda^0$. We present our results in \Cref{fig:primal-dual}, where we compare the objective and constraint after 500M samples, the number of samples required for \algname{Safe-EF} to converge. As shown, different values of $\lambda^0$ have significant impact on the performance of \algname{Primal-dual}. In contrast, \algname{Safe-EF} that does not require additional tuning of hyperparameters and only slightly underperforms \algname{Primal-dual} when $\lambda^0 = 2$.

\begin{figure}[ht]
    \centering
    \includegraphics{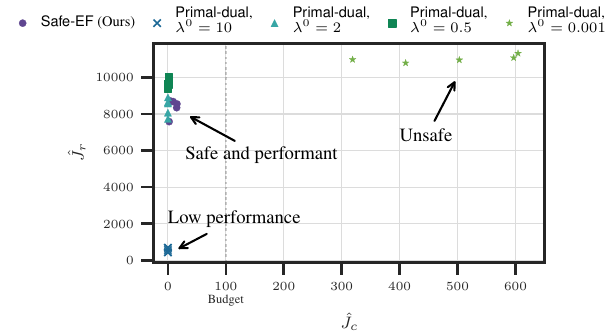}
    \caption{Objective and constraint values of \algname{Safe-EF} compared to \algname{Primal-dual} with different initialization values of $\lambda^0$. Each point represents a distinct experiment trial with a different random seed. \algname{Safe-EF} ensures safety and achieves solid performance without requiring additional hyperparameter tuning.}
    \label{fig:primal-dual}
\end{figure}

\section{Additional Experiments}
\label{sec:additional_experiments}
\paragraph{Cartpole.}
We repeat our safety experiment using the Cartpole environment from Brax \citep{brax2021github}, with the exception of using $\nicefrac{K}{d} = 0.01$ instead of $\nicefrac{K}{d} = 0.1$. As before, we compare \algname{Safe-EF} with \algname{EF14}~\cite{seide20141}, \algname{EF21}~\citep{richtarik2021ef21} and \algname{Parallel-CRPO}. The results are presented in \Cref{fig:cartpole}. Similarly to the experiments with the Humanoid, \algname{Safe-EF} rapidly satisfies the constraints with only a slight performance reduction in the objective. \algname{EF14} outperforms \algname{Safe-EF}, however violates the constraints. Further, \algname{EF21} diverges during the last part of training. Finally, as \algname{Parallel-CRPO} does not employ compression at all, it requires significantly more gigabytes per worker to converge.
\begin{figure}[h!]
    \centering
    \includegraphics{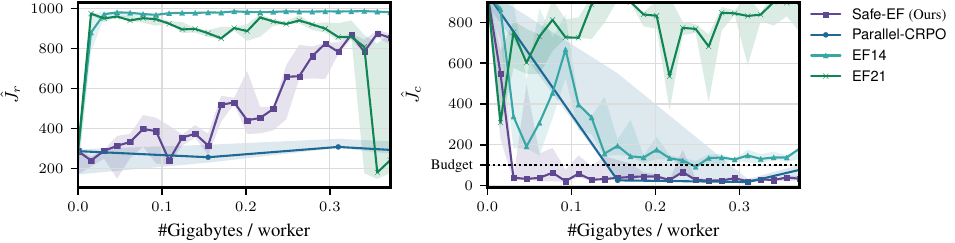}
    \caption{Objective and constraint in the Cartpole environment. \algname{Safe-EF} satisfies the constraints while maintaining competitive performance.}
    \label{fig:cartpole}
\end{figure}

\paragraph{Price of compression.}
We follow the same evaluation protocol used in \Cref{fig:k-to-budget} however now, instead of measuring how many gigabytes are required to reach a certain benchmark performance, we use a fixed sample ``budget'', and evaluate the performance achieved by each algorithm under this budget. Accordingly, we record $\hat{J}_r$ after 100M and 500M samples, corresponding to 4883 and 24415 iterations respectively, for different values of $\nicefrac{K}{d}$. We present the results in \Cref{fig:ablate-k}. As shown, both Top-$K$ and Rand-$K$ perform well under diminishing values of $\nicefrac{K}{d}$ after 500M samples. For a training budget of 100M samples, Top-$K$ significantly surpasses \algname{CGD} and Rand-$K$.
\begin{figure}[h!]
    \centering
    \includegraphics{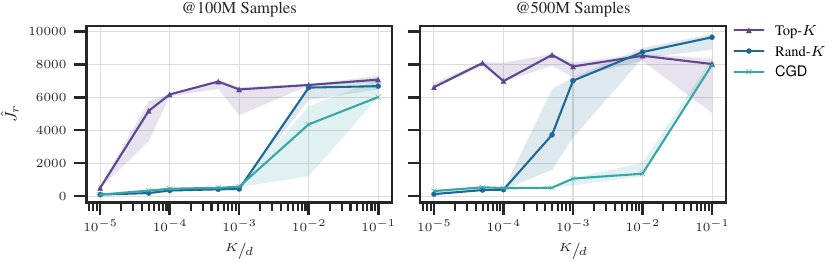}
    \caption{Performance for different compression ratios. \algname{Safe-EF} with Top-$K$ and Rand-$K$ strategies outperform the \algname{CGD} baseline. For a training budget of 500M samples, Top-$K$ reaches adequate performance, even under severe compression.}
    \label{fig:ablate-k}
\end{figure}

\paragraph{Non-distributed baseline.}
We show that \algname{Safe-EF} is able to find a non-trivial policy, by comparing it against \algname{Parallel-CRPO} and its non-distributed variant, \algname{CRPO}, where the latter is trained and evaluated only on the nominal model $p$. We present our results in \Cref{fig:simple}.
\begin{figure}[h!]
    \centering
    \includegraphics{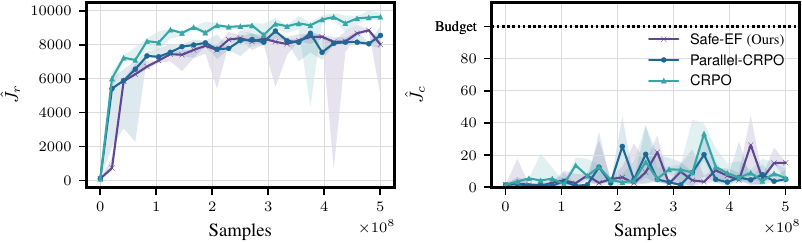}
    \caption{\algname{Safe-EF} performance is only slightly degraded compared to a non-distributed baseline in terms of sample efficiency. However, in the distributed setup, as we observed in \Cref{fig:safety}, \algname{Safe-EF} significantly outperforms \algname{Parallel-CRPO} in communication efficiency.}
    \label{fig:simple}
\end{figure}

\paragraph{Learning curves.}
In \Cref{fig:all-runs} we provide the full learning curves of the experiment trials used for \Cref{fig:ablate-k,fig:k-to-budget}.

\begin{figure}[h!]
    \centering
    \includegraphics{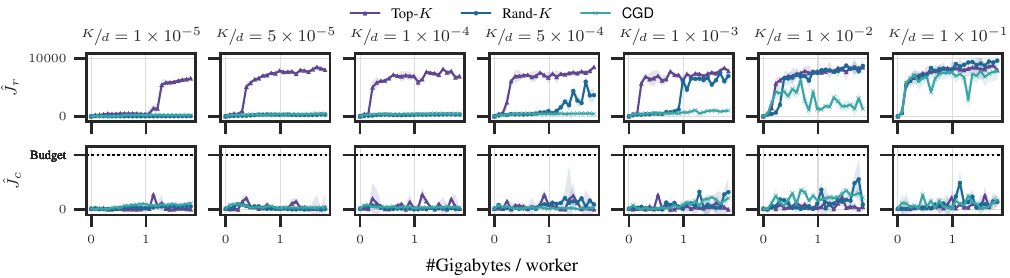}
    \caption{Objective and constraint learning curves for different compression ratio. \algname{Safe-EF} with Top-$K$ outperforms Rand-$K$ and \algname{CGD}, even under small compression values.}
    \label{fig:all-runs}
\end{figure}

\paragraph{Neyman-Pearson classification.}
We test \algname{Safe-EF} on Neyman-Pearson (NP) classification problem following the work of \citet{he2024distributed}. This statistical formulation aims to minimize type II error while enforcing an upper bound on type I error, making it particularly relevant for applications with asymmetric misclassification costs, such as medical diagnosis. The NP classification is
 \begin{equation*}
    \min_xf(x)=\frac{1}{n_0}\sum_{i=1}^{n_0}\phi(h_{x}, z_{i,0}),\text{ s.t. }\quad g(x)=\frac{1}{n_1}\sum_{i=1}^{n_1}\phi(h_{x}, z_{i,1})\le c,    
\end{equation*}
where $f_x$ is a classifier parameterized by $x$ (3 layers MLP with 64 units in each layer and ReLu activation); $\phi$ is a cross-entropy loss; $\{z_{i,0}\}_{i=1}^{n_0}$ and $\{z_{i,1}\}_{i=1}^{n_1}$ are training samples from class 0 and class 1, respectively. The constraint ensures that the classification error for class 1 does not exceed a predefined threshold $c$. Our results are presented in \Cref{fig:np-classification}.

\begin{figure}[h!]
    \centering
    \includegraphics{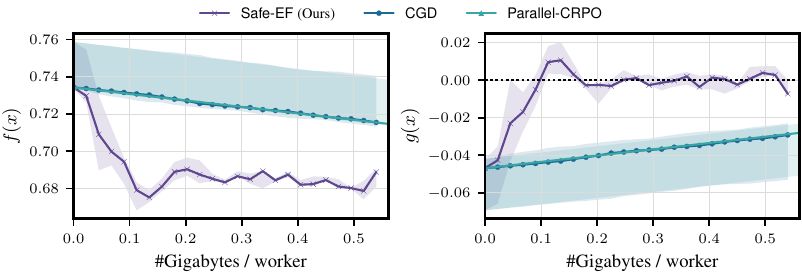}
    \caption{Objective and constraint for Neyman-Pearson classification. Compared to the \algname{CGD} and \algname{Parallel-CRPO} baselines, \algname{Safe-EF} both satisfies the constraint and minimizes the loss while requiring significantly less communication overhead.}
    \label{fig:np-classification}
\end{figure}
This experiment further supports the argument that \algname{Safe-EF} is useful for federated learning by showing its effectiveness in a well-established classification framework.

\section{Additional Details on the Experimental Setup}
\label{sec:additional_experiment_details}

\paragraph{Data generation.} We generate matrices $\{\mA_i\}_{i=1}^n$ and shifts $\{b_i\}_{i=1}^n$ according to \Cref{alg:data_generation}. Here parameter $s$ controls how different the matrices $\mA_i$ are from each other. In our experiments, we vary $s\in\{0.1, 1.0, 10.0\}$ and set $\zeta = 10^{-3}.$

\paragraph{Hyper-parameter tuning for \Cref{sec:synthetic}.}

For all algorithms mentioned in \Cref{sec:synthetic}, we tune the stepsize $\gamma \in \{0.01, 0.003, 0.001, 0.0003, 0.0001, 0.00003\}$. For \algname{EF21M} we tune the momentum parameter $\beta\in\{0.0001, 0.001, 0.01, 0.1, 0.5, 0.9\}$, and for \algname{EControl}, we tune $\eta \in \{0.0001, 0.001, 0.01, 0.1, 0.5, 0.9\}.$ The best hyper-parameters are reported in \Cref{tab:hyperparameters}.

\begin{table*}[t]
    \centering
    \caption{The algorithms' hyperparameters used in the training from \Cref{sec:synthetic}. Here $\gamma$ denotes the stepsize for all algorithms, $\beta$ is the momentum parameter for \algname{EF21M}, and $\eta$ is the control stepsize for \algname{EControl}.}
    \label{tab:hyperparameters}
    \begin{tabular}{c|c|c|c|c|c}
        &
         \algname{Safe-EF} &
         \algname{CGD} &
         \algname{EF21} &
         \algname{EF21M} & 
         \algname{EControl}\\
         \toprule
         $s=0.1$ &
         $\gamma=0.01$ &
         $\gamma = 0.01$ &
         $\gamma = 0.003$ &
         $\gamma = 0.01, \beta = 0.001$ &
         $\gamma = 0.003, \eta = 0.01$ \\ \midrule

         $s=1.0$ &
         $\gamma=0.01$ &
         $\gamma = 0.01$ &
         $\gamma = 0.003$ &
         $\gamma = 0.01, \beta = 0.001$ &
         $\gamma = 0.003, \eta = 0.01$ \\ \midrule

         $s=1.0$ &
         $\gamma=0.003$ &
         $\gamma = 0.01$ &
         $\gamma = 0.001$ &
         $\gamma = 0.001, \beta = 0.1$ &
         $\gamma = 0.001, \eta = 0.1$ \\

         \bottomrule
         
    \end{tabular}
\end{table*}

\begin{figure}[!t]
    \begin{algorithm}[H]
       \caption{Synthetic data generation mechanism}
       \label{alg:data_generation}
        \begin{algorithmic}[1]
       \State {\bfseries Parameters:}  number of nodes $n,$ dimension $d,$ noise scalers $\zeta$ and $s$ 
       \State Generate $\mA \sim \cN(0, \mI) \in \R^{d\times d}$ and $x_0 \sim\cN(0, \mI)\in \R^d$
       \State Normalize $\mA \leftarrow \mA/\|\mA\|_{\rm F}$
       \For{$i=1, \dots, n$}
       \State Generate $\mA_i \sim \cN(0, \mI) \in \R^{d\times d}$
       \State Normalize $\mA_i \leftarrow \mA_i / \|\mA_i\|_{\rm F}$
       \State Shift $\mA_i \leftarrow \mA + s\mA_i$
       \State Sample independently $\xi \sim \cN(0, 1) \in \R^{d}$
       \State Compute $b_i = \mA_i x_0 + \zeta \xi$
       \EndFor
       \State {\bfseries Return} $\{\mA_i, b_i\}_{i=1}^n$
    \end{algorithmic}
    \end{algorithm}
\end{figure}

\paragraph{Humanoid.}
We use the Humanoid environment implementation from Brax \citep{brax2021github} and extend it with an indicator cost function for whenever any one of the joint angles goes outside of a predefined limits. We perturb the dynamics $p_i$ of each worker by sampling the ground's friction coefficient and the gear parameter of the joints' motors. Sampling is done with a uniform distribution, with a symmetric interval centered around the nominal value given in Brax.

\paragraph{Cartpole.}
As with the Humanoid, we use the environment implementation provided by Brax. The cost function is an indicator for whenever the `cart' exceeds a predefined distance from the center position. The dynamics are perturbed in the same fashion as the Humanoid, using a uniform distribution centered around nominal values. However in this experiment, we perturb the mass of the `pole' and the gear parameter of the cart's motor.

\paragraph{Hyper-parameters tuning for \Cref{sec:experiment-rl}.}
As mentioned before, our implementation of \algname{Safe-EF} builds on PPO \citep{schulman2017proximal}. We follow the standard follow the standard implementation provided in Brax, including their default hyper-parameters used for the Humanoid environment. Notably, in all of our experiments, we keep the default value $\gamma = 0.0003$, with Adam as optimizer \citep{kingma2014adam}. In practice, we found the default set of parameters to work well with \algname{Safe-EF}. The only deviation from these parameters is the entropy regularization coefficient, which we set to $0.01$ from $0.001$.

For more specific details, please use our open-source implementation \url{https://github.com/yardenas/safe-ef}.

\end{document}